\theoremstyle{plain}
\newtheorem{theorem}{Theorem}[section]
\newtheorem{lemma}[theorem]{Lemma}
\newtheorem{corollary}[theorem]{Corollary}
\theoremstyle{definition}
\theoremstyle{remark}
\newtheorem{remark}[theorem]{Remark}
\setlist[itemize]{noitemsep, left=0pt}
\definecolor{codeblue}{rgb}{0.1, 0.1, 0.6}
\definecolor{codegreen}{rgb}{0.0, 0.6, 0.0}
\definecolor{codegray}{rgb}{0.5, 0.5, 0.5}
\definecolor{codepurple}{rgb}{0.58, 0.0, 0.82}
\tiny\color{codegray},
\icmltitlerunning{Subset-Norm and Subspace-Momentum}
\begin{document}

\twocolumn[
\icmltitle{Lean and Mean Adaptive Optimization via Subset-Norm and Subspace-Momentum with Convergence Guarantees}
% title ideas: 
% - 

% It is OKAY to include author information, even for blind
% submissions: the style file will automatically remove it for you
% unless you've provided the [accepted] option to the icml2025
% package.

% List of affiliations: The first argument should be a (short)
% identifier you will use later to specify author affiliations
% Academic affiliations should list Department, University, City, Region, Country
% Industry affiliations should list Company, City, Region, Country

% You can specify symbols, otherwise they are numbered in order.
% Ideally, you should not use this facility. Affiliations will be numbered
% in order of appearance and this is the preferred way.
\icmlsetsymbol{note}{*}
\begin{icmlauthorlist}
\icmlauthor{Thien Hang Nguyen}{neu,ctx,note}
\icmlauthor{Huy Le Nguyen}{neu}
\end{icmlauthorlist}

\icmlaffiliation{neu}{Khoury College of Computer Sciences, Northeastern University, Boston.}
\icmlaffiliation{ctx}{Contextual AI}

\icmlcorrespondingauthor{Thien Nguyen}{nguyen.thien@northeastern.edu; thien.nguyen@contextual.ai}
\icmlcorrespondingauthor{Huy Nguyen}{hu.nguyen@northeastern.edu}

% You may provide any keywords that you
% find helpful for describing your paper; these are used to populate
% the "keywords" metadata in the PDF but will not be shown in the document
\icmlkeywords{Machine Learning, ICML}

\vskip 0.3in
]

% this must go after the closing bracket ] following \twocolumn[ ...

% This command actually creates the footnote in the first column
% listing the affiliations and the copyright notice.
% The command takes one argument, which is text to display at the start of the footnote.
% The \icmlEqualContribution command is standard text for equal contribution.
% Remove it (just {}) if you do not need this facility.

%\printAffiliationsAndNotice{}  % leave blank if no need to mention equal contribution
\printAffiliationsAndNotice{\textsuperscript{*}Work done at Northeastern University. } % otherwise use the standard text.
\global\long\def\E{\mathbb{\mathbb{E}}}%
\global\long\def\F{\mathcal{F}}%
\global\long\def\R{\mathbb{R}}%
\global\long\def\tn{\widetilde{\nabla}f}%
\global\long\def\n{\nabla f}%
\global\long\def\indicator{\mathbf{1}}%
\global\long\def\mf{f(x^{*})}%
\global\long\def\breg{\mathbf{D}_{\psi}}%
\global\long\def\dom{\mathcal{X}}%
\global\long\def\norm#1{\left\lVert #1\right\rVert }%
\global\long\def\nf{\nabla f}%
\global\long\def\eps{\epsilon}%
\global\long\def\hn{\widehat{\nabla}}%
\global\long\def\red#1{\textcolor{red}{#1}}%
\global\long\def\green#1{\textcolor[rgb]{0.0,0.5,0.3}{#1}}%
\global\long\def\blue#1{\textcolor{blue}{#1}}%
\providecommand{\tabularnewline}{\\}

\begin{abstract}
We introduce two complementary techniques for efficient optimization
that reduce memory requirements while accelerating training of
large-scale neural networks. The first technique, \emph{Subset-Norm} step size, generalizes AdaGrad-Norm and AdaGrad(-Coordinate) through step-size sharing. Subset-Norm (SN) reduces
AdaGrad's memory footprint from $O(d)$ to $O(\sqrt{d})$, where $d$ is the model size. For non-convex
smooth objectives under coordinate-wise sub-gaussian noise, we show
a noise-adapted high-probability convergence guarantee with improved
dimensional dependence of SN over existing methods. Our second technique,
\emph{Subspace-Momentum}, reduces the momentum state's memory footprint by
restricting momentum to a low-dimensional subspace while performing SGD in the
orthogonal complement. We prove a high-probability convergence result for Subspace-Momentum under
standard assumptions. Empirical evaluation on pre-training and fine-tuning LLMs  demonstrates the effectiveness of our methods. For instance, combining Subset-Norm with Subspace-Momentum achieves Adam's validation perplexity for LLaMA 1B in approximately \textit{half} the training
tokens (6.8B vs 13.1B) while reducing Adam's optimizer-states memory footprint by more than 80\% with minimal additional hyperparameter tuning. 
% Finally, we provide extensive ablation studies to demonstrate the robustness of our methods.
\end{abstract}

\section{Introduction}

\begin{figure}[t]
\centering
% \begin{minipage}[t]{0.5\columnwidth}%
% \begin{center}
% \includegraphics[width=1\textwidth]{}
% \par\end{center}%
% \end{minipage}\hfill{}%
\includegraphics[width=.95\columnwidth]{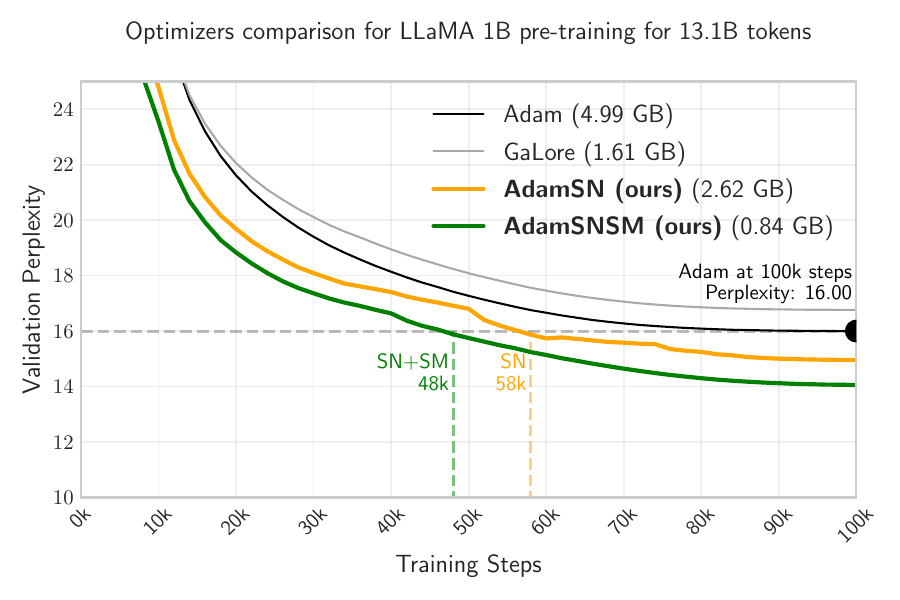}
\caption{Validation perplexity for Adam, GaLore \cite{zhao2024galore},
\textbf{AdamSN}, and \textbf{AdamSNSM} (\textbf{ours}) during LLaMA 1B model training for 13.1B
tokens (100K steps). Optimizer memory footprint is shown in parentheses. Adam achieves a perplexity
of 16.00 at 100,000 steps, while AdamSN and AdamSNSM exhibit lower
perplexity earlier in training at 58,000 and 48,000 steps.
\label{fig:speed-comparison-learning-curve}}
\end{figure}

Adaptive optimizers like Adam \cite{kingma2014adam}, AdaGrad \cite{duchi2011adaptive},
and RMSProp \cite{tieleman2012lecture} are de facto methods for training
large-scale deep neural networks. However, the optimizer states for
the momentum and second moment (or adaptive step size) terms are memory
intensive, consuming as much as twice the size of the model. As deep
neural networks continue to grow in the era of large-language models
(LLMs), concerns that were previously overlooked, such as the memory
consumption of optimizer states, have become an active area of research.
Indeed, numerous methods have recently emerged to reduce the memory
footprint of optimizer states (e.g. Adam's momentum and second moment
terms) with approaches ranging from quantization \cite{li2024memory,dettmers20218,dettmers2024qlora},
low-rank decomposition \cite{hu2021lora,lialin2023relora,zhao2024galore,shazeer2018adafactor},
sketching-based dimensionality reduction \cite{muhamed2024grass,hao2024flora},
etc. Existing methods either lacks theoretical guarantees, requires
strong assumptions, trades too much performance, or requires expensive
additional tuning for the memory saving, especially in pretraining
tasks. 

% In this paper, we aim to reduce memory consumption while maintaining strong performance and theoretical guarantees. To this end, we propose principled
% compression schemes for each term: Subset-Norm (Section \ref{sec:AdaGrad-Subset-Norm})
% for the adaptive step size term and Subspace-Momentum (Section \ref{sec:Subspace-Momentum})
% for the momentum term. 
% These memory-efficient components are modular
% within the framework of Algorithm \ref{alg:Generic-Adaptive-Optimizer}.

\paragraph{Our contributions.}

We aim to reduce memory consumption while maintaining strong performance and theoretical guarantees. To this end, we introduce two memory-efficient optimization algorithms for large-scale
DNN training: \textbf{Subset-Norm} (\textbf{SN}) for adaptive
step-size memory reduction (Section \ref{sec:AdaGrad-Subset-Norm}) and \textbf{Subspace-Momentum} (\textbf{SM})
for momentum compression (Section \ref{sec:Subspace-Momentum}). While existing approaches trade performance
for memory savings, our theoretically-grounded methods achieve both
a reduced memory footprint and faster training:
\begin{itemize}[topsep=0pt]
\item \textbf{Subset-Norm (SN)}: A memory-efficient adaptive step-size algorithm
with high-probability convergence guarantees for non-convex objectives
under coordinate-wise sub-gaussian noise. By unifying AdaGrad-Coordinate's
and AdaGrad-Norm's analysis, we show that the SN adaptive step size
(Algorithm \ref{alg:adagrad-subsetnorm}) achieves improved dimensional
dependence, while reducing the memory footprint from $O(d)$ to roughly $O(\sqrt{d})$.
On LLaMA models' pretraining tasks, SN step sizes achieves better perplexity than
coordinate-wise step size across a range of optimizers and model sizes, while using significantly less
memory and introducing minimal additional hyperparameters.\footnote{Although the subset size can be tuned (Section \ref{subsec:Subset-size-ablation}),
we provide a heuristic in Section \ref{subsec:SN-Implementation} that
works effectively across model sizes, eliminating the need for additional
tuning.}
\item \textbf{Subspace-Momentum (SM)}: A momentum compression method that
applies momentum in a chosen subspace and SGD in the orthogonal complement
with high-probability convergence guarantees under sub-gaussian
noise for non-convex smooth objectives. When combined with SN, for some selected dimension$k$ less than $d$,\footnote{Typically, $k$ is chosen to be around $d/4$.\label{fn:rank-galore}}  our
method (SNSM) reduces the memory footprint of Adam and AdaGrad+momentum from $2d$
to $k+\sqrt{d}$ (see Table \ref{tab:Optimizer-states-memory})
while delivers improved training speed and performance.
\end{itemize}
Empirical evaluations on LLaMA models from 60M to 1B parameters
demonstrate that our algorithms scale effectively and attain better
performances than existing optimizers. 
Our proposed methods are simple
to implement, require minimal additional hyperparameter tuning, and are compatible with modern distributed training frameworks like FSDP \cite{zhao2023pytorch,rajbhandari2020zero}.
We provide an implementation in PyTorch at \url{https://github.com/timmytonga/sn-sm}.

\section{Preliminaries}

 \subsection{Common Optimizers and Memory Footprint}
\begin{algorithm}[tb]
   \caption{Generic Template for Stochastic Adaptive Optimizers with Momentum}
   \label{alg:Generic-Adaptive-Optimizer}
\begin{algorithmic}
   \STATE {\bfseries Input:} Initial point $x_{1} \in \mathbb{R}^{d}$, base step size $\eta > 0$, constant $\epsilon > 0$.
   \FOR{$t=1$ {\bfseries to} $T$}
      \STATE Obtain stochastic gradient $\hn f(x_t)$
      \STATE $m_t = \text{update\_momentum}(\hn f(x_t); m_{t-1})$ 
      \STATE $v_t^2 = \text{update\_adaptive\_stepsize}(\hn f(x_t); v_{t-1}^2)$
      \STATE $x_{t+1} = x_t - \eta \cdot \frac{m_t}{v_t + \epsilon}$ \hfill \COMMENT{Update step}
   \ENDFOR
\end{algorithmic}
\end{algorithm}

\begin{table}[ht]
\centering
\scriptsize 
\caption{Update rules for common optimizers in the framework of Algorithm \ref{alg:Generic-Adaptive-Optimizer}.
We omit bias correction terms and numerical stabilizer $\epsilon$
for simplicity. Memory for optimizer state is shown for model of size (and memory footprint)
$d$. \label{tab:Update-rules-framework}}
\begin{tabular}{lll}
\toprule
\textbf{Optimizer}  & \textbf{Memory} & \textbf{Update Rules}  \tabularnewline
\midrule 
\multirow{2}{*}{Adam} & \multirow{2}{*}{$2d$} & $m_t =  \beta_{1}m_{t-1}+(1-\beta_{1})\hn f(x_{t})$ \\
 & & $v_t^2 =  \beta_{2}v_{t-1}^{2}+(1-\beta_{2})\cdot\hn f(x_{t})^{2}$ \\ \midrule
\multirow{2}{*}{SGDm} & \multirow{2}{*}{$d$} & $m_t =  \beta m_{t-1}+(1-\beta)\hn f(x_{t})$ \\
 & & $v_t^2 =  \text{ID}$ \\ \midrule
\multirow{2}{*}{AdaGrad-Coord} & \multirow{2}{*}{$d$} & $m_t =  \hn f(x_{t})$ \\
 & & $v_t^2 =  v_{t-1}^{2}+\hn f(x_{t})^{2}$ \\ \midrule
\multirow{2}{*}{RMSProp} & \multirow{2}{*}{$d$} & $m_t =  \hn f(x_{t})$ \\
 & & $v_t^2 =  {\beta_{1}v_{t-1}^{2}+(1-\beta_{1})\cdot\hn f(x_{t})^{2}}$ \\ \midrule
 \multirow{2}{*}{AdaGrad-Norm} & \multirow{2}{*}{$1$} & $m_t =  \hn f(x_{t})$ \\
 & & $v_t^2 =  v_{t-1}^{2}+\norm{\hn f(x_{t})}^{2}$ \\ \midrule
\multirow{2}{*}{SGD} & \multirow{2}{*}{$1$} & $m_t =  \hn f(x_{t})$ \\
 & & $v_t^2 =  \text{ID}$ \\
\bottomrule 
\end{tabular}
\end{table}

Consider the generic template in Algorithm \ref{alg:Generic-Adaptive-Optimizer},
which captures a broad range of first-order optimizers that leverage
either momentum or adaptive step sizes. Many standard optimizers can
be represented within this framework by varying choices of momentum and adaptive step-size terms, as shown in Table \ref{tab:Update-rules-framework}. Generally,
optimizers with higher memory requirements, such as Adam, tend to
outperform more memory-efficient alternatives like SGD and RMSProp. We aim to design principled algorithms that achieve the best of both worlds: strong performance and memory-efficient.

\subsection{Assumptions and Notations}
For our theoretical analysis, we consider the unconstrained non-convex stochastic optimization problem
$\min_{x\in\R^{d}}f(x)$ where $f:\R^{d}\to\R$ is the objective function.
We assume access to an history independent, non-biased gradient estimator
$\hn f(x)$ for any $x\in\dom$, that is $\E\left[\hn f(x)\mid x\right]=\n(x)$.
Furthermore, we assume that $f$ is an $L$-smooth
% \footnote{This is a standard assumption for analyzing the convergence to stationary points for non-convex objectives.} 
function: $$\left\Vert \n(x)-\n(y)\right\Vert \le L\left\Vert x-y\right\Vert, 
\text{ for all } x,y\in\R^{d}. $$
Smoothness implies the following quadratic
upperbound that we will extensively utilize: for all $x,y\in\R^{d}$ we have $f(y)-f(x)\le\left\langle \nabla f(x),y-x\right\rangle +\frac{L}{2}\left\Vert y-x\right\Vert ^{2}.$
% Before discussing the assumption on the stochastic gradient noise,
% let us first define some notations.

\paragraph{Notations.}

Let $v_{i}$ denote the $i$-th coordinate of a vector $v\in\R^{d}$.
If a vector $x_{t}$ is already indexed as part of a sequence of vectors
(where $x_{t}$ denotes the $t$-th update) then we use $x_{t,i}$
to denote $x_{t}$'s $i$-th coordinate and $x_{t,\Psi}\in\R^{k}$
to denote the indexing with respect to an ordered subset $\Psi\subseteq[d]$
of size $k$ where $\left(x_{t,\Psi}\right)_{k}=x_{t,\Psi^{(k)}}$
with $\Psi^{(k)}$ denoting the $k$-th element of $\Psi$. For gradients,
we let $\nabla_{i}f(x):=\frac{\partial f}{\partial x_{i}}$ denote
the partial derivative with respect to the $i$-th coordinate. Similarly,
for stochastic gradients $\hn f(x)$, we let $\hn_{i}f(x)$ denotes
its $i$-th coordinate. If $a,b\in\R^{d}$, then $ab$ and $a/b$
denotes coordinate-wise multiplication and division, respectively
i.e. $(ab)_{i}=a_{i}b_{i}$ and $(a/b)_{i}=a_{i}/b_{i}$.

\paragraph{Coordinate-wise sub-gaussian noise assumption.}
A random variable $X$ is
$\sigma$-sub-Gaussian \cite{vershynin2018high} if
\[
\E\left[\exp\left(\lambda^{2}X^{2}\right)\right]\leq\exp\left(\lambda^{2}\sigma^{2}\right)\text{ for all }\lambda\text{ such that }\left|\lambda\right|\leq\frac{1}{\sigma}.
\]
If we denote the stochastic gradient noise as $\xi_{t}:=\widehat{\nabla}f(x_{t})-\nabla f(x_{t})$
and $\xi_{t,i}$ as the $i$-th coordinate of $\xi_{t}$, then we
assume the noise is per-coordinate subgaussian i.e. there exists $\sigma_{i}>0$
for $i\in[d]$ such that $\xi_{t}$ satisfies
\begin{equation}
\E\left[\exp\left(\lambda^{2}\xi_{t,i}^{2}\right)\right]\leq\exp\left(\lambda^{2}\sigma_{i}^{2}\right),\forall\left|\lambda\right|\leq\frac{1}{\sigma_{i}},\forall i\in\left[d\right].\label{eq:coordinate-subgaussian-noise}
\end{equation}
Note that $\left\Vert \xi_{t}\right\Vert $ being $\sigma$-subgaussian
implies that each $\xi_{t,i}$ is also $\sigma$-subgaussian, so coordinate-wise
sub-gaussian is more general than standard scalar sub-gaussian noise
assumption. Furthermore, when $\norm{\cdot}$ is used without explicitly
specifying the norm, we assume it is the $\ell_{2}$ norm $\norm{\cdot}_{2}$.
We also use the 0-indexing convention i.e. $[n]:=\left\{ 0,1,\dots,n-1\right\} $
for integer $n\in\mathbb{N}$. 

\section{Subset-Norm (SN) Adaptive Step Size\label{sec:AdaGrad-Subset-Norm}}

\begin{figure}[ht]
    \centering
    \includegraphics[width=0.9\columnwidth]{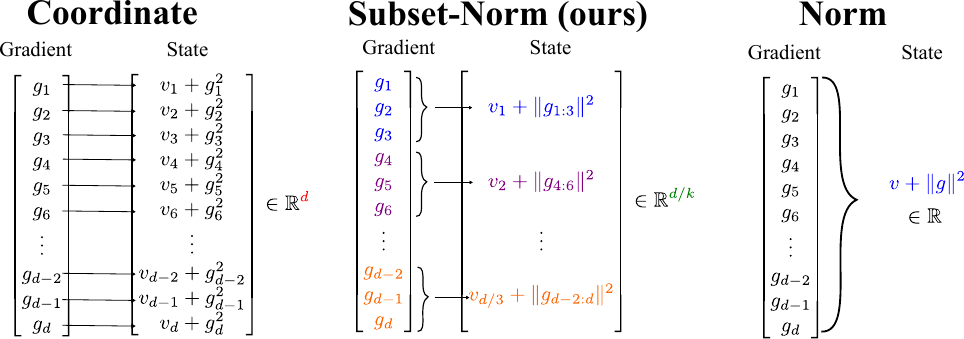}
    \caption{AdaGrad variants: Coordinate, Subset-Norm, and Norm. Subset-Norm generalizes Coordinate ($k=1$) and Norm ($k=d$).}
    \label{fig:adagrad-variants}
\end{figure}
\begin{algorithm}[ht]
\small
   \caption{SGD with Subset-Norm Adaptive Step Size}
   \label{alg:adagrad-subsetnorm}
\begin{algorithmic}
   \STATE {\bfseries Input:} Initial point $x_1 \in \mathbb{R}^{d}$, base step size $\eta > 0$, partition function $\psi:[d] \to [c]$ that splits the coordinates into $c$ subsets $\Psi_i = \psi^{-1}(i) \subset [d]$, where $\coprod_{i=1}^{c} \Psi_i = [d]$, and $b_{0,i} > 0$ for $i \in [c]$
   \FOR{$t=1$ {\bfseries to} $T$}
      \STATE Obtain stochastic gradient $\hn f(x_t)$
      \STATE $b_{t,i}^2 = b_{t-1,i}^2 + \norm{\hn_{\Psi_i} f(x_t)}^2, \quad \text{for } i \in [c]$ \hfill \COMMENT{Update accumulated gradient norms}
      \STATE $x_{t+1,k} = x_{t,k} - \frac{\eta}{b_{t,\psi(k)}} \hn_k f(x_t), \quad \text{for } k \in [d]$ 
      % \hfill \COMMENT{Update coordinates}
   \ENDFOR
\end{algorithmic}
\end{algorithm}

We compress the second moment adaptive step size by partitioning parameters
into subsets for which they share the same adaptive step size as AdaGrad-Norm
\cite{McMahanS10,ward2019adagrad}. Formally, we need to specify a
\textit{partition function} $\psi:[d]\twoheadrightarrow[c]$ that splits
the $d$ coordinates into $c$ non-empty subsets $\Psi_{i}=\psi^{-1}(i)\subset[d]$,
where $\coprod_{i=1}^{c}\Psi_{i}=[d]$. For example, one can pick
$\psi(j)=(j/c)\mod k$ to get consecutive equipartitioned subsets $\Psi_{i}=\left\{ ik,ik+1,\dots,ik+(k-1)\right\} $
for some subset-size $k\in\mathbb{N}$ so that $kc=d$.\footnote{We use this strategy in all our implementations for simplicity.}

Given a stochastic gradient $\hn f(x_{t})\in\R^{d}$ at time $t$
for parameter $x_{t}$, we denote $\hn_{\Psi_{i}}f(x_{t})\in\R^{k}$
to be the subset of the coordinates of the stochastic gradient with
respect to the subset $\Psi_{i}$. For example, given $\psi(j)=(j/c)\mod k$
as above, we have $\left(\hn_{\Psi_{i}}f(x_{t})\right)_{j}=\hn_{ik+j-1}f(x_{t})$.
Similarly, we can define $\nabla_{\Psi_{i}}f(x_{t})\in\R^{|\Psi_{i}|}$
to be $\frac{\partial f(x_{t})}{\partial x_{\Psi_{i}}}$. 

Now, we define
the \emph{subset-norm (SN) adaptive step size} $b_{t,i}$ for subset $\Psi_{i}$
and the update rule for $x_{t+1}$ (see Figure \ref{fig:adagrad-variants}): 
\begin{align}
b_{t,i}^{2} & =b_{t-1,i}^{2}+\norm{\hn_{\Psi_{i}}f(x_{t})}^{2},
% =b_{0}^{2}+\sum_{j=1}^{t}\norm{\hn_{\Psi_{i}}f(x_{t})}^{2}
\ \text{for }i\in [c] \\
x_{t+1,j} & =x_{t,j}-\frac{\eta}{b_{t,\psi(j)}}\hn_{j}f(x_{t}),\ \text{for }j \in [d].\label{eq:adagrad-subset-norm}
\end{align}
Note that choosing $c=d$ and $c=1$ recovers AdaGrad-Coordinate and
AdaGrad-Norm, respectively. We now show a convergence guarantee on arbitrary partitions that will inform us on how to select a good partition strategy. 

\subsection{High-Probability Convergence of Subset-Norm }

We show the following high-probability convergence result for the
subset-norm adaptive step size:
\begin{theorem}
\label{thm:main-thm-simplified}Suppose that $f:\R^{d}\rightarrow\R$
is $L$-smooth and lower bounded by $f_{*}$. Given unbiased stochastic
gradients $\widehat{\nabla}f(x_{t})$ with stochastic gradient noise
$\xi_{t}:=\widehat{\nabla}f(x_{t})-\nabla f(x_{t})$ that is $\sigma_{i}$-per-coordinate
subgaussian for $i\in[d]$. For partitions of the parameters into $c\in \mathbb{N}_+$
disjoint subsets $[d]=\bigcup_{i=0}^{c-1}\Psi_{i}$ with $\Psi_{i}\cap\Psi_{j}=\emptyset,\ \text{for }i\neq j$,
the iterates $x_{t}$ given by Algorithm \ref{alg:adagrad-subsetnorm}
satisfies the following with probability at least $1-O(c\delta)$
(for failure probability $\delta>0$)
\begin{align*}
&\frac{1}{T}\sum_{t=1}^{T}\norm{\nabla f(x_{t})}_{2}^{2} \le G(\delta)\cdot \tilde{O}\biggl(  \frac{\sum_{i=0}^{c-1}\norm{\sigma_{\Psi_{i}}}_2}{\sqrt{T}}+ \frac{N(\delta)}{T}\biggl),\\
&\text{where }  \\
&G(\delta)  :=\tilde{O}\biggl(\sum_{i=0}^{c-1}\norm{\sigma_{\Psi_{i}}}_2^{4} +\norm{\sigma}_\infty (\norm{\sigma}_{2}^{2}+c^{3/2})+cL\biggl), \\ 
% \quad \text{ and }
&N(\delta) := \norm{\sigma}_{2}^{2}+ \sum_{i=0}^{c-1}\norm{\sigma_{\Psi_{i}}}_2+Lc.
\end{align*}
\end{theorem}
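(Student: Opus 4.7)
The plan is to cast the analysis as $c$ parallel instances of the AdaGrad-Norm high-probability argument, one per partition $\Psi_i$, and then assemble the per-subset bounds via a union bound over $c$ failure events (which is what generates the $O(c\delta)$ probability). I would start from $L$-smoothness applied to the update $x_{t+1,j}-x_{t,j}=-(\eta/b_{t,\psi(j)})\widehat{\nabla}_{j}f(x_{t})$; because the step size is constant on each $\Psi_i$, both the linear and quadratic terms in the smoothness inequality split cleanly over $i$. Telescoping in $t$ gives
\[
\eta\sum_{t=1}^{T}\sum_{i=0}^{c-1}\frac{\langle \nabla_{\Psi_i}f(x_t),\widehat{\nabla}_{\Psi_i}f(x_t)\rangle}{b_{t,i}}\le f(x_1)-f_{*}+\frac{L\eta^2}{2}\sum_{t,i}\frac{\|\widehat{\nabla}_{\Psi_i}f(x_t)\|^2}{b_{t,i}^2},
\]
where the rightmost double sum, using $b_{t,i}^2-b_{t-1,i}^2=\|\widehat{\nabla}_{\Psi_i}f(x_t)\|^2$, collapses per subset via the standard estimate $\sum_t (b_{t,i}^2-b_{t-1,i}^2)/b_{t,i}^2\le O(\log(b_{T,i}^2/b_{0,i}^2))$.

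Next, I would split $\widehat{\nabla}_{\Psi_i}f(x_t)=\nabla_{\Psi_i}f(x_t)+\xi_{t,\Psi_i}$ inside the inner product on the left, separating a ``signal'' piece $\|\nabla_{\Psi_i}f(x_t)\|^2/b_{t,i}$ from a ``noise'' piece $\langle \nabla_{\Psi_i}f(x_t),\xi_{t,\Psi_i}\rangle/b_{t,i}$. The main obstacle is exactly here: $b_{t,i}$ is measurable with respect to $\xi_{t,\Psi_i}$, so the summand is not a martingale difference and cannot be concentrated directly. My remedy is a proxy step size $\tilde b_{t,i}^2:=b_{t-1,i}^2+\|\nabla_{\Psi_i}f(x_t)\|^2+\|\sigma_{\Psi_i}\|_2^2$, which is measurable with respect to the $\sigma$-algebra generated before step $t$; I would swap $b_{t,i}$ for $\tilde b_{t,i}$, bound the induced bias via a pointwise ratio inequality (absorbable into the already-telescoped logarithmic term), and then apply a Freedman-type concentration inequality tailored to sub-gaussian martingales. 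This step is what produces the $\|\sigma\|_\infty(\|\sigma\|_2^2+c^{3/2})$ term in $G(\delta)$: the Freedman bracket variance scales with $\|\sigma_{\Psi_i}\|_2^2$ and, after summing the per-subset noise bounds across $i\in[c]$ with Cauchy--Schwarz, a factor of $c^{3/2}$ emerges.

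Finally, I would convert $\sum_t \|\nabla_{\Psi_i}f(x_t)\|^2/b_{t,i}$ into $\sum_t \|\nabla_{\Psi_i}f(x_t)\|^2$ by exploiting monotonicity of $b_{t,i}$: $\sum_t \|\nabla_{\Psi_i}f(x_t)\|^2\le b_{T,i}\cdot\sum_t \|\nabla_{\Psi_i}f(x_t)\|^2/b_{t,i}$, and control $b_{T,i}\le b_{0,i}+\sqrt{\sum_t \|\widehat{\nabla}_{\Psi_i}f(x_t)\|^2}$ by a Hoeffding-type tail bound on sums of squared sub-gaussians, which upper bounds the radicand by $\sum_t \|\nabla_{\Psi_i}f(x_t)\|^2+T\|\sigma_{\Psi_i}\|_2^2$ up to polylogarithmic factors. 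The resulting inequality is a self-bounding quadratic in $\sum_t \|\nabla_{\Psi_i}f(x_t)\|^2$; solving it per subset and then summing over $i$ yields the announced leading order $\tilde O((\sum_i \|\sigma_{\Psi_i}\|_2)/\sqrt T+N(\delta)/T)$, while the $\sum_i \|\sigma_{\Psi_i}\|_2^4+cL$ contributions to $G(\delta)$ come from the constant-order part of the quadratic's solution. A union bound over the $c$ concentration events then absorbs the $O(c\delta)$ failure probability, and the remaining work is purely bookkeeping to repackage constants into the stated $G(\delta)$ and $N(\delta)$.
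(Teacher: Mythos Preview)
Your outline captures the right high-level structure, but two steps diverge from the paper and the first is a genuine gap.

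\textbf{The analysis does not decouple per subset.} You propose to ``solve the quadratic per subset and then sum over $i$,'' but the smoothness descent gives a single global inequality $\sum_{t}\sum_{i}\|\nabla_{\Psi_i}f(x_t)\|^2/b_{t,i}\le \Delta_1/\eta+(\text{noise and log terms})=:B$; there is only one function-gap $\Delta_1$ shared across all subsets, so you cannot extract $c$ independent right-hand sides. Reusing the global $B$ for each $i$ and summing the resulting per-subset quadratics would inflate the $1/T$ term by an extra factor of $c$. The paper stays global throughout: it applies Cauchy--Schwarz pointwise in $t$ to get $\|\nabla f(x_t)\|^2/\|b_t\|_1\le\sum_i\|\nabla_{\Psi_i}f(x_t)\|^2/b_{t,i}$ and then multiplies by $\|b_T\|_1$. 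Moreover, it does \emph{not} solve a self-bounding quadratic at all: it bounds $\|b_T\|_1$ via a second pass through the smoothness inequality (using $\sum_t\|\widehat{\nabla}_{\Psi_i}f(x_t)\|^2/b_{t,i}\ge b_{T,i}-b_{0,i}$ on the left), which yields $\|b_T\|_1\le 4\sqrt{T}\sum_i\|\sigma_{\Psi_i}\|_2+I(\delta)$ directly, with no dependence on the unknown gradient norms.

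\textbf{The proxy swap is where $\sum_i\|\sigma_{\Psi_i}\|_2^4$ actually comes from.} Your claim that the bias from swapping $b_{t,i}\leftrightarrow\tilde b_{t,i}$ is ``absorbable into the already-telescoped logarithmic term'' understates the work. With the paper's proxy $a_{t,i}^2=b_{t-1,i}^2+\|\nabla_{\Psi_i}f(x_t)\|^2$ one has $|a_{t,i}^{-1}-b_{t,i}^{-1}|\le\|\xi_{t,\Psi_i}\|/(a_{t,i}b_{t,i})$, and after Young's inequality this leaves a residual $\|\xi_{t,\Psi_i}\|\cdot\bigl(\|\xi_{t,\Psi_i}\|^2/b_{t,i}^2+\|\nabla_{\Psi_i}\|^2/a_{t,i}^2\bigr)$. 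Controlling this requires two \emph{separate} high-probability events per subset: (i) $M_{T,\Psi_i}:=\max_{t\le T}\|\xi_{t,\Psi_i}\|\lesssim\|\sigma_{\Psi_i}\|_2\sqrt{\log(T/\delta)}$, and (ii) $\sum_t\|\xi_{t,\Psi_i}\|^2/b_{t,i}^2\lesssim\|\sigma_{\Psi_i}\|_2^2/b_{0,i}^2+\log T$. The product of these two bounds, summed over $i$, is exactly what generates the $\sum_i\|\sigma_{\Psi_i}\|_2^4$ contribution to $G(\delta)$---not the constant-order part of a quadratic solve, as you suggest. Your attribution of the $\|\sigma\|_\infty(\|\sigma\|_2^2+c^{3/2})$ term to the martingale concentration step is correct; in the paper it arises from optimizing the free weight $w=\sigma_{\max}^{-1}\sqrt{c\log(1/\delta)}$ in the MGF bound on $-\sum_{t,j}\nabla_{t,j}\xi_{t,j}/a_{t,i}$.
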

Polylog terms are hidden in Theorem \ref{thm:main-thm-simplified}
for simplicity. The full result, Theorem \ref{thm:full-thm}, and
proofs are presented in Appendix \ref{sec:Full-Theorem-and}. Theorem
\ref{thm:main-thm-simplified} provides guarantee for all partitions
of the parameters into arbitrary disjoint subsets and generalizes
AdaGrad-Norm ($c=1$) and AdaGrad-Coordinate ($c=d$) results. The
result is noise-adapted: if $\sum_{i=0}^{c-1}\norm{\sigma_{\Psi_{i}}}_2$
is small enough, the rate becomes the optimal deterministic rate of
$O(\frac{1}{T})$ regardless of the base step size $\eta$. The next section explores implications of Theorem
\ref{thm:main-thm-simplified} and strategies for selecting subsets.

\subsection{Coordinate-Noise Density and Subset-Norm's Improved Dimensional Dependency \label{subsec:Coordinate-noise-sparsity-and}}

Theorem \ref{thm:main-thm-simplified} presents trade-offs between
the partition strategies and the stochastic gradient noise, where we need to balance between the number of subsets $c$ and noise-reduction benefits of parameters-grouping e.g., $\norm{x}_2 \le \norm{x}_1$. 
% Intuitively,
% if few coordinates contribute to the total noise, the scalar version
% is more useful as $\norm{\sigma_{\Psi_{i}}}^{2}$ is small for most
% subsets. However, when many coordinates contribute to the noise, $\norm{\sigma_{\Psi_{i}}}^{2}$
% can be large for many subsets and become the dominating term.

\paragraph{Coordinate-noise density $d^{\beta}$.}

To make the intuition above concrete, consider a scenario with various
coordinate-noise density rate: fix a rate $\beta\in[0,1]$, some $d^{\beta}$
coordinates have noise $\alpha>0$ while the rest are $0$. The rate $\beta$ controls the density of coordinate noise. When $\beta=0$,
only 1 coordinate have noise. When $\beta=1$, all coordinates have
noise. To get a feel for $\beta$'s relationship to the fraction of coordinates containing noise, half the coordinates contain noise when $\beta\approx0.96$ for $d=60\text{M}$ and $\beta\approx 0.97$ for $d=10\text{B}$ and $\beta\approx 0.98$ for $d=10^{15}$ (See also Figure \ref{fig:coordinate-noise-density-over-fraction-of-noisy-coords}). See Figure \ref{fig:noise-density-10} for noise density of LLaMA 60M (details in Appendix \ref{subsec:Empirical-validation-noise-sparse}). 
Furthermore, $\alpha$ upper bounds all coordinate noise, i.e. $\|\sigma\|_{\infty}\le\alpha$,
which is common in coordinate-wise analysis \cite{defossez2020simple}. 

\paragraph{Derivation of convergence rate given coordinate noise density $d^{\beta}$. }

Given $\beta\in[0,1]$, we can obtain a concrete expression for the
convergence rates of various methods (different subset sizes) from
Theorem \ref{thm:main-thm-simplified}. For SGD with Subset-Norm,
we consider an \textit{equipartition strategy}, where we divide
the coordinates into $c=d^{1-\beta}k$ subsets of size $d^{\beta}/k$
each with the $d^{\beta}$ noisy coordinates into just $k$ subsets
so that the rest of the $c-k$ subsets have no noisy coordinate. We
defer the derivation details to Appendix \ref{sec:coordinate-noise-sparsity-derivation}
and summarize the results in the first row of Table \ref{tab:dimension-convergence-rate}. 

\begin{table*}[ht]
\centering
\caption{Algorithms comparison between dimensional dependencies and convergence
rates under different coordinate-noise density settings. Given a density
rate $\beta$, convergence rates' dimensional dependency are highlighted
in red and green to denote the worst and best dependency on the dimension.
Note that memory usage of AdaGrad-Coordinate is $O(d)$ while SGD with Subset-Norm
(with the partition strategy presented here) is $O(d/k)$, where $k=d^{1.4\beta-0.6}$
is chosen as an optimal noise dependent subset size. \label{tab:dimension-convergence-rate}}
\begin{tabular}{llll}
\hline 
Density rate & AdaGrad-Coordinate & AdaGrad-Norm & Subset-Norm (equipartition subsets)\tabularnewline
\hline 
\multirow{2}{*}{$\beta\in[0,1]$} & \multirow{2}{*}{$\tilde{O}\left(\nicefrac{d^{1.5+\beta}}{\sqrt{T}}+\nicefrac{d^{2.5}}{T}\right)$} & \multirow{2}{*}{$\tilde{O}\left(\nicefrac{d^{2.5\beta}}{\sqrt{T}}+\nicefrac{d^{3\beta}}{T}\right)$} & $\tilde{O}\left(\nicefrac{d^{0.3+1.8\beta}}{\sqrt{T}}+\nicefrac{d^{\beta+1}}{T}\right)\text{if }\text{\ensuremath{\beta\in[0,\nicefrac{2}{3}]}}$\tabularnewline
 &  &  & $\tilde{O}\left(\nicefrac{d^{0.3+1.8\beta}}{\sqrt{T}}+\nicefrac{d^{1.6\beta+0.6}}{T}\right)\text{if }\beta\in[\nicefrac{2}{3},1]$\tabularnewline
\hline 
$\beta=0$ & $\tilde{O}\left(\nicefrac{\red{d^{1.5}}}{\sqrt{T}}+\nicefrac{\red{d^{2.5}}}{T}\right)$ & \textbf{$\tilde{O}\left(\nicefrac{\green 1}{\sqrt{T}}+\nicefrac{\green 1}{T}\right)$} & $\tilde{O}\left(\nicefrac{d^{0.3}}{\sqrt{T}}+\nicefrac{d}{T}\right)$\tabularnewline
$\beta=0.5$ & $\tilde{O}\left(\nicefrac{\red{d^{2}}}{\sqrt{T}}+\nicefrac{\red{d^{2.5}}}{T}\right)$ & $\tilde{O}\left(\nicefrac{d^{1.25}}{\sqrt{T}}+\nicefrac{\green{d^{1.5}}}{T}\right)$ & $\tilde{O}\left(\nicefrac{\green{d^{1.2}}}{\sqrt{T}}+\nicefrac{\green{d^{1.5}}}{T}\right)$\tabularnewline
$\beta=0.9$ & $\tilde{O}\left(\nicefrac{\red{d^{2.4}}}{\sqrt{T}}+\nicefrac{d^{2.5}}{T}\right)$ & $\tilde{O}\left(\nicefrac{d^{2.25}}{\sqrt{T}}+\nicefrac{\red{d^{2.7}}}{T}\right)$ & $\tilde{O}\left(\nicefrac{\green{d^{1.92}}}{\sqrt{T}}+\nicefrac{\green{d^{2.04}}}{T}\right)$\tabularnewline
$\beta=1$ & $\tilde{O}\left(\nicefrac{\red{d^{2.5}}}{\sqrt{T}}+\nicefrac{d^{2.5}}{T}\right)$ & $\tilde{O}\left(\nicefrac{\red{d^{2.5}}}{\sqrt{T}}+\nicefrac{\red{d^{3}}}{T}\right)$ & $\tilde{O}\left(\nicefrac{\green{d^{2.1}}}{\sqrt{T}}+\nicefrac{\green{d^{2.2}}}{T}\right)$\tabularnewline
\hline 
\end{tabular}
\end{table*}

\paragraph{Subset Selection.}
In Table \ref{tab:dimension-convergence-rate}, the equal subset-size
partition strategy for Subset-Norm has better dependency on the dimension
$d$ when the noise is not completely sparse i.e. $\beta=0$. Hence,
if we expect the actual noise density $\beta$ to be around\footnote{Figure \ref{fig:Aggragated-noise-density} shows that overall noise
is quite sparse but varies more when limited to a particular layer as in Figure \ref{fig:noise-density-10}.
See Section \ref{subsec:Subset-size-ablation} for more experiments on subset size selection.} $0.75$ to $0.90$, then compressing with a subset size of around
$d^{0.45}$ to $d^{0.66}$ is optimal. The dependency on $d$ is important
for modern neural network, since the number of parameters $d$ is
typically greater than or on the same order as the total number of iterations $T$. 
\begin{figure}[ht]
\begin{center}
\includegraphics[width=0.95\columnwidth]{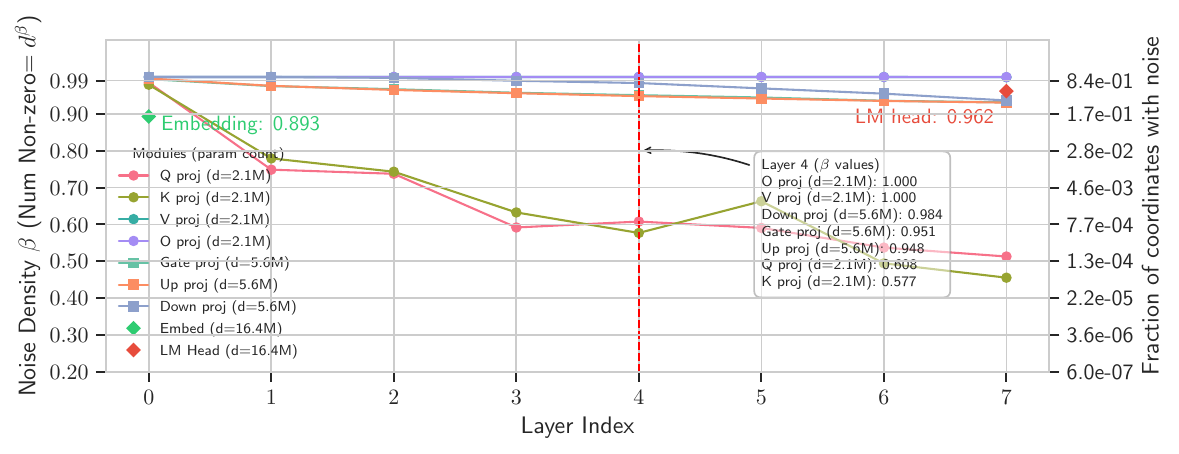}
\par\end{center}
\caption{Noise density per parameter across layers for LLaMA 60M on pre-training task after 100
steps. \label{fig:noise-density-10}}
\end{figure}

\paragraph{Subset-size heuristics to avoid additional hyperparameters.}\label{par:subset-size-heuristics}

Providing a useful and robust default setting for an algorithm is  important to justify claims of reduced costs. We provide
a simple partitioning scheme for SN for  2D parameters $p\in\R^{m\times n}$: we simply group along the smaller dimension. For example, if $p$ is of shape $(2048, 1024)$, we group by rows to get a state of size $2048$. 
% the adaptive
% step size state is set to $\max(m,n)$, where the subsets are either
% the rows or the columns. 
This is a natural grouping scheme that groups the latent dimension together and aims for the rough $d^{0.45}$
subset size discussed in the previous paragraph.
Another simplification is that subset-norm is applied only on \emph{linear}
modules, since 2D linear modules makes up the vast majority of parameters
in transformers. This means we compress all the attention, MLP, and
final LM head weights. This implementation is presented in more details
in Appendix \ref{sec:Adam-Subset-Norm-Implementation}. Section \ref{subsec:Subset-size-ablation}
shows that this heuristic grouping, while simple, is not optimal and can
be improved by tuning the subset size, but we opt for simplicity over performance-tuning in our experiments. 
% However, a method with minimal
% additional tuning is preferred to avoid overfitting, so experiments
% in Section \ref{sec:Experiments} use this heuristic unless stated
% otherwise. 

\textbf{Generic Implementation. \label{subsec:SN-Implementation}}  
We provide pseudocode for the generic equipartition strategy of Algorithm \ref{alg:adagrad-subsetnorm}
in Section \ref{subsec:Generic-Subset-Norm-Adaptive} that we use for
the subset sizes ablations in Section \ref{subsec:Subset-size-ablation}.

Furthermore, in contrast to methods like AdaFactor or GaLore that
are limited to 2D parameters, the generic subset-norm algorithm is coordinate-wise 
and admits an easy implementation to FSDP \cite{zhao2023pytorch,rajbhandari2020zero}, where parameters are flattened
to 1D tensors for efficient communication.

\section{Subspace-Momentum \label{sec:Subspace-Momentum}}
\begin{figure}[ht]
\centering
    \includegraphics[width=0.6\columnwidth]{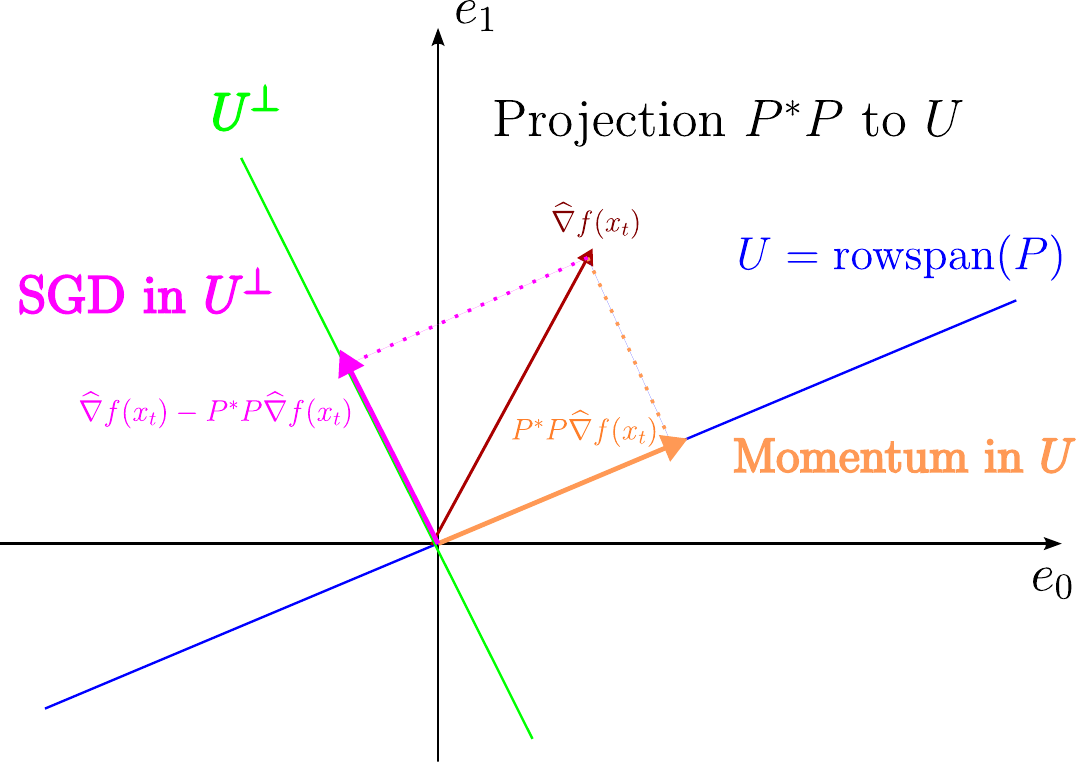}
    \caption{Subspace Momentum Illustration.}
    \label{fig:subspace-momentum}
\end{figure}
Existing algorithmic compression approaches like GaLore \cite{zhao2024galore},
GRASS \cite{muhamed2024grass}, and FLORA \cite{hao2024flora} project
the gradient to a lower dimensional space $\R^{k}$ for updating the
optimizer state via some bounded linear operator $P:\R^{d}\rightarrow\R^{k}$
such that $P^{*}P:\R^{d}\rightarrow\R^{d}$ is a projection, i.e. $(P^{*}P)^2=P^{*}P$, where
$P^{*}:\R^{k}\rightarrow\R^{d}$ is the adjoint operator of $P$. More concretely,
given a stochastic gradient $\hn f(x_{t})\in\R^{d}$ at time $t$,
a low-dimensional version $c_{t}:=P\hn f(x_{t})\in\R^{k}$ is computed
that is used to update the states before projecting back to $\R^{d}$
for update:
\begin{align}\label{eq:galore-update}
m_{t}&=\beta_{1}m_{t-1}+\left(1-\beta_{1}\right)c_{t} \notag \\ 
v_{t}^{2}&=\beta_{2}v_{t-1}^{2}+\left(1-\beta_{2}\right)c_{t}^{2}  \\ 
x_{t+1}&=x_{t}-P^{*}\left(m_{t}/v_{t}\right).\notag
\end{align}
This update performs adaptive optimization in the row
span $U\subseteq\R^{d}$ of $P$ when viewed as a linear operator,
with $\dim(U)=k$. For example, GaLore \cite{zhao2024galore} utilizes
the top $k$ singular vectors of stochastic gradients,
and FLORA \cite{hao2024flora} simply projects to a random subspace
using dense Gaussian matrices. Due to the optimization operating 
in a low rank subspace, convergence is not guaranteed unless stronger
conditions are assumed. 

We propose Subspace Momentum (SM) -- presented in Algorithm \ref{alg:Subspace-momentum} and illustrated in Figure \ref{fig:subspace-momentum} -- where SM guarantees convergence by incorporating the orthogonal
complement of $P^{*}P\hn f(x_{t})\in U$ that lives in the orthogonal
complement $U^{\perp}$ of $U$ (with $U\oplus U^{\perp}=\R^{d}$). We can compute the orthogonal complement of $\hn f(x_{t})$ via $\left(\hn f(x_{t})-P^{*}P\hn f(x_{t})\right)\in U^{\perp}$.

\begin{algorithm}[tb]
\small
   \caption{SGD with Subspace Momentum (SM)}
   \label{alg:Subspace-momentum}
\begin{algorithmic}
   \STATE {\bfseries Input:} Bounded Linear Operator $P : \mathbb{R}^d \to \mathbb{R}^k$, such that $P^*P$ is a projection, where $P^*$ is $P$'s adjoint.
   \FOR{$t = 1, 2, \ldots, T$}
      \STATE Obtain stochastic gradient $\hn f(x_t)$
      \STATE $m_t = \beta_1 m_{t-1} + (1 - \beta_1) P \hn f(x_t)$ \hfill \COMMENT{Momentum in subspace $U=\text{rowspan}(P)$}
      \STATE $r_t = \hn f(x_t) - P^* P \hn f(x_t)$ \hfill \COMMENT{Component in $U^{\perp}$}
      \STATE $x_{t+1} = x_t - \eta (P^* m_t + r_t)$ \hfill \COMMENT{Step in both spaces}
   \ENDFOR
\end{algorithmic}
\end{algorithm}

Subspace Momentum maintains the
same memory footprint, $O(k)$, as existing low-rank optimizers. However, SM's
 update step is full rank: it uses momentum only in $U:=\text{rowspan}(P)$
while performs SGD in $U^{\perp}$. Unlike joint compression techniques
like GaLore (\ref{eq:galore-update}), SM only affects the momentum
term. Hence, SM is modular and fits into the framework of Algorithm
\ref{alg:Generic-Adaptive-Optimizer}: there, we can combine it
with different adaptive step sizes such as subset-norm.\footnote{Section \ref{subsec:Step-sizes-and-momentum} contains a detailed ablation on different momentum and adaptive step sizes combinations. } 

% \begin{algorithm}
% \caption{SGD with Subspace Momentum (SM). \label{alg:Subspace-momentum}}
% \begin{algorithmic}[1] \Require Projection $P:\mathbb{R}^{d}\rightarrow\mathbb{R}^{k}$
% and its adjoint $P^{*}$ \For{$t=1,2,\ldots,T$} \State $m_{t}=\beta_{1}m_{t-1}+(1-\beta_{1})P\hn f(x_{t})$
% \Comment{Update momentum in subspace} \State $r_{t}=\hn f(x_{t})-P^{*}P\hn f(x_{t})$
% \Comment{Compute orthogonal component} \State $x_{t+1}=x_{t}-\eta(P^{*}m_{t}+r_{t})$
% \Comment{Subspace Momentum and Orthogonal SGD} \EndFor \end{algorithmic} 
% \end{algorithm}

\subsection{High-Probability Convergence of Subspace-Momentum \label{subsec:Convergence-of-Subspace-Momentum}}

We show that Subspace-Momentum, Algorithm \ref{alg:Subspace-momentum},
converges with high-probability under the standard assumptions of smoothness
and $\sigma$-subgaussian gradient noise:
% in Theorem \ref{thm:SGDSM-highprob}.
\begin{theorem}
\label{thm:SGDSM-highprob}Suppose that $f:\R^{d}\rightarrow\R$ is
$L$-smooth and lower bounded by $f_{*}$. Assume unbiased stochastic
gradients $\widehat{\nabla}f(x_{t})$ with $\sigma$-subgaussian stochastic
gradient noise. Then, the iterates $x_{t}$ given by SGD with Subspace-Momentum
(Algorithm \ref{alg:Subspace-momentum}) with step size $\eta=\min\left\{ \frac{1}{2\alpha}; \sqrt{\frac{\Delta_1}{\sigma^2 \alpha T}} \right\}$
for $\alpha:=\frac{(3-\beta)L}{2\left(1-\beta\right)}$ satisfies
the following with probability at least $1-\delta$
\[
\frac{1}{T}\sum_{i=1}^{T}\norm{\nabla_{t}}_2^{2}\le\frac{8\Delta_{1}\alpha}{T}+\frac{7\sigma\sqrt{\alpha\Delta_{1}}}{\sqrt{T}}+\frac{48\sigma^{2}\log\left(1/\delta\right)}{T},
\]
where $\Delta_{1}:=f(x_{1})-f_{*}$ is the initial function gap.
\end{theorem}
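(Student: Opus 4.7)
The plan is to follow the standard descent-plus-concentration template for high-probability analysis of stochastic momentum SGD, specialized to exploit the orthogonal decomposition $\mathbb{R}^{d}=U\oplus U^{\perp}$ where $U=\text{rowspan}(P)$. First, I apply $L$-smoothness to the update $x_{t+1}=x_{t}-\eta(P^{*}m_{t}+r_{t})$. Since $P^{*}m_{t}\in U$ and $r_{t}\in U^{\perp}$ are orthogonal, and since $PP^{*}=I_{k}$ (a consequence of $P^{*}P$ being a rank-$k$ projection, so that $P^{*}$ restricted to $\mathbb{R}^{k}$ is an isometry onto $U$), the quadratic term splits as $\|m_{t}\|^{2}+\|r_{t}\|^{2}$. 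Decomposing $\nabla f(x_{t})$ into its $U$ and $U^{\perp}$ components $g_{t}^{U}:=P^{*}P\nabla f(x_{t})$ and $g_{t}^{\perp}:=(I-P^{*}P)\nabla f(x_{t})$, and writing the momentum error $\epsilon_{t}:=m_{t}-P\nabla f(x_{t})$ and orthogonal-space noise $\xi_{t}^{\perp}:=(I-P^{*}P)\xi_{t}$, the inner product $\langle\nabla f(x_{t}),P^{*}m_{t}+r_{t}\rangle$ simplifies to $\|\nabla f(x_{t})\|^{2}+\langle P\nabla f(x_{t}),\epsilon_{t}\rangle+\langle g_{t}^{\perp},\xi_{t}^{\perp}\rangle$, yielding a clean per-step descent inequality.

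Next I bound the residual pieces. Unrolling $\epsilon_{t}=\beta\epsilon_{t-1}+\beta\bigl(P\nabla f(x_{t-1})-P\nabla f(x_{t})\bigr)+(1-\beta)P\xi_{t}$ expresses $\epsilon_{t}$ as a deterministic drift $\sum_{s\leq t}\beta^{t-s+1}(P\nabla f(x_{s-1})-P\nabla f(x_{s}))$, which via $L$-smoothness is bounded by $L\eta\|P^{*}m_{s-1}+r_{s-1}\|$ per step, plus a $\sigma$-sub-Gaussian martingale $(1-\beta)\sum_{s\leq t}\beta^{t-s}P\xi_{s}$. The quadratic piece $\tfrac{L\eta^{2}}{2}(\|m_{t}\|^{2}+\|r_{t}\|^{2})$ is bounded via $\|m_{t}\|^{2}\leq 2\|P\nabla f(x_{t})\|^{2}+2\|\epsilon_{t}\|^{2}$ and $\|r_{t}\|^{2}\leq 2\|g_{t}^{\perp}\|^{2}+2\|\xi_{t}^{\perp}\|^{2}$, which combine to $2\|\nabla f(x_{t})\|^{2}+2\|\epsilon_{t}\|^{2}+2\|\xi_{t}^{\perp}\|^{2}$. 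The $\|\nabla f(x_{t})\|^{2}$ contribution, together with additional $\|\nabla f(x_{t})\|^{2}$ terms produced by Young's inequality on $\langle P\nabla f(x_{t}),\epsilon_{t}\rangle$ against the drift portion of $\epsilon_{t}$, is absorbed into the LHS provided $\eta\alpha\leq 1/2$; this absorption determines the precise form $\alpha=\tfrac{(3-\beta)L}{2(1-\beta)}$, where the $(1-\beta)^{-1}$ factor captures the effective horizon of the momentum averaging.

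Summing the descent from $t=1$ to $T$ and telescoping $f(x_{t})-f(x_{t+1})$ leaves two purely stochastic residuals: $\sum_{t}\langle g_{t}^{\perp},\xi_{t}^{\perp}\rangle$, a martingale-difference sum with sub-Gaussian increments of scale $\|g_{t}^{\perp}\|\sigma$, and the noise component of $\sum_{t}\langle P\nabla f(x_{t}),\epsilon_{t}\rangle$, which after stripping the drift is likewise a sub-Gaussian martingale. A Freedman-type concentration inequality for sub-Gaussian martingales bounds each by $\tilde O\bigl(\sqrt{\sigma^{2}\sum_{t}\|\nabla f(x_{t})\|^{2}\log(1/\delta)}\bigr)$; an AM-GM step partially absorbs this back into the LHS at the cost of an additive $\sigma^{2}\log(1/\delta)$ term. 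Assembling all pieces gives $\tfrac{1}{T}\sum_{t}\|\nabla f(x_{t})\|^{2}\leq\tfrac{O(\Delta_{1})}{\eta T}+O(\eta\sigma^{2})+\tfrac{O(\sigma^{2}\log(1/\delta))}{T}$, and the choice $\eta=\min\{1/(2\alpha),\sqrt{\Delta_{1}/(\sigma^{2}\alpha T)}\}$ balances the first two terms to produce the stated bound with constants $8$, $7$, and $48$. The hard part will be the high-probability handling of $\sum_{t}\langle P\nabla f(x_{t}),\epsilon_{t}\rangle$: the momentum error entangles a deterministic drift (coupled to the very gradient norms we are trying to bound) with a sub-Gaussian martingale tail, and cleanly decoupling these — so the drift is absorbed by smoothness and the noise admits a single Freedman-type concentration — is the most delicate step.
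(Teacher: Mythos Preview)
Your outline is broadly correct but diverges from the paper at exactly the point you flag as ``the most delicate step,'' and the paper's route sidesteps that difficulty entirely rather than confronting it.

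You organize the analysis around the momentum error $\epsilon_t=m_t-P\nabla f(x_t)$ and its unrolling into a drift plus a sub-Gaussian tail $(1-\beta)\sum_{s\le t}\beta^{t-s}P\xi_s$. The problem you anticipate is real: after summing $\sum_t\langle P\nabla f(x_t),\epsilon_t^{\mathrm{noise}}\rangle$ and swapping the order, the coefficient of $\xi_s$ is $(1-\beta)\sum_{t\ge s}\beta^{t-s}P\nabla f(x_t)$, which is \emph{not} $\mathcal F_s$-measurable, so a Freedman-type bound does not apply directly. Your proposal names the issue but does not say how you will resolve it; without an explicit ``align-the-time-index'' telescoping this step is incomplete.

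The paper avoids the entanglement by unrolling the \emph{gradient} rather than the momentum error: write $-\langle\nabla_t^U,m_t\rangle=-\beta\langle\nabla_t^U-\nabla_{t-1}^U,m_{t-1}\rangle-\beta\langle\nabla_{t-1}^U,m_{t-1}\rangle-(1-\beta)\langle\nabla_t^U,g_t^U\rangle$, bound the increment by smoothness, and iterate. Together with $\sum_t\|m_t\|^2\le\sum_t\|g_t^U\|^2$ (convexity of $\|\cdot\|^2$), this produces the clean deterministic lemma
\[
f(x_{T+1})-f(x_1)\le-\eta\sum_{t}\|\nabla_t\|^2-\eta\sum_{t}\langle\nabla_t,\xi_t\rangle+\alpha\eta^2\sum_{t}\|g_t\|^2,
\]
with $\alpha=\tfrac{(3-\beta)L}{2(1-\beta)}$ falling out of the geometric sums. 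The only stochastic residual is $\langle\nabla_t,\xi_t\rangle$, already a martingale difference at time $t$. From here the paper does \emph{not} use Freedman: expanding $\|g_t\|^2=\|\nabla_t+\xi_t\|^2$, it handles $\langle\nabla_t,\xi_t\rangle$ and $\|\xi_t\|^2$ simultaneously via the MGF bound $\E[\exp(a\|X\|^2+\langle B,X\rangle)]\le\exp(3\sigma^2(a+\|B\|^2))$ for $0\le a\le 1/(4\sigma^2)$, followed by a supermartingale/Markov argument. Choosing the scaling weight $w=1/(12\sigma^2\eta)$ makes the resulting $3\sigma^2w^2\eta^2(\alpha\eta-1)^2\|\nabla_t\|^2$ term absorbable into the LHS when $\eta\le 1/(2\alpha)$, and the constants $8,7,48$ drop out directly.

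Your $\epsilon_t$-plus-Freedman route can be made to work, but only after you supply the missing telescoping that re-aligns the time indices; the paper's recursion on $\nabla_t^U$ delivers that alignment for free and yields a shorter proof with tighter constants.
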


We observe that Theorem \ref{thm:SGDSM-highprob} has a similar
rate to vanilla SGD. Unlike adaptive algorithms, we need to know the problem parameters to get an adaptive convergence rate. The proof is presented in Appendix \ref{subsec:Subspace-Momentum-Convergence-proof},
where we also provide some intuition for the algorithm.

\subsection{Subspace Selection, Subspace Switching, and Projection Updates}

% The projection $P$ in Algorithm \ref{alg:Subspace-momentum} can
% be a dense random projection as in FLORA, projection to top $k$ singular
% vectors as in GaLore, or projection to random standard bases (sampling
% coordinates) as in GRASS.

In our experiments, we use the top-$k$ singular vectors of a stochastic gradient snapshot as our main subspace, similarly to GaLore \cite{zhao2024galore}. 

Algorithm \ref{alg:Subspace-momentum} and the accompanying theory
in Section \ref{subsec:Convergence-of-Subspace-Momentum} are only
for a fixed projection. However, from our experiments, we find that
performing subspace switching every $G$ steps (as in GaLore) can be
beneficial, especially for smaller ranks. Section \ref{subsec:SM-rank-gap-ablation}
contains an ablation studies on this. We incorporate
projection updates in our main algorithms by picking a projection
update gap and then fully resetting the momentum term to \emph{zero}
when we switch (in contrast to GaLore's accumulated statistics when
switching subspace).

\section{Experiments\label{sec:Experiments}}

We evaluate Subset-Norm (SN) and Subspace-Momentum (SM) on LLM pretraining and supervised fine-tuning 
tasks, where memory is often a bottleneck. We compare against several
baselines, with memory estimates given for parameters of size
$m\times n$, where we assume WLOG $m\geq n$.

 \textbf{Baselines.} We consider \textbf{AdaGrad} \cite{duchi2011adaptive}, \textbf{AdaGradm} where we incorporate momentum $0.9$ to AdaGrad, \textbf{Adam} \cite{kingma2014adam}, and \textbf{RMSProp} \cite{tieleman2012lecture} as standard optimizers. We also consider \textbf{GaLore} \cite{zhao2024galore} as a recent memory-efficient method that projects the optimizer states into a low-rank
subspace (typically rank $n/4$), using $2(mn/4)$ memory but requiring
6 hyperparameters including subspace rank, projection update frequency,
and scaling parameters.

\textbf{Our methods.} We incorporate SN and SM to AdaGrad, AdaGradm, Adam and RMSProp. \textbf{SN}
reduces the adaptive step size (e.g. Adam's second moment term) memory from $mn$ to $m$ for a parameter of size $m\times n$.
\textbf{SNSM} further compresses the momentum term of momentum methods like Adam and AdaGradm by
adding SM with SVD at the cost of additional hyperparameters (See Algorithm \ref{alg:adamsnsm} for the full
implementation used in our experiments). \textbf{RMSPropSN} and \textbf{AdaGradSN} achieves
minimal memory footprint of just $m$ while requiring only 2 hyperparameters.

\subsection{LLM Pre-Training Experiments}
We test our method on the task of pre-training LLaMA models \cite{dubey2024llama,touvron2023llama}
on the C4 dataset \cite{raffel2023c4dataset} with a standard setup -- details in Appendix \ref{pretraining-setup}. Table \ref{tab:pretraining-table} presents the main pre-training
results and Table \ref{tab:Optimizer-states-memory} shows the memory footprint\footnote{The memory
footprint is the total parameters in
the optimizer states multiplied by 16 bits. See Listing \ref{lst:get_optimizer_state_size}
for more details.}
of different optimizers across a range of model sizes. 

\textbf{Additional Baselines.} We provide additional comparisons with FLORA \cite{hao2024flora}, LoRA \cite{hu2021lora}, and ReLoRA \cite{lialin2023relora} in Table \ref{tab:additional-comparisons-pretrain}. Note that these memory-efficient methods sacrifice performance (over Adam) to save memory while our method, AdamSNSM, achieves the best of both worlds. 

\begin{table*}[ht]
\centering
\small
\caption{Final perplexity (``Perpl.'') along with the  number of tokens
 in parentheses of different optimizers on pretraining
LLaMA models task. \textbf{Bolded methods} are ours. Columns LR and
\#TP denote the learning rate and the number of tunable parameters of the corresponding
method, respectively. We only tune for the base learning and set other parameters as in previous implementations. The memory column shows the optimizer's states
memory consumption given a parameter of shape $m\times n$ with
$m\ge n$. Red LR highlights instability.
\label{tab:pretraining-table}}
\begin{tabular}{lllrrrrrrrr}
\toprule 
\multirow{2}{*}{Methods} & Memory & \multirow{2}{*}{\#TP} & \multicolumn{2}{c}{\textbf{\uline{60M}} \uline{(1.38B)}} & \multicolumn{2}{c}{\textbf{\uline{130M}} \uline{(2.62B)}} & \multicolumn{2}{c}{\textbf{\uline{350M}} \uline{(7.86B)}} & \multicolumn{2}{c}{\textbf{\uline{1B}} \uline{(13.1B)}} \tabularnewline
 & (for $m\times n$) &  & Perpl. & \small LR & Perpl. & \small LR & Perpl. & \small LR & Perpl. & \small LR \tabularnewline
\midrule 
Adam & $\red{2mn}$ & $3$ & 30.46 & {\tiny 0.005} & 24.60 & {\tiny 0.005} & 18.67 & {\tiny \red{0.001}} & 16.00 & {\tiny \red{0.0005}} \tabularnewline
\textbf{AdamSN} & $mn+m$ & $3$ & {29.75} & {\tiny 0.05} & 22.90 & {\tiny 0.05} & 17.49 & {\tiny 0.05} & 14.96 & {\tiny 0.05} \tabularnewline
\textbf{AdamSNSM} & $\green{rn+m}$ & $5$ & \uline{29.74} & {\tiny 0.05} & \textbf{22.43} & {\tiny 0.05} & \textbf{16.91} & {\tiny 0.05} & \uline{14.05} & {\tiny 0.05} \tabularnewline
\midrule 
{AdaGradm} & $\red{2mn}$ & $ 2$ & {30.40} & {\tiny 0.10} & {24.86} & {\tiny 0.10} & {18.30} & {\tiny 0.10} & {17.42} & {\tiny 0.10} \tabularnewline
\textbf{AdaGradmSN} & $mn+m$ & $ 2$ & \textbf{29.73} & {\tiny 2.00} & \uline{22.58} & {\tiny 2.00} & {17.14} & {\tiny 2.00} & {14.48} & {\tiny 2.00} \tabularnewline
\textbf{AdaGradSNSM} & $rn+m$ & $4$ & {29.81} & {\tiny 1.00} & \textbf{22.43} & {\tiny 1.00} & \uline{16.99} & {\tiny 1.00} & \textbf{13.96} & {\tiny 1.00} \tabularnewline
\midrule 
{AdaGrad} & ${mn}$ & $\green 1$ & {37.12} & {\tiny 0.05} & {25.76} & {\tiny 0.05} & {18.14} & {\tiny 0.05} & {15.25} & {\tiny \red{0.01}} \tabularnewline
\textbf{AdaGradSN} & $\green m$ & $\green 1$ & {29.85} & {\tiny 2.00} & {24.19} & {\tiny 1.00} & {17.72} & {\tiny 1.00} & {14.82} & {\tiny 1.00} \tabularnewline
\midrule
RMSProp & $mn$ & $2$ & 35.51 & {\tiny 0.001} & 25.94 & {\tiny 0.001} & 20.01 & {\tiny 0.001} & {17.03} & {\tiny 0.001} \tabularnewline
\textbf{RMSPropSN} & $\green m$ & $2$ & 34.57 & {\tiny 0.01} & 25.67 & {\tiny 0.01} & 18.72 & {\tiny 0.01} & {15.97} & \red{\tiny  0.001} \tabularnewline \midrule
GaLore \cite{zhao2024galore} & $2rn$ & $\red 6$ & 34.73 & {\tiny 0.01} & 25.31 & {\tiny 0.01} & 18.95 & {\tiny 0.01} & 16.76 & {\tiny \red{0.001}} \tabularnewline
\midrule
\multicolumn{3}{c}{\textbf{Rank $r$ / Dimension $m$}} & \multicolumn{2}{c}{128/512} & \multicolumn{2}{c}{256/768} & \multicolumn{2}{c}{256/1024} & \multicolumn{2}{c}{512/2048} \tabularnewline
\bottomrule
\vspace{-0.5cm} 
\end{tabular}
% \begin{tablenotes}[para]\footnotesize
%     \item[*] Not (yet) available due to limited resources.
% \end{tablenotes}
\end{table*}

\subsubsection{Discussions}
% We present the following observations from the results in Table \ref{tab:pretraining-table}.

\textbf{Subset-Norm (SN) improves upon all existing adaptive methods while reducing memory.} Modifying Adam, AdaGradm, AdaGrad, and RMSProp with the SN adaptive step size not only reduces memory footprint but improves their performance across different scales. Notably, AdaGrad and AdaGradm benefit the most from the SN step size, providing empirical support for the theoretical benefits of SN presented in Section \ref{sec:AdaGrad-Subset-Norm}. 

\textbf{Combining Subspace-Momentum (SM) with SN further improves performance while saving additional memory.} Perhaps surprisingly, limiting the use of momentum to a subspace \textit{improves} performance in SN-adaptive step sizes rather than degrading it. Our experiments show that SNSM, combining SN and SM, gives the best performance for the least amount of memory across model sizes. While adding SM introduces additional hyperparameters, Section \ref{subsec:SM-rank-gap-ablation} suggests that these parameters are not too sensitive.

Furthermore, Section \ref{subsec:Subspace-Momentum-ablations} shows that the choice of the subspace matters i.e. the subspace spanned by a top-$k$ singular vectors of a snapshot of a stochastic gradient seems to be the most beneficial for momentum as opposed to simpler choices like a random subspace. Our current guarantee for SM, presented in Section \ref{sec:Subspace-Momentum-Convergence},  does not yet explain why or when subspace momentum is useful, and theoretical understanding of (EMA style) momentum in stochastic optimization is still limited \cite{kidambi2018insufficiencyexistingmomentumschemes}. We believe this could be related to how momentum is beneficial when noise is low (and harmful when noise is high) and the choice of the subspace could correlate to the amount of gradient noise or optimization landscape that harm or benefit momentum \cite{wang2024the, gitman2019understanding}.

\textbf{Hyperparameter robustness.} In Table \ref{tab:pretraining-table}, the best learning rate (LR) found via grid search is displayed and is highlighted in red as the best LR changes across scales. This indicates potential sensitivity to tuning for each respective algorithm. We see that Adam requires smaller LR for larger models, but using SN and SNSM does not. AdaGradm seems less sensitive to the base LR overall.

\textbf{Closing the theory-practice gap.} While there is a non-trivial performance gap between Adam and AdaGrad(m) for larger models, using the SN step size closes this gap across scales. This shows that AdaGrad style algorithms can be competitive to Adam when using the SN step size. Interestingly, vanilla AdaGrad seems to perform well as model size increases. This is important because AdaGrad enjoys stronger theoretical understanding than Adam and has one fewer parameter -- $\beta_2$ -- to tune.

\begin{table}
\tiny 
\caption{Optimizer states memory footprint (in GB for BF16 dtype) for different
LLaMA models. Our methods, AdamSN, AdamSNSM, and RMSPropSN (RMSPSN), are modifications
of Adam and RMSProp (RMSP) to utilize Subset-Norm (SN) and Subspace-Momentum
(SM). For GaLore and AdamSNSM, the subspace is of dimension\protect\textsuperscript{\ref{fn:rank-galore}}
$d/r$, where the memory accounts for additional space for storing
the projection matrices.\label{tab:Optimizer-states-memory}}

\centering{}%    
\begin{tabular}{lrrrrrr}
\toprule 
\textbf{Opt.} & \textbf{AdamW} & \textbf{AdamSN} & \textbf{RMSP} & \textbf{GaLore} & \textbf{AdamSNSM} & \textbf{RMSPSN} \\
\midrule 
\textbf{Mem.} & $2d$ & $d+\sqrt{d}$ & $d$ & $4d/r$ & $2\nicefrac{d}{r}+\sqrt{d}$ & $\sqrt{d}$\tabularnewline
\midrule
60M & 0.22 & 0.14 & 0.11 & 0.15 & 0.08 & 0.03\tabularnewline
130M & 0.50 & 0.30 & 0.25 & 0.29 & 0.16 & 0.05\tabularnewline
350M & 1.37 & 0.75 & 0.69 & 0.53 & 0.28 & 0.06\tabularnewline
1B & 4.99 & 2.62 & 2.49 & 1.61 & 0.84 & 0.12\tabularnewline
3B & 10.01 & 5.16 & 5.00 & 2.96 & 1.52 & 0.15\tabularnewline
7B & $\red{25.10}$ & 13.04 & 12.55 & 7.01 & 2.73 & 0.49\tabularnewline
\bottomrule
\end{tabular}
\vspace{-0.5cm} 

\end{table}

\subsection{LLMs Supervised Fine-Tuning (SFT) Experiments}
We further evaluate on a supervised-fine-tuning task, where we fine-tune a pre-trained LLaMA 7B model on the UltraFeedback dataset \cite{cui2023ultrafeedback} using the chosen responses with max sequence length of 1024. We train for 1 epoch with linear decay and gradient clipping of 1. Table \ref{tab:sft} contains the result with the time and memory of one training epoch on a single A100-80GB GPU. Note SNSM's $r$ denotes the dimension of SM but the optimization is full-rank. 
\begin{table}[ht]
\tiny
\centering
\caption{Last and minimum validation perplexity for SFT of LLaMA 7B on the UltraFeedback dataset between Adam, LoRA, and AdamSNSM for 2 different ranks. We also show the wall-clock time and peak memory for batchsize 1 for these optimizers. \label{tab:sft}}
\begin{tabular}{lrrrrr}
\toprule
\textbf{} & \textbf{Adam} & \textbf{LoRA (r=64)} & \textbf{AdamSNSM (r=64)} & \textbf{ SNSM (r=32)} \\
\midrule
Last &  2.622       & 2.632                   &   2.584 &       \textbf{2.580}                \\
Min.  &   2.401      & 2.410                   & 2.392 & \textbf{2.390}                        \\ \midrule
Time (min.)           &   266      &      249             &    303 & 301                     \\
Memory (GB)              &    77.11     &    20.75                & 42.89 & 42.89                         \\
% TODO: batch size 1 and GC on 4090
\bottomrule
\end{tabular}
\label{tab:comparison}
\end{table}

\textbf{Discussion.} We observe similar improvement over Adam as in pre-training tasks. Surprisingly, the smaller rank (for momentum) is more beneficial than the larger rank. In contrast to LoRA, since we report peak-memory here, due to the full parameter training of SNSM, the primary memory bottlenecks are gradients and activations. Furthermore, we note that the primary contributor to SNSM's slower wall clock time is the SVD computation on large dimension. We try larger projection update gaps in Table \ref{tab:larger-updategap} which reduce this cost while maintaining good performance for our methods. Furthermore, we discuss potential more efficient alternatives in Section \ref{par:Projection-selection.} and leave further exploration to future works.
% \footnote{While SNSM is uses roughly 10\% more wall clock time than Adam, we believe that a more hardware-optimized implementation for SNSM (e.g. kernel fusion) would reduce this overhead.}

\textbf{GLUE Fine-tuning.}
Additional results on fine-tuning on GLUE tasks with BERT models are in Appendix \ref{subsec:glue-finetuning}.

\subsection{Ablation Studies}
In this section, we present ablation studies on various parameters
of SN and SM.

\textbf{Subset-Norm's subset size ablation. \label{subsec:Subset-size-ablation}}
While we use a simple scheme to compress the adaptive step size
of linear modules in the previous experiments, Table \ref{tab:dimension-convergence-rate} suggests that there is an optimal subset
size that depends on the noise. Figure \ref{fig:AdamSN-Subset-size-ablation} shows performance for various subset-size selection. Since the step size scales with the subset size,  the optimal base LR should be decreased as we decrease the subset size closer towards Adam. We include additional results for 130M model in Figure \ref{fig:AdamSN-Subset-size-ablation-130M}.

\begin{figure}
\begin{center}
\includegraphics[width=0.9\columnwidth]{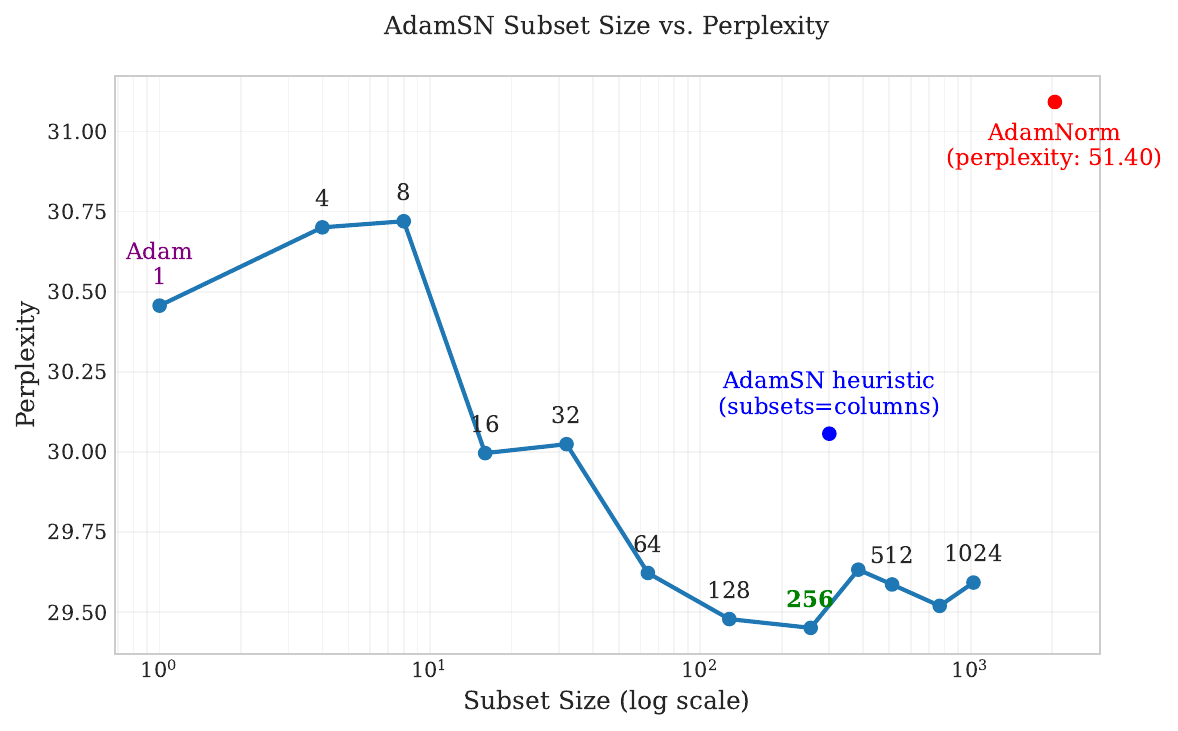}
\par\end{center}
\caption{Subset size ablation for AdamSN on LLaMA 60M trained for 1.38B tokens
(batch size of 512 of max length 256 for 10,000 steps). The higher
the subset size, the smaller the memory footprint of the second moment
optimizer state. \label{fig:AdamSN-Subset-size-ablation}}
\end{figure}

% \paragraph{Recommendation for selecting subset-size.}

While one can use the heuristics discussed on models where linear modules make up the vast majority, for arbitrary models with
weights of $d$ elements, we found that a subset size of $\sqrt{d}/2$ is probably
a reasonable choice. If more resources are available, the subset
size can also be tuned. 

\textbf{Subspace selection.} \label{par:Projection-selection.} While the top-$k$ singular vectors of stochastic gradients gives a subspace with strong performance, performing SVD can be expensive for larger models and storing the dense projection consumes non-negligible memory for large ranks. Gradient-independent projections
like random gaussian as in FLORA \cite{hao2024flora} avoids
SVD and can save memory by storing the pseudorandom seed
(at the cost of recomputating the projection at every step). One can
further speed up the random projection by using a fast
subspace embedding like the Subsampled-Randomized Hadamard Transform
(SRHT) used in the Fast-JL transform \cite{ailon2009fast}. Random
projections like SRHT can also be used to approximate SVD (Appx-SVD)
computation \cite{halko2011finding} that can be much faster than
full SVD. Finally, the cheapest projection is a subspace
of random standard bases. Recently, GRASS \cite{muhamed2024grass}
explores this idea and tests sampling random bases with large gradient
norms. We examine different choices for the subspace and
compare their time, space, and performance in Appendix \ref{subsec:Subspace-Momentum-ablations} (Table \ref{tab:Different-Projection-Ablations}).  

\begin{table}[ht]
\centering
\scriptsize
\caption{Effects of less frequent subspace update schedule (gap). Compared to Table \ref{tab:pretraining-table} where the gap is fixed to 200 across all scales. \label{tab:larger-updategap}
}
\begin{tabular}{lrrrrrrrr}
\toprule
 Model Size & \textbf{60M} & \textbf{130M} & \textbf{350M} & \textbf{1B} \\
\midrule
 Gap/Steps (5\%) & 200/10K &  1K/20K & 3K/60K & 5K/100K \\
 \midrule
 AdamSNSM    & 29.84 & 22.71 & 18.43 & 15.28 \\
AdaGradSNSM & 30.28 & 22.76 & 17.02 & 13.90 \\
GaLore      & 36.69 & 29.37 & 21.27 & 19.14 \\
\midrule
\textit{Fixed} Subspace & 10K/10K &  20K/20K & 60K/60K & 100K/100K \\
 \midrule
 AdamSNSM    & 30.65 & 23.65 & 18.94 & 15.16 \\
AdaGradSNSM & 31.43 & 24.85 & {18.04} & 14.62 \\
GaLore      & 37.95 & 26.63 & 21.49 & 27.11 \\
\bottomrule
\end{tabular}
\end{table}

\textbf{Larger projection update gaps.} 
Frequently updating the projection map using SVD can be expensive, especially for larger models. Furthermore, updating the projection every 200 steps can be arbitrary. In Table \ref{tab:larger-updategap}, we examine more structured schedules: (1) updating every 5\% of the total training steps (corresponding to 200/10K steps for the 60M model) and (2) only using a fixed subspace at the start. Compared to Table, \ref{tab:pretraining-table} where a fixed gap of $200$ is used across scales, we see SNSM's performance stay relatively similar when we increase the update gap to $5\%$ of the total training steps, whereas GaLore's performance suffers more.

\begin{table}[ht]
\centering
\tiny
\caption{Fixed Subspace Choices on LLaMA 60M. We examine GaLore and SNSM with top-$k$ singular vectors projections (SVD) and random subspaces (Random) using dense gaussian projections.\label{tab:fixedsubspace-proj}}
\begin{tabular}{l|cc|cc}
\toprule
    & \textbf{GaloreSVD} & \textbf{GaloreRandom} & \textbf{SNSM+SVD} & \textbf{SNSM+Random} \\
\midrule
Perplexity & 37.95 & 38.23 & 30.65 & 40.15 \\
\bottomrule
\end{tabular}
\end{table}

Interestingly, for fixed subspace (100\% gap), GaLore still achieves decent performance even though the optimization only happens in a small subspace up until the 1B model, where the training stops improving after 50K/100K steps. In Table \ref{tab:fixedsubspace-proj}, we see that a random subspace seems to work decently well too. This suggests that a majority of progress can be made in a small subspace in smaller models. In contrast, this is not the same for restricting momentum to a subspace. Furthermore, we notice that there are training loss spikes at the times when we switch subspace for GaLore that impacts training with 5\% gap, most likely due to incompatible optimizers' statistics between subspaces. This could explain why GaLore's 100\% gap performs similarly or even better than 5\% gap for certain run. Finally, we note that AdaGradSNSM performs the best here with the larger gaps as the dimension increases.

\textbf{Additional experiments and ablations.}
We provide additional experiments and detailed ablations on wall-clock time speedup, peak-memory savings, the effect of clipping, batch sizes, random seeds, combinations between adaptive step sizes and momentum, and more in Appendix \ref{sec:additional-ablations}. 

\section{Related Works}

As model sizes grow, memory-efficient training techniques have become
crucial. Following up on AdaFactor \cite{shazeer2018adafactor}, low-rank
methods like Galore \cite{zhao2024galore}, LoRA \cite{hao2024flora},
and ReLORA \cite{lialin2023relora} reduce memory usage by approximating
large weight matrices with low-rank representations. Projection-based
approaches, such as GRASS \cite{muhamed2024grass} and FLORA \cite{hao2024flora},
compress gradients or combine low-rank ideas with projections to reduce
memory requirements. Recently, AdaMeM \cite{vyas2024adamem} proposes to incorporate the orthogonal subspace to the AdaFactor optimizer; this is related to but different from our simpler SM algorithms, where we use subspace decompositions to decouple the momentum and SGD. 
 BAdam \cite{luo2024badammemoryefficientparameter},
a block coordinate descent method that utilizes Adam as an inner solver,
has been proposed for fine-tuning large language models. 
% Very recently, Adam-mini \cite{zhang2024adamminiusefewerlearning} also uses shared step sizes as Subset-Norm; however, the partition strategy is quite different and mostly empirical. 
In contrast to our proposed methods, these methods are largely heuristic-driven and often lack convergence
guarantees under standard assumptions. 
On the other hand, methods like SM3 \cite{anil2019memory},
which uses subset (cover) statistics to show convergence in online
learning, and MicroAdam \cite{modoranu2024microadam}, which provides
convergence guarantees for a gradient compression scheme with error
correction, offer theoretical guarantees. 

Additional approaches to reducing memory during training include optimizer
quantization \cite{li2024memory,dettmers20218,dettmers2024qlora},
attention computation compression/optimization \cite{wu2022memorizing,dao2022flashattention,dao2023flashattention,shah2024flashattention},
activation checkpointing \cite{chen2016training}, and distributed
training \cite{rajbhandari2020zero}. For inference, compression techniques
are also actively being explored \cite{sakr2024espace,dettmers20228bitquantization,xiao2024smoothquantaccurateefficientposttraining,lin2024awqactivationawareweightquantization,frantar2023gptqaccurateposttrainingquantization}. These are orthogonal directions to our work and can be combined. 
Another orthogonal direction is approximated second-order optimization, where one aims to approximate the Hessian preconditioner using only first-order information in order to achieve faster convergence. Some works in this area include \cite{gupta2018shampoo,liu2023sophia, vyas2024soap}. These methods typically demonstrate faster training but at the cost of super-linear memory and additional computational overhead.  

 Convergence analysis of non-convex optimization methods has seen significant
progress, with recent works providing convergence proofs for adaptive
algorithms like Adam \cite{li2024convergence,defossez2020simple}.
Numerous studies have explored convergence properties of various adaptive
and stochastic gradient methods \cite{chen2018convergence,defossez2020simple,ene2021variational,liu2023high,liu2023on,ward2019adagrad,zou2019sufficient,reddi2018convergence,nesterov1983method},
while lower bound analyses \cite{arjevani2023lower} have highlighted
fundamental limits in non-convex optimization. Here, obtaining convergence results for EMA updates (Adam style) for subset-norm and under further relaxed assumptions like affine smoothness \cite{wang2023convergence, attia2023sgd}, affine noise \cite{hong2024adagradmomentum,faw2022power}, heavy-tailed noise \cite{zhang2019gradient,zhang2020adaptive,nguyen2023improved,nguyen2023high} are of great interest. 

% Finally, \cite{nguyen2024memory} is a preliminary workshop version
% of this work that contains a version of the SN step-size. 

\textbf{Comparison with Adam-mini.} Very recently, Adam-mini \cite{zhang2024adamminiusefewerlearning} also uses shared step sizes as Subset-Norm; however, the partition strategy is quite different from ours. While Adam-mini also employs a grouping strategy for the adaptive step size, it is primarily motivated empirically and lacks a general grouping strategy for general parameters. 
% Adam-mini gives a heuristic approach to form the group for transformers’ parameters without showing theoretically what this grouping is trying to achieve and if this goal is achieved, why the convergence is improved. 
In contrast, our theory results show that grouping by noise magnitude leads to improvement. 
% Our heuristic is an efficient method toward this goal and the experiment validates this by showing that our groups successfully isolate the noise into a small number of groups and perhaps as a consequence, have improved performance (also see our additional experiments below). We further demonstrate in Table 2 that there are scenarios where subset-norm attains better convergence than existing methods. 
In experiments, our AdamSNSM uses less memory than Adam-mini, due to the fact that Adam-mini uses full momentum while we use momentum only in a subspace (which outperforms full momentum in many cases given a good choice of subspace). Furthermore, in terms of perplexity, Adam-mini performs very closely to AdamW while our methods outperform Adam (which performs similarly to AdamW) on a range of language tasks and model sizes.
\section{Conclusion and Future Works }
% TODO REWRITE
In this paper, we introduce two principled optimizer states' memory reduction methods ---Subset-Norm
(SN) and Subspace-Momentum (SM)---designed to address the high memory
costs associated with adaptive optimizers in large-scale deep learning. SN and SM achieve memory savings without
compromising performance and admit high-probability convergence guarantees under relaxed
assumptions. Extensive experiments pre-training and fine-tuning LLMs validate our methods' effectiveness and efficiency.
% Specifically, Subset-Norm (SN) reduces the memory footprint
% of adaptive step sizes from $O(d)$ to $O(\sqrt{d})$, while Subspace-Momentum
% (SM) reduces memory by performing momentum in a low-dimensional subspace and SGD in
% the orthogonal complement. Empirical
% evaluations on LLaMA models with parameters ranging from 60M to 1B
% demonstrate the effectiveness of SN and SM. When combined, AdamSNSM achieved better
% perplexity and convergence stability compared to conventional optimizers,
% including Adam and GaLore, while being  more memory-efficient.
% Additionally, our methods exhibit robustness to learning rate selection,
% minimizing the need for extensive hyperparameter tuning, which is
% often necessary in memory-reducing optimizers.

\textbf{Future works.}
Promising directions include exploring SN and SM on additional domains like Reinforcement Learning, where high memory and high noise are also bottlenecks when scaling up to large models. Fully generalizing our methods involves developing general projections (beyond matrix decomposition) for SM on higher order tensors for use on additional architectures like CNNs.  A more in-depth investigation to more optimal subset partition strategies for SN is also an interesting open question, since our analysis in Section \ref{subsec:Coordinate-noise-sparsity-and} only applies to equipartition subsets.  Further theoretical understanding for SM's subspace dependency for improving the subspace selection for SM is desirable. Furthermore, the convergence of SNSM is still unknown. Finally, the benefits of momentum in  stochastic optimization in general is still a mystery; using SM  to study effects of momentum on particular subspaces could open doors to obtain a more fine-grained understanding for why or when momentum helps. 

% For the combined Subspace-Momentum and Subset-Norm method (SNSM), establishing theoretical convergence results and developing a deeper understanding of its dimensional dependency and interactions between subspaces and subset norms are also a priority due to AdamSNSM strong empirical performance and robustness properties.

% Integrating memory-saving techniques
% like quantization into our optimizers and integrating to FSDP training can further save memory and scale up training. 
% SNSM's robustness to learning rate and batch size suggests a more thorough study of interactions between gradient noise and batch-size time/compute tradeoffs, in the spirit of \cite{mccandlish2018empiricalmodellargebatchtraining}, for different optimizers beyond SGD. 

% \input{acknowledgement}

\section*{Acknowledgement}

This work was supported by NSF CCF 2311649.
We thank Alina Ene, Themistoklis Haris, and Duy Nguyen  for insightful discussions and Hieu Nguyen for assistance with the experiments.
This work was completed in part using the Discovery cluster, supported by Northeastern University’s Research Computing team.

\section*{Impact Statement}

This paper presents work whose goal is to advance the field of 
Machine Learning. There are many potential societal consequences 
of our work, none which we feel must be specifically highlighted here.

\bibliography{ref}
\bibliographystyle{icml2025}

%%%%%%%%%%%%%%%%%%%%%%%%%%%%%%%%%%%%%%%%%%%%%%%%%%%%%%%%%%%%%%%%%%%%%%%%%%%%%%%
%%%%%%%%%%%%%%%%%%%%%%%%%%%%%%%%%%%%%%%%%%%%%%%%%%%%%%%%%%%%%%%%%%%%%%%%%%%%%%%
% APPENDIX
%%%%%%%%%%%%%%%%%%%%%%%%%%%%%%%%%%%%%%%%%%%%%%%%%%%%%%%%%%%%%%%%%%%%%%%%%%%%%%%
%%%%%%%%%%%%%%%%%%%%%%%%%%%%%%%%%%%%%%%%%%%%%%%%%%%%%%%%%%%%%%%%%%%%%%%%%%%%%%%
\newpage
\appendix
\onecolumn

\section{Experimental Details}

In this section, we provide hyperparameters details, implementation
details (pseudocode), and other practical considerations. 

\subsection{Experimental details}\label{pretraining-setup}
 All of our pre-training experiments
are conducted on NVIDIA RTX4090/3090 GPUs. Unless specified otherwise, we run all experiments on BF16
format, weight decay of 0, gradient clipping of 1.0, cosine learning
rate decay to 10\% of the max learning rate with 10\% linear warmup
steps, and batch size of 512 (similarly to \cite{zhao2024galore}
and \cite{touvron2023llama,dubey2024llama}).\footnote{Note that these addition improve the performance for all baselines.
See Appendix \ref{subsec:Gradient-clipping.}.} For all our experiments, we use the default $\left(\beta_{1},\beta_{2}\right)=\left(0.9,0.999\right)$
and only tune for the base learning rate within a grid of $\left\{ 0.5,0.1,0.05,0.01,0.005,0.001\right\} $.\footnote{Except AdaGradSNm where we find higher learning rates in $\left\{ 0.5,1,2,5\right\} $
to be better. We tune the lr on the 60M model and use the same learning
rate for the larger model, where the base learning rate is only reduced
if the method fails to converge.} We train for 1.38B, 2.62B, 7.86B, and 13.1B tokens for models of
sizes 60M, 130M, 350M, and 1B parameters, respectively, following \cite{zhao2024galore}
and matches roughly the scaling laws in \cite{hoffmann2022training}. 

For GaLore, we use the same hyperparameters as in \cite{zhao2024galore},
where we use rank 128/512, 256/768, 256/1024, and 512/2048 for the
60M, 130M, 350M, and 1B models, respectively (Table 2 of \cite{zhao2024galore}).\footnote{Note that our reproduced results for GaLore and baselines are similar to \cite{zhao2024galore}.}
For AdamSNSM, we use the same ranks and projection update gap (of
200) as GaLore for all models.\footnote{Note that a smaller gap is more expensive than a larger gap. Our experiments
below show that we can increase the projection update gap without
much performance loss. If data is not limited, one could use a larger
gap to speed up training. However, if data is limited, then a smaller
gap to converge in fewer tokens is potentially more desirable.} However, we do not tune for an additional scaling parameter unlike
GaLore, and we compresses the LM head (final linear layer) with SN
and SM also.\footnote{Existing methods typically do \emph{not} compress the embedding layer
and final LM head, while our methods seem robust to this choice. Compressing these layers save additional memory.}

% \subsection{Hyperparameter Details\label{subsec:Hyperparameter-details}}

% We only tune for the
% learning rate across a grid of $\left\{ 0.1,0.05,0.01,0.005,0.001\right\} $
% (except for AdaGrad with momentum where larger learning rates are
% better). We train for 10,000 steps and 20,000 steps for the 60M and
% 130M models, respectively. 

\subsection{Adam-Subset-Norm Implementation\label{sec:Adam-Subset-Norm-Implementation}}

Algorithm \ref{alg:Adam-Subset-Norm} presents the pseudocode for
Adam-Subset-Norm as mentioned in Section \ref{subsec:SN-Implementation} where we partition the coordinates (for each parameter) into subsets of equal sizes. 

\begin{algorithm}[tb]
   \caption{Adam-Subset-Norm with a Simple Partitioning Scheme}
   \label{alg:Adam-Subset-Norm}
\begin{algorithmic}
   \STATE {\bfseries Input:} Learning rate $\eta$, EMA parameters $\beta_1$ and $\beta_2$, $\epsilon > 0$, optional weight decay $wd \geq 0$
   \FOR{each $p \in \mathbb{R}^{m \times n}$ in params}
      \STATE $\text{grad} \gets p.\text{grad}$
      \STATE $r \gets 0$ if $m \geq n$ else $1$
      \STATE $k \gets p.\text{shape}[r]$ \hfill \COMMENT{Set $k = m$ if $r = 0$, else $k = n$}
      \STATE $\text{gradN} \gets \text{grad.norm(dim=} 1-r\text{)} \in \mathbb{R}^k$ \hfill \COMMENT{Subset norm}
      \STATE $m \gets \beta_1 m + (1 - \beta_1) \cdot \text{grad} \in \mathbb{R}^{m \times n}$
      \STATE $v \gets \beta_2 v + (1 - \beta_2) \cdot \text{gradN}^2 \in \mathbb{R}^k$ \hfill \COMMENT{Omitting bias correction terms}
      \STATE $p \gets p + \eta \frac{m}{\sqrt{v} + \epsilon}$ \hfill \COMMENT{Broadcast division}
      \STATE $p \gets p - \eta \cdot wd$ \hfill \COMMENT{Weight decay}
   \ENDFOR
\end{algorithmic}
\end{algorithm}

\subsection{Generic Subset-Norm Adaptive Step Size Implementation\label{subsec:Generic-Subset-Norm-Adaptive}}

The heuristic implementation in Section \ref{sec:Adam-Subset-Norm-Implementation} is simple and does not require any tuning.
However, to modify existing algorithms to work with arbitrary subsets, one
could utilize reshape as in Algorithm \ref{alg:generalized-subset-norm} as an example.

\begin{algorithm}[tb]
   \caption{Generic Subset-Norm Adaptive Step Size Update Rule (PyTorch-y Notation)}
   \label{alg:generalized-subset-norm}
\begin{algorithmic}
   \STATE {\bfseries Input:} Parameter $P \in \mathbb{R}^d$, step size $\eta > 0$, $\beta$, $\epsilon > 0$, and partition size $k$ such that $k$ divides $d$
   \STATE $R \gets (\nabla P).\text{reshape}(d/k, k)$ \hfill \COMMENT{Reshape gradient into shape $\frac{d}{k} \times k$}
   \STATE $V \gets \beta V + (1 - \beta) \cdot ((R\text{**2}).\text{sum(dim=1)}) \in \mathbb{R}^{d/k}$ \hfill \COMMENT{Update state $V$ via subset norm reduction on dim 1}
   \STATE $U \gets \frac{R}{\sqrt{V} + \epsilon} \in \mathbb{R}^{d/k \times k}$ \hfill \COMMENT{Broadcast addition and division for update step}
   \STATE $P \gets P - \eta \cdot U.\text{view}(d)$ \hfill \COMMENT{Reshape $U$ back to $\mathbb{R}^d$ and update $P$}
\end{algorithmic}
\end{algorithm}

\subsection{AdamSNSM Implementation Details }

Algorithm \ref{alg:adamsnsm} provides the pseudocode and implementation
details for the version of AdamSNSM with SVD subspace momentum and
heuristics subset-norm (as described in Section \ref{subsec:SN-Implementation})
used in our experiments.

\begin{algorithm}[tb]
   \caption{AdamSNSM with Subspace Momentum via Top-$k$ Singular Vectors from SVD}
   \label{alg:adamsnsm}
\begin{algorithmic}
   \STATE {\bfseries Input:} Learning rate $\eta$, rank $k$, update gap $G$, momentum parameters $\beta_1, \beta_2 \in (0,1)$, and stability parameter $\epsilon$
   \FOR{$t = 1, \dots, T$}
      \STATE Obtain stochastic gradient $g_t \in \mathbb{R}^{m \times n}$ \hfill \COMMENT{WLOG, assume $m \geq n$}
      \IF{$t \mod G = 0$}
         \STATE $U, S, V = \text{SVD}(g_t)$ \hfill \COMMENT{Compute singular value decomposition}
         \STATE $P = U[:, :k] \in \mathbb{R}^{m \times k}$ \hfill \COMMENT{Extract top $k$ singular vectors}
      \ENDIF
      \STATE $m = \beta_1 m + (1 - \beta_1) P^T g_t \in \mathbb{R}^{k \times n}$ \hfill \COMMENT{Update subspace momentum}
      \STATE $r = g_t - P P^T g_t$ \hfill \COMMENT{Compute orthogonal SGD component}
      \STATE $s = \text{sum}(g_t, \text{dim}=1) \in \mathbb{R}^n$ \hfill \COMMENT{Sum all columns for subset-norm heuristic}
      \STATE $v = \beta_2 v + (1 - \beta_2) s^2 \in \mathbb{R}^n$ \hfill \COMMENT{EMA of subset-norm}
      \STATE $x_t = x_{t-1} + \eta \frac{P m + r}{\sqrt{v} + \epsilon}$ \hfill \COMMENT{Update with subspace momentum and subset-norm step size}
   \ENDFOR
\end{algorithmic}
\end{algorithm}

\subsection{Measuring Memory Footprint of Optimizers}

In PyTorch, we can obtain the number of parameters in optimizer states
using the code in Listing \ref{lst:get_optimizer_state_size}.

\begin{lstlisting}[float, caption={PyTorch function to calculate optimizer state size}, label={lst:get_optimizer_state_size}]
def get_optimizer_state_size(optimizer) -> Tuple[int, Dict[str, int]]:
    total_state_size = 0
    state_size_breakdown = {}
    for group in optimizer.param_groups:
        for p in group['params']:
            state = optimizer.state[p]
            for state_key, state_value in state.items():
                if torch.is_tensor(state_value):
                    if state_value.numel() == 1:
                        # we do not count singleton
                        continue
                    total_state_size += state_value.numel()
                    if state_key not in state_size_breakdown:
                        state_size_breakdown[state_key] = 0
                    state_size_breakdown[state_key] += state_value.numel()
    return total_state_size, state_size_breakdown 
\end{lstlisting}

\subsection{Peak memory measurement during training for different optimizers}

We measure peak memory consumption directly via running nvidia-smi
in Figure \ref{fig:Peak-GPU-Memory} while training as oppose to controlled
measurement as in Table \ref{tab:Optimizer-states-memory}. Note that
these peak measurements incur additional memory from gradient computation
and algorithms' overhead. 

\begin{figure}

\begin{centering}
\includegraphics[width=1\textwidth]{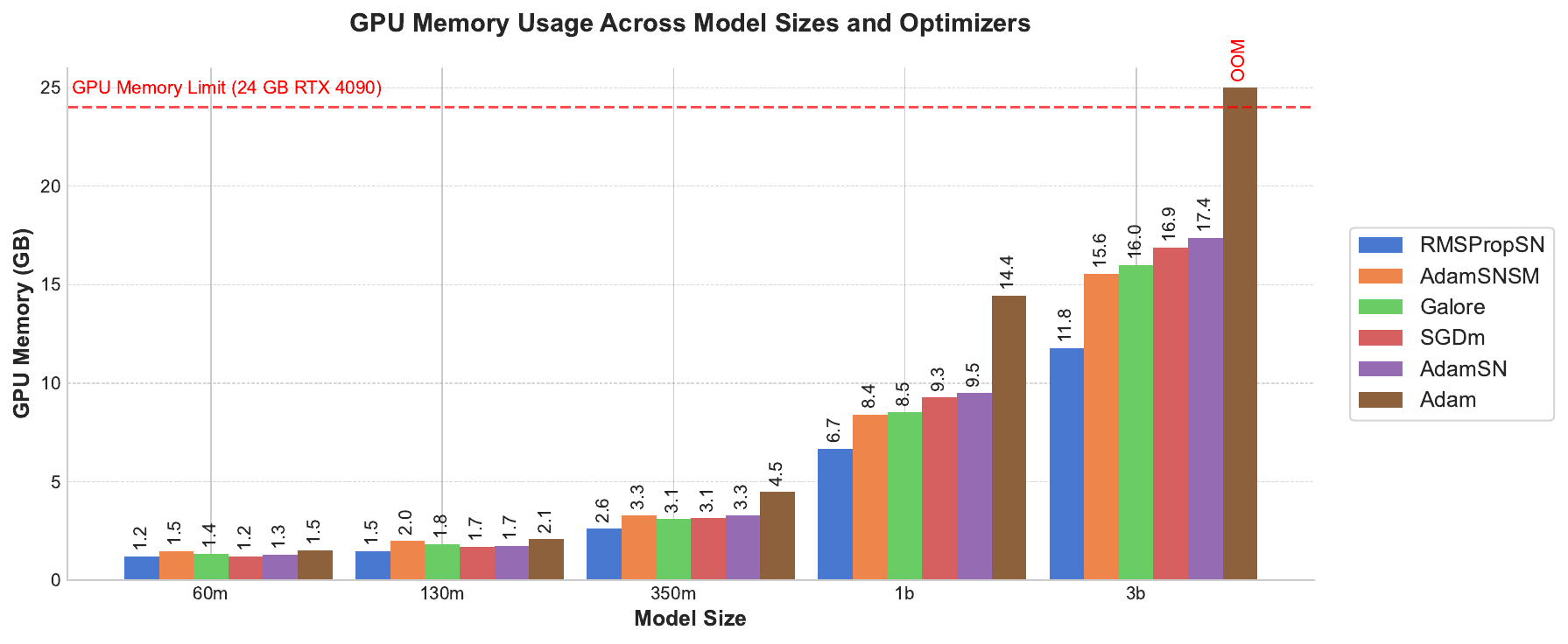}
\par\end{centering}
\caption{Peak GPU Memory Usage (Gb) for various model sizes, obtained with
batch size 1 and activation checkpointing to measure the optimizer
state footprint. \label{fig:Peak-GPU-Memory}}

\end{figure}

\section{Additional Experiments and Ablation Studies}\label{sec:additional-ablations}

\subsection{Additional Comparisons with Memory Efficient Optimizers for Pre-Training LLaMA Models}\label{subsec:additional-pretraining-comparison}
Table \ref{tab:additional-comparisons-pretrain} extends Table \ref{tab:pretraining-table} to compare against other recent methods. 

\begin{table}[ht]
    \centering
    \caption{Additional comparisons with other memory efficient optimizers}
    \label{tab:additional-comparisons-pretrain}
    \begin{tabular}{lrrrr}
        \toprule
        \textbf{Method} &
        \textbf{LLaMA 60M} &
        \textbf{LLaMA 130M} &
        \textbf{LLaMA 350M} &
        \textbf{LLaMA 1B}  \\
        \midrule

        AdamW            & 30.46 & 24.60 & 18.67 & 16.00 \\ \midrule
        AdamSNSM (\textbf{ours}) & 29.74 & 22.43 & 16.91 & 13.96 \\ \midrule
        GaLore    \cite{zhao2024galore}       & 34.73 & 25.31 & 20.51 & 16.76 \\
        FLORA   \cite{hao2024flora}         & 32.52 & --    & 23.69 & --    \\
        LoRA   \cite{hu2021lora}          & 34.99 & 33.92 & 25.58 & 19.21 \\
        ReLoRA  \cite{lialin2023relora}         & 37.04 & 29.37 & 29.08 & 18.33 \\
        \bottomrule
    \end{tabular}
\end{table}

\subsection{Wall-clock speedup and peak memory}
We provide the per iteration time, peak memory (via nvidia-smi), and time to Adam’s val perplexity after 100K steps for the 1B model for each method on a 2x4090 machine with the same setup as in Table \ref{tab:pretraining-table} (seq length 256, total batch size 512, micro batchsize 16) in Table \ref{tab:wallclock-mem}. 

\begin{table}[ht]
    \centering
    \scriptsize
    \caption{Per iteration time, peak memory (via nvidia-smi), and time to Adam’s val perplexity after 100K steps for the 1B model for each method on a 2x4090 machine with the same setup as in Table \ref{tab:pretraining-table} (seq length 256, total batch size 512, micro batchsize 16) in Table \ref{tab:wallclock-mem}}
    \label{tab:wallclock-mem}
    \begin{tabular}{llllll}
        \toprule
        & \textbf{Adam} & \textbf{AdamSNSM (Gap=5000)} & \textbf{AdamSNSM (Gap=200)} & \textbf{AdamSN} & \textbf{GaLore (Gap=200)} \\
        \midrule
        Time for 1K iters         & 7426 s   & 7465 s   & 7624 s   & 7399 s   & 7827 s   \\
        Time per iteration        & 7.43 s/it & 7.47 s/it & 7.62 s/it & 7.39 s/it & 7.83 s/it \\
        Time to perplexity $< 16$   & $\sim$206.4 hrs (100K iters) & $\sim$136.9 hrs ($<$66K iters) & $\sim$101.6 hrs ($<$48K iters) & $\sim$118.9 hrs ($<$58K iters) & $>$217 hrs ($>$100K iters) \\
        Peak mem                  & 21.554 GB/GPU & 16.642 GB/GPU & 16.642 GB/GPU & 19.193 GB/GPU & 18.187 GB/GPU \\
        \bottomrule
    \end{tabular}
\end{table}

\subsection{Vision Tasks}\label{subsec:vision-tasks}

    \textbf{Diffusion Transformers.} While our main focus is on large models that are more typical to language models where memory is often a bottleneck, vision models are also increasing in size. Hence, we conduct further evaluations using the DiT-L/2 model (458M)\footnote{\url{https://github.com/facebookresearch/DiT}} on a setup with batch size 2048, image size 64, and 8×A6000 GPUs. We compared our method (SNSM) with Adam. As shown in Table \ref{tab:ditl2-fid}, SNSM outperforms Adam in FID similarly to LLM tasks.

\begin{table}[ht]
    \centering
    \footnotesize
        \caption{FID scores over training iterations for the DiT-L/2 model (458M parameters) on $64\times64$ images.}
    \begin{tabular}{lccccc}
        \toprule
        \textbf{FID / Iter} & \textbf{200k} & \textbf{300k} & \textbf{400k} & \textbf{500k} & \textbf{700k} \\
        \midrule
        Adam      & 56.69 & 56.63 & 40.69 & 41.15 & 39.61 \\
        AdamSNSM  & 66.76 & 66.31 & 34.05 & 32.31 & 32.26 \\
        \bottomrule
    \end{tabular}

    \label{tab:ditl2-fid}
\end{table}

\textbf{CIFAR10 and CIFAR100.} We further evaluate Adam, AdamSN, and AdamSNSM (rank 64 and no update gap) by training \texttt{vit\_base\_patch16\_224}\footnote{\url{https://huggingface.co/timm/vit_base_patch16_224.augreg2_in21k_ft_in1k}} \cite{dosovitskiy2020imagevit} (around 85M params) from the \texttt{timm} library\footnote{\url{https://timm.fast.ai/}} on the CIFAR10 and CIFAR100 \cite{krizhevsky2009learningcifar} datasets for 10 epochs with a batch size of 64 and weight decay 0 on a 2x4090 machine. We tune the lr across \{1e-3, 5e-3, 1e-4, 5e-4, 5e-5, 1e-5\} grid for all methods. The results are shown in Table \ref{tab:cifar10-100}.

\begin{table}[h]
\centering
\caption{Performance Comparison of Optimizers on Vision Transformers for CIFAR10 and CIFAR100}
\label{tab:cifar10-100}
\begin{tabular}{lccc}
\toprule
\textbf{Best val accuracy (10 epochs)} & \textbf{Adam} & \textbf{AdamSN} & \textbf{AdamSNSM (r=64, g=1000)} \\
\midrule
CIFAR100 & 43.30\% & 45.20\% & 45.60\% \\
CIFAR10  & 69.02\% & 69.18\% & 71.21\% \\
\midrule
\textbf{Peak Mem (bs 64)} & 9.288GB & 8.886GB & 8.878GB \\
\bottomrule
\end{tabular}
\end{table}

These preliminary experiments show promising results for the application of our methods to vision tasks where models are becoming larger. 

\subsection{Fine-tuning on GLUE Tasks}\label{subsec:glue-finetuning}

Table \ref{tab:Fine-tuning-GLUE} presents results for fine-tuning
on GLUE dataset for various methods. The SN step size maintains good
performance while reducing the memory footprint.

\begin{table}
\centering{}\caption{Performance metrics across GLUE tasks. QQP, RTE, SST-2, MRPC, STSB,
QNLI, and MNLI use accuracy as the metric, while CoLA uses the Matthews
correlation coefficient. The \textbf{best} and \uline{runner-up}
results for each task and the average score are highlighted.\label{tab:Fine-tuning-GLUE}}
\begin{tabular}{lrrrrrrrrr}
\toprule 
\textbf{Method} & \textbf{QQP} & \textbf{RTE} & \textbf{SST2} & \textbf{MRPC} & \textbf{STSB} & \textbf{QNLI} & \textbf{MNLI} & \textbf{COLA} & \textbf{Avg}\\
\midrule 
Adam & \textbf{92.0} & 77.9 & 94.9 & 89.2 & 90.5 & \uline{93.0} & \textbf{87.6} & \textbf{65.4} & 86.3\\
GaLore ($r=4$) & 90.9 & 79.4 & \textbf{95.2} & 88.7 & \textbf{90.8} & 92.4 & 86.9 & 61.9 & 85.8\\
RMSProp & \uline{91.9} & 79.4 & \textbf{95.2} & \textbf{91.4} & 90.3 & 92.8 & \textbf{87.6} & \uline{65.1} & \textbf{86.7}\\
\midrule 
RMSPropSN & \uline{91.9} & \textbf{80.1} & \uline{95.1} & 90.0 & \uline{90.7} & \textbf{93.1} & \uline{87.5} & 63.8 & \uline{86.5}\\
AdamSN & 91.2 & 74.4 & 94.5 & 89.5 & 90.4 & 92.0 & 86.7 & 64.4 & 85.4\\
\bottomrule
\end{tabular}
\end{table}

\subsection{AdaGrad, AdaGrad-Norm, and AdaGrad-Subset-Norm}

We examine the subset-norm step size for AdaGrad in Figure \ref{fig:adagrad-experiment}.
We again see that subset-norm is slightly better than the full coordinate
version while using a lot less memory. This is consistent with our
observations for Adam and RMSProp when we replace the standard coordinate-wise
step size with the subset-norm adaptive step size.

\begin{figure}[tb]
\includegraphics[width=0.49\textwidth]{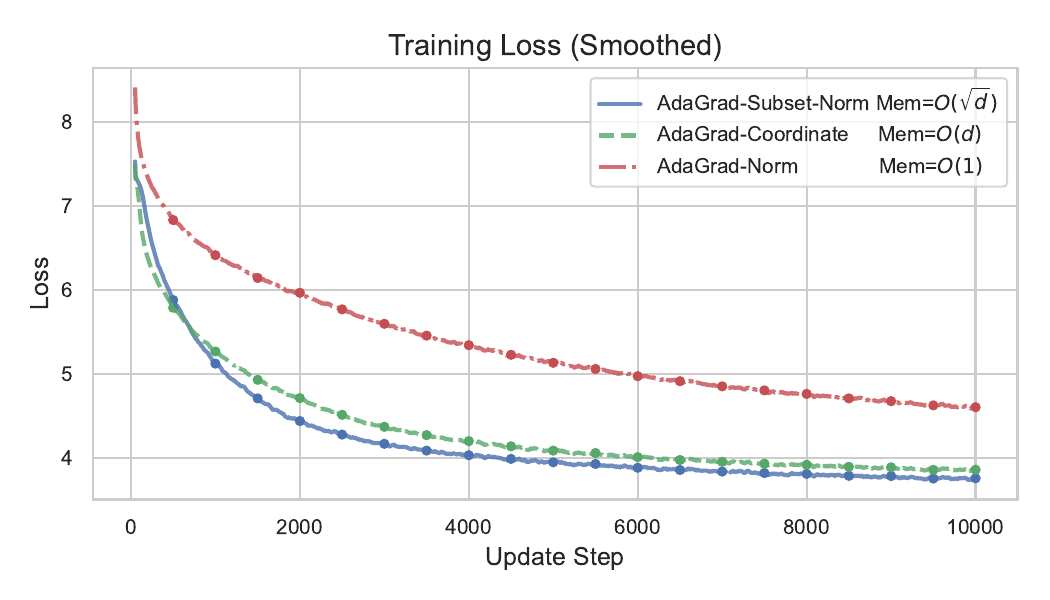}
\includegraphics[width=0.49\textwidth]{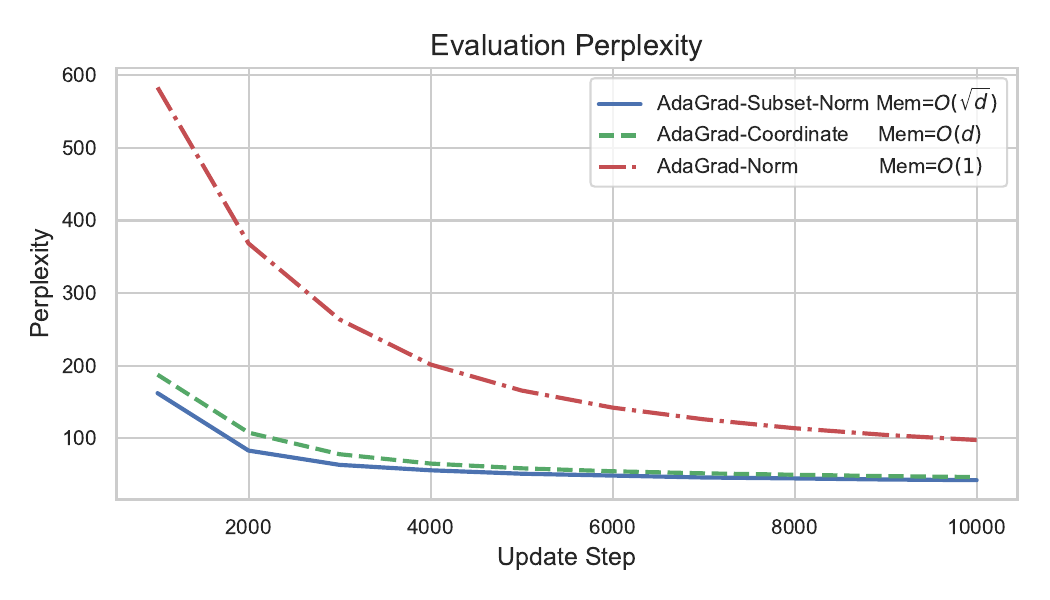}\caption{\label{fig:adagrad-experiment} Pretraining LLaMA 60M on the C4 dataset
for AdaGrad variants. Memory consumption estimate as a function of
parameter count $d$ is shown in the legend.}
\end{figure}

% \subsubsection{AdaGradSNSM's performance}

% We observe that AdaGradSNSM attains similar performance to AdamSNSM
% in pre-training tasks. We present additional results for AdaGradSNSM
% (same hyperparameters as AdamSNSM) in Table \ref{tab:adagradsnsm}. While vanilla AdaGrad typically has a noticeable performance gap to Adam, the addition of SN and SM closes this gap. An advantage of AdaGrad over Adam is that it is often simpler to analyze convergence of AdaGrad due to the monotonically decreasing step size.   

% \begin{table}[!h]
% \centering{}{\footnotesize{}\caption{Additional results for AdaGradSNSM on pretraining LLaMA models of
% different sizes on the C4 dataset.\label{tab:adagradsnsm}}
% }%
% \begin{tabular}{lrrrrrr}
% \toprule 
% \multirow{2}{*}{Methods} & \multicolumn{2}{c}{\textbf{\uline{60M}}\uline{ (1.38B TKs)}} & \multicolumn{2}{c}{\textbf{\uline{130M}}\uline{ (2.62B TKs)}} & \multicolumn{2}{c}{\textbf{\uline{350M}}\uline{ (7.86B TKs)}}\\
%  & Perpl. & LR & Perpl. & LR & Perpl. & LR\\
% \midrule 
% \textbf{AdamSNSM} & \textbf{29.74} & 0.05 & \textbf{22.43} & 0.05 & \textbf{16.91} & 0.05\\
% \textbf{AdaGradSNSM} & 29.81 & 1.0 & \textbf{22.43} & 1.0 & 16.99 & 1.0\\
% \bottomrule
% \end{tabular}
% \end{table}
\subsection{Additional Subset-Size Experiments for 130M model}
We provide additional subset-size experiments similar to the ones in Section \ref{subsec:Subset-size-ablation} for LLaMA 130M in Figure \ref{fig:AdamSN-Subset-size-ablation-130M}. 

\subsection{Subspace-Momentum Rank and Gap Ablations}\label{subsec:SM-rank-gap-ablation}

\begin{figure}[t]
\begin{minipage}[t]{0.48\columnwidth}%
\begin{center}
\includegraphics[width=1\textwidth]{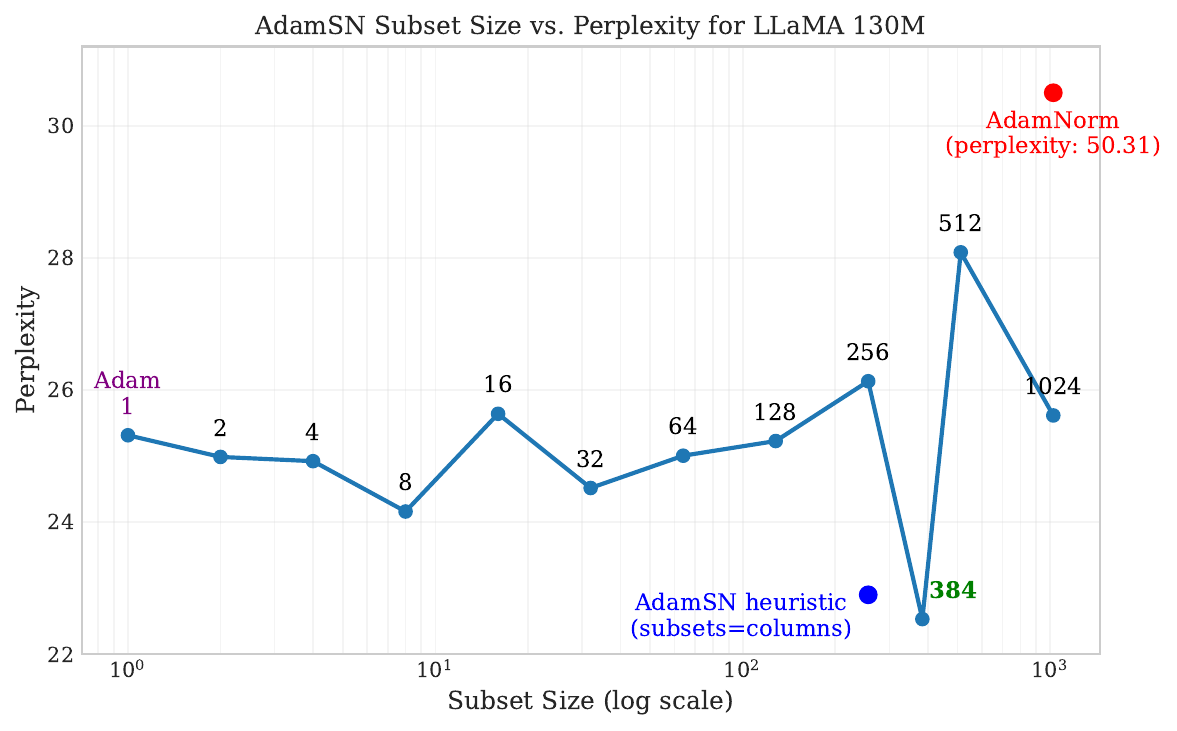}
\par\end{center}
\caption{Subset size ablation for AdamSN on LLaMA 130M trained for 2.62B tokens
(batch size of 512 of max length 256 for 20,000 steps). The higher
the subset size, the smaller the memory footprint of the second moment
optimizer state. \label{fig:AdamSN-Subset-size-ablation-130M}}
\end{minipage}\hfill{}%
\begin{minipage}[t]{0.5\columnwidth}%
\begin{center}
\includegraphics[width=1\textwidth]{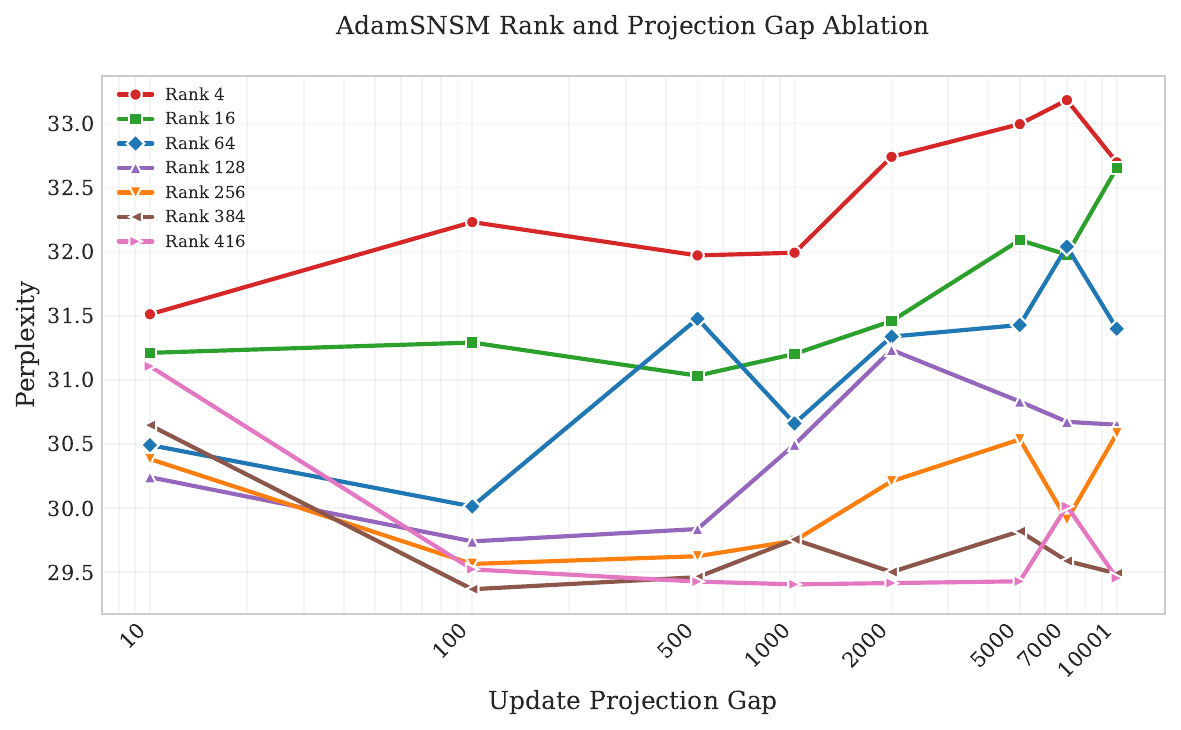}
\par\end{center}
\caption{Rank and gap ablation for AdamSNSM on LLaMA 60M for 10,000 steps.
The lower the rank, the less memory consumption used by the momentum
state. The higher the projection gap, the less SVD computation is
performed which is cheaper. \label{fig:Rank-and-gap-ablation}}
\end{minipage}
\end{figure}

\paragraph{Rank and gap ablations.}

We examine the impact of varying rank and update gap of subspace momentum,
similarly to \cite{zhao2024galore}, in Figure \ref{fig:Rank-and-gap-ablation}.
There, we see that the higher the rank, the better the results. For
the update gap, it seems like there is an optimal choice. However, due to the SVD computation, a larger gap will
be cheaper than a more frequent gap. 

\subsection{Subspace-Momentum Projection Choice Ablations\label{subsec:Subspace-Momentum-ablations}}

\paragraph{Projection types.}

Table \ref{tab:Different-Projection-Ablations} tests different choices for projection in SM discussed in Section
\ref{par:Projection-selection.}.
Note that for memory storage, SVD, Random Projection via dense Gaussian projection (Gaussian), and Approximated-SVD (Appx-SVD)
need to store the $r\times n$ projection matrix (unless we recompute at every step).
The remaining methods only need to store the indices for sampling and/or the
random seed to regenerate any random choices. 

Note that the choice of the projection is important as some projections are more computationally and memory expensive than other, although trading other qualities for given the cost. Simple projections like selecting a subset of coordinates for momentum (Subset-Momentum) are not only faster but enables simple distributed training like FSDP unlike more complex subspace selection mechanism that requires additional priors about the parameters (shape, low-rank, etc.) that might not always satisfied. 

\paragraph{Online $k$-PCA and Streaming $k$-PCA for Up-to-date Subspace.} Computing subspace from stochastic gradient snapshots can be noisy. Recently, \cite{liang2024opca} proposes a formulation of online-PCA to handle the problem of staled top-$k$ components as the stochastic gradients evolve. We test this algorithm in the OPCA column. Another natural algorithm to ensure the top-$k$ components stay up-to-date is Oja's algorithm for streaming $k$-PCA  \cite{huang2021streamingojakpca}. We also test this algorithm in Table \ref{tab:Different-Projection-Ablations}. While we can maintain up to date projection using these schemes, more frequent updates suffer from the same issue of transferring optimization statistics from one subspace to another. We only test for not resetting the statistics in this setting and leave additional investigation for future works. Furthermore, these schemes are more expensive computationally due to additional computation requirement at every step. OPCA further uses Adam for inner optimization which incurs additional memory.

\begin{table*}
\caption{Different projections selection for Subspace-Momentum and validation
perplexity. All methods are evaluated on LLaMA 60M with rank $128/512$
and a projection update gap of $200$. Time and space rows denote time and space to compute and store the projection.\label{tab:Different-Projection-Ablations}}
\scriptsize
\centering
\begin{tabular}{lrrrrrr|rr} \toprule
AdamSNSM's projection & SVD & Approx-SVD & Gaussian & SRHT & Top-$k$ & Random-rows   & OPCA & Oja\\ \midrule
Time (for $m\times n$) & $O(mn^{2})$ & $O(mn\log k+kn^{2})$ & $O(kn)$ & $O(\max(m,n))$ & $O(mn)$ & $O(k)$ & $O(kn)$ & $O(kn)$ \\
Space (for rank $k$) & $O(kn)$ & $O(kn)$ & $O(kn)$ & $O(k)$ & $O(k)$ & $O(k)$ & $O(kn)$& $O(kn)$\\
\midrule
Validation Perplexity & 29.74 & 31.51 & 42.48 & 33.33 & 31.42 & 33.17 & 29.63 & 30.69 \\ \bottomrule
\end{tabular}
\end{table*}

\subsection{Step Sizes and Momentum Choices Full Ablations\label{subsec:Step-sizes-and-momentum}}

We investigate various combinations of momentum and adaptive step
size approaches in Table \ref{tab:stepsize-momentum-ablations}. For
adaptive methods, we compare EMA, which uses exponential moving average
to accumulate the second moment ($v_{t}^{2}=\beta v_{t-1}^{2}+(1-\beta)g_{t}^{2}$),
with AdaGrad's cumulative accumulation approach ($b_{t}^{2}=b_{t-1}^{2}+g_{t}^{2}$).
Methods with the SN suffix utilize subset norm for parameter grouping,
contrasting with per-coordinate approaches that are standard.
While EMA momentum follows the standard momentum implementation, subspace
momentum employs a reduced rank approximation with rank 128 for this
model size.

\begin{table}[ht]
\centering{}\caption{Different combinations of momentum (columns) and adaptive step-size (rows)
and the effect of the learning rate schedule on each combination (cosine learning
rate decay schedule with warmup ``coslr'' or  constant learning
rate ``lr.'').
Memory footprint for each adaptive step size and/or
momentum are shown. Green and red highlight runs with perplexity below
30 and above 50 respectively. 
\label{tab:stepsize-momentum-ablations}}
\begin{tabular}{llll}
\toprule 
\multicolumn{1}{l}{Final eval perplexity (lr)} & \textbf{No momentum} & \textbf{EMA momentum} & \textbf{Subspace momentum}\\
\multicolumn{1}{c}{LLaMA 60M for 1.31B tokens} & Mem = 0 & Mem = $m\cdot n$ & Mem = max$(m,n)\cdot\text{rank}$\\
\midrule 
 & \uline{SGD} & \uline{SGDm} & \uline{SGD+SM}\\
\textbf{No Adaptive Step-size} & $\red{86.60}$ (coslr=1e-3) & $\red{55.76}$ (coslr=1e-3) & $\red{89.97}$ (coslr=1e-3)\\
Mem = 0 & $\red{100.04}$ (lr=1.0) & $\red{56.07}$ (lr=1.0) & $\red{213.21}$ (lr=5e-4)\\
\midrule
 & \uline{RMSProp} & \uline{Adam} & \uline{AdamSM}\\
\textbf{EMA Coordinate} & $35.01$ (coslr=1e-3) & $30.46$ (coslr=5e-3) & $32.34$ (coslr=1e-3)\\
Mem = $m\cdot n$ & $36.46$ (lr=5e-4) & $33.47$ (lr=1e-2) & $32.25$ (lr=5e-4)\\
\midrule
 & \uline{RMSPropSN} & \uline{AdamSN} & \uline{AdamSNSM}\\
\textbf{EMA Subset-Norm} & $34.86$ (coslr=1e-2) & $\green{29.75}$ (coslr=5e-2) & $\green{29.74}$ (coslr=5e-2)\\
Mem = $\max(m,n)$ & $34.57$ (lr=1e-2) & $33.69$ (lr=1e-2) & $32.49$ (lr=1e-2)\\
\midrule
 & \uline{AdaGrad} & \uline{AdaGradm} & \uline{AdaGradSM}\\
\textbf{AdaGrad Coordinate} & $37.12$ (coslr=5e-3) & $31.48$ (coslr=5e-2) & $30.99$ (coslr=5e-2)\\
Mem = $m\cdot n$ & $46.47$ (lr=5e-4) & $43.99$ (lr=1e-2) & $41.32$ (lr=5e-4)\\
\midrule
 & \uline{AdaGradSN} & \uline{AdaGradSNm} & \uline{AdaGradSNSM}\\
\textbf{AdaGrad Subset-Norm} & $33.19$ (coslr=5e-3) & $\green{29.73}$ (coslr=5e-3) & $\green{29.81}$ (coslr=5e-3)\\
Mem = $\max(m,n)$ & $41.23$ (lr=0.1) & $44.98$ (lr=0.1) & $40.11$ (lr=0.1)\\
\bottomrule
\end{tabular}
\end{table}

\paragraph{Discussions.}
From Table \ref{tab:stepsize-momentum-ablations},  Subset
norm (SN) step sizes consistently outperform coordinate-wise implementations
while requiring less memory.
Adaptivity proves crucial for optimization effectiveness, where the first row without adaptivity perform consistently poorly.
The addition of momentum is beneficial in all configurations while
SM is more beneficial for adaptive step sizes. The impact of learning
rate scheduling is also evident across configurations, with cosine
decay consistently outperforming constant learning rates. Notably, we observe varying degrees of
learning rate sensitivity: adaptive methods demonstrate greater robustness
to learning rate selection, while non-adaptive methods require more
precise tuning.

\subsection{Gradient Clipping\label{subsec:Gradient-clipping.}}

Gradient clipping is standard in training LLMs for many open source
models like LLaMA, DeepSeek, OPT, etc. \cite{deepseekai2024deepseekv2strongeconomicalefficient,touvron2023llama,workshop2022bloom,zhang2022opt,chowdhery2023palm,ding2023longnet}.
Clipping has a strong connection to stochastic gradient noise being
\emph{heavy-tailed} \cite{zhang2019gradient} and many theoretical results
have been shown to suggest some form of clipping is beneficial when
the noise could follow a heavy-tail distribution\cite{cutkosky2021high,gorbunov2020stochastic,li2022high,nguyen2023high,nguyen2023improved}.
We present the results with clipping equal to 1.0 for each method in
Table \ref{tab:clipping-ablations}.

\begin{table}
\begin{centering}
\begin{tabular}{lrrrr}
\toprule
Method  & 60M (no clipping)  & 60M (with clipping)  & 130M (no clipping)  & 130M (with clipping)\\
\midrule
Adam  & 30.58  & 30.46  & 25.07  & 25.07 \\
AdamSN  & \textbf{30.06}  & \textbf{29.75}  & 23.54  & 22.89\\
GaLore  & 34.91  & 34.73  & 25.43  & 25.31\\
\bottomrule
\end{tabular}
\par\end{centering}
\caption{Pre-training LLMs ablation experiments for gradient clipping. We compare
validation perplexity between LLaMA 60M and 130M with and without
clipping. We use the same hyperparameters as in Section \ref{pretraining-setup}
but just add clipping. \label{tab:clipping-ablations}.}
\end{table}

In Table \ref{tab:clipping-ablations}, we see that gradient clipping
indeed helps most of the methods achieve slightly better perplexity. In our
experiments, we notice that adding some form of gradient clipping
produces more stable training. 

\subsection{Batch Sizes and Random Seeds \label{subsec:Batch-size-ablation}}

\paragraph{Fixed number of steps.} We measure the impact of different batch sizes on pre-training LLaMA
60M for 10,000 steps in Table \ref{tab:Batch-size-ablation}.\footnote{This reduces the amount of total tokens trained. However, we only
compare optimizers against one another. To compare the same optimizer
against different batch sizes, one should train for the same amount
of tokens. } We use the same configuration as in other experiments. Typically, smaller
batch sizes require smaller learning rates, but curiously, AdamSNSM
seems to be stable with the choice of learning rates. Even more interestingly,
AdamSNSM's final performance seems to be affected less by the smaller
batch size as opposed to other methods, especially GaLore. 

\begin{table}
\caption{Batch size ablation for various optimizers along with optimal learning
rate. \label{tab:Batch-size-ablation}}

\begin{centering}
\begin{tabular}{lrrrrrrrr}
\toprule 
\multirow{2}{*}{Batch size} & \multicolumn{2}{c}{\textbf{Adam}} & \multicolumn{2}{c}{\textbf{GaLore}} & \multicolumn{2}{c}{\textbf{AdamSN}} & \multicolumn{2}{c}{\textbf{AdamSNSM}}\\
 & Perpl. & {\small{}LR} & Perpl. & {\small{}LR} & Perpl. & {\small{}LR} & Perpl. & {\small{}LR}\\
\midrule 
1024 & 27.94 & {\small{}0.005} & 32.75 & {\small{}0.01} & \textbf{27.68} & {\small{}0.05} & 28.02 & {\small{}0.05}\\
512 & 30.46 & {\small{}0.005} & 34.73 & {\small{}0.01} & 29.75 & {\small{}0.05} & \textbf{29.74} & {\small{}0.05}\\
256 & 36.65 & {\small{}0.001} & 44.71 & {\small{}0.001} & 37.03 & {\small{}0.001} & \textbf{32.82} & {\small{}0.05}\\
128 & 41.72 & {\small{}0.001} & 49.75 & {\small{}0.001} & 42.04 & {\small{}0.001} & \textbf{36.82} & {\small{}0.05}\\
\bottomrule
\end{tabular}
\par\end{centering}
\end{table}

\paragraph{Fixed data quantity.}
In the previous section, we compare the performances on different batch sizes fixing the same number of steps. In this section, we fix the amount of data to 1.3B tokens for pre-training LLaMA 60M. Hence, adjusting the batch size would also adjust the number of steps. Table \ref{tab:batchsize-ablation-fixed-data} contains the result where SNSM shows consistently better performance than Adam across different batch sizes. 
%Notably, this 

\paragraph{Random seeds.} Throughout our experiments, we fix the random seed for all runs within a same table. In Table \ref{tab:batchsize-ablation-fixed-data}, we investigate the effects of random seeds by running each batch size on 3 random seeds and report the mean and standard deviation. We see that SNSM has better variance than Adam for many batch sizes overall. We also examine the random variation on the 130M model in Table \ref{tab:seed130m}.
\begin{table}[ht]
\footnotesize
\centering
\caption{Mean and standard deviation (in parentheses) evaluation perplexities of Adam and AdamSNSM optimizers when pretraining LLaMA 60M for 1.3B tokens over 3 random seeds. SNSM rank = 128 and gap = 200. Learning rates were tuned over a grid for each batch size.}
\label{tab:batchsize-ablation-fixed-data}
% \begin{tabular}{lrrrrrrrrr}
% \toprule
% Batch Size & 1024       & 512        & 256        & 128        & 64         & 32         & 16         & 8          & 4           \\ \midrule
% Adam       & 30.63      & 30.14      & 31.00      & 34.38      & 36.16      & 38.78      & 42.86      & 48.77      & 57.01       \\
% AdamSNSM   & 31.21      & 29.93      & 29.90      & 30.53      & 32.18      & 34.43      & 36.82      & 39.37       & 44.36        \\ \bottomrule
% \end{tabular}

\begin{tabular}{lccccccccc}
\toprule
Batch size &          1024 &          512  &          256  &          128  &          64   &          32   &          16   &          8    &          4    \\
\midrule
Adam      &  31.80 {\tiny (1.87)} &  30.46 {\tiny (0.29)} &  32.11 {\tiny (1.32)} &  34.57 {\tiny (0.16)} &  36.34 {\tiny (0.16)} &  38.91 {\tiny (0.12)} &  43.12 {\tiny (0.26)} &  48.88 {\tiny (0.17)} &  57.28 {\tiny (0.80)} \\
AdamwSN    &  30.11 {\tiny (0.15)} &  29.81 {\tiny (0.12)} &  30.32 {\tiny (0.07)} &  31.30 {\tiny (0.02)} &  32.72 {\tiny (0.11)} &  35.38 {\tiny (0.11)} &  40.46 {\tiny (0.97)} &  45.81 {\tiny (0.11)} &  51.01 {\tiny (0.25)} \\
AdamSNSM &  31.39 {\tiny (0.17)} &  29.93 {\tiny (0.07)} &  30.08 {\tiny (0.19)} &  30.57 {\tiny (0.08)} &  32.35 {\tiny (0.14)} &  34.51 {\tiny (0.14)} &  37.05 {\tiny (0.20)} &  39.39 {\tiny (0.02)} &  44.27 {\tiny (0.10)} \\
\bottomrule
\end{tabular}
\end{table}

\begin{table}[th]
    \centering
    \begin{tabular}{lrrrr}
        \toprule
        & Adam & AdamSN & Adagrad & AdaGradSN \\
        \midrule
        Mean    & 24.69  & 22.98  & 25.95  & 24.57  \\
        Stdev & 0.07 & 0.07 & 0.16 & 0.37 \\
        \bottomrule
    \end{tabular}
    \caption{Mean and standard deviation across 3 runs for different optimizers on pretraining LLaMA 130M task.}
    \label{tab:seed130m}
\end{table}
% \section{Random Seed Ablations}
% In this section, we compare the sensitivity of different methods
\section{Coordinate-Noise Density }

This section further examine the coordinate-noise density model by
providing additional empirical results across the train progress.
We also provide the full derivation for the convergence of AdaGrad
algorithms under various noise density rate. 

\subsection{Empirical Validation\label{subsec:Empirical-validation-noise-sparse} }
\paragraph{Coordinate-noise density experiments.}

\begin{figure}
    \centering
    \begin{tikzpicture}
  \begin{axis}[
      ylabel={Fraction of coordinates with noise},
      xlabel={Noise density rate $\beta$},
      legend pos=north west,
      domain=1e-6:1,
      samples=200,
      width=7cm,
      height=5cm,
      grid=both,
      xtick distance=0.1,
      ytick distance=0.2,
  ]
    % Here, the independent variable u = x/d, and the function is defined as:
    % log(x)/log(d) = log(60000000*u)/log(60000000)
    \addplot [
      blue,
      thick,
      domain=1e-6:1,
      samples=200,
      variable=\t
    ] ({ln(60000000*\t)/ln(60000000)}, {\t});\addlegendentry{\(d=60\)M}
    % Line for d = 1e9
    \addplot [
      red,
      thick,
      domain=1e-6:1,
      samples=200,
      variable=\t
    ] ({ln(1e9*\t)/ln(1e9)}, {\t});
    \addlegendentry{\(d=1\)B}
  \end{axis}
\end{tikzpicture}
    \caption{Fraction of coordinates with noise over noise density rate.}
    \label{fig:coordinate-noise-density-over-fraction-of-noisy-coords}
\end{figure}
To validate the coordinate-noise density model, we sample stochastic
gradients repeatedly (via different mini batches) to obtain a sample
variance estimate for the true sub-gaussian parameter $\sigma_{i}$
for each coordinate: if $g_{1},\dots,g_{n}\in\R^{d}$
are independent stochastic gradient samples, we can calculate the
sample variance $S^{2}$ as an estimator for $\sigma^{2}$ as $S^{2}=\frac{1}{n-1}\sum_{i=1}^{n}\left(g_{i}-\bar{g}\right)^{2},$
where $\bar{g}=\frac{1}{n}\sum_{i=1}^{n}g_{i}$ is the sample mean.
We pick $n=200$ samples (with batch size equals 128) for estimating
coordinate-noise on LLaMA 60M across various steps during the training
process.
Figure \ref{fig:Aggragated-noise-density} shows the aggregated noise
distribution across \emph{all} parameters for LLaMA 60M after 100 training
steps. There, the noise is quite low for the vast
majority of coordinates except for some outliers. While the noise
seems sparse in aggragate, a more fine-grained analysis, presented
in Figure \ref{fig:noise-density-10}, shows that noises are dense
per parameter, except for the $Q$ and $K$ attention projections
in the deeper layers. Figures \ref{fig:Normalized-noise-density-0}
to \ref{fig:Noise-density-9999} in Appendix \ref{subsec:Empirical-validation-noise-sparse}
present more noise density rates across various parameters throughout
different points of the training progress.

\begin{figure}[ht]
\centering{}%
\begin{center}
\includegraphics[width=0.65\columnwidth]{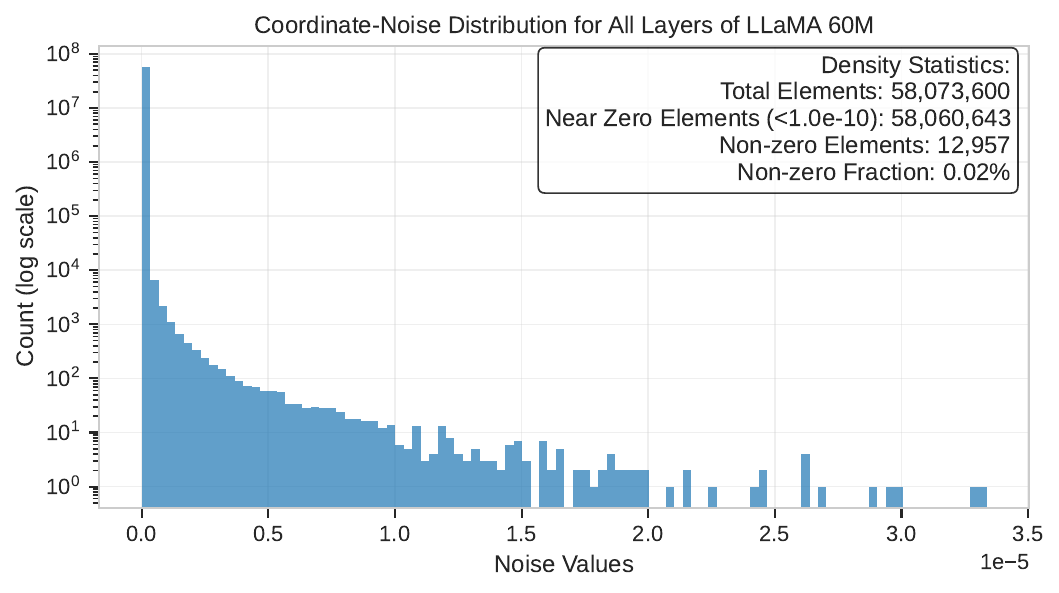}
\par\end{center}
\caption{Aggregated noise distribution across \emph{all} parameters after 100 steps of training. \label{fig:Aggragated-noise-density}}
\end{figure}

Figure \ref{fig:Normalized-noise-density-0} to \ref{fig:Normalized-noise-density-5000}
show the normalized noise density ratio for different parameters of
LLaMA 60M as described in Section \ref{subsec:Coordinate-noise-sparsity-and}.
The noise patterns show a clear layer-dependent structure, where early
layers (like layer 0) maintain consistently high density (close to
1.0) throughout training, while deeper layers start very sparse and
gradually become denser as training progresses. Notably, the embedding
layer shows an opposite trend, starting relatively dense and becoming
increasingly sparse by step 5000, suggesting different dynamics for
embedding updates compared to attention layers. The middle layers
show an interesting transition pattern, starting sparse but rapidly
becoming dense after about 1000 steps, indicating a potential critical
phase in training where these layers become more actively involved
in learning.

\begin{figure}
\begin{centering}
\includegraphics[width=0.99\textwidth]{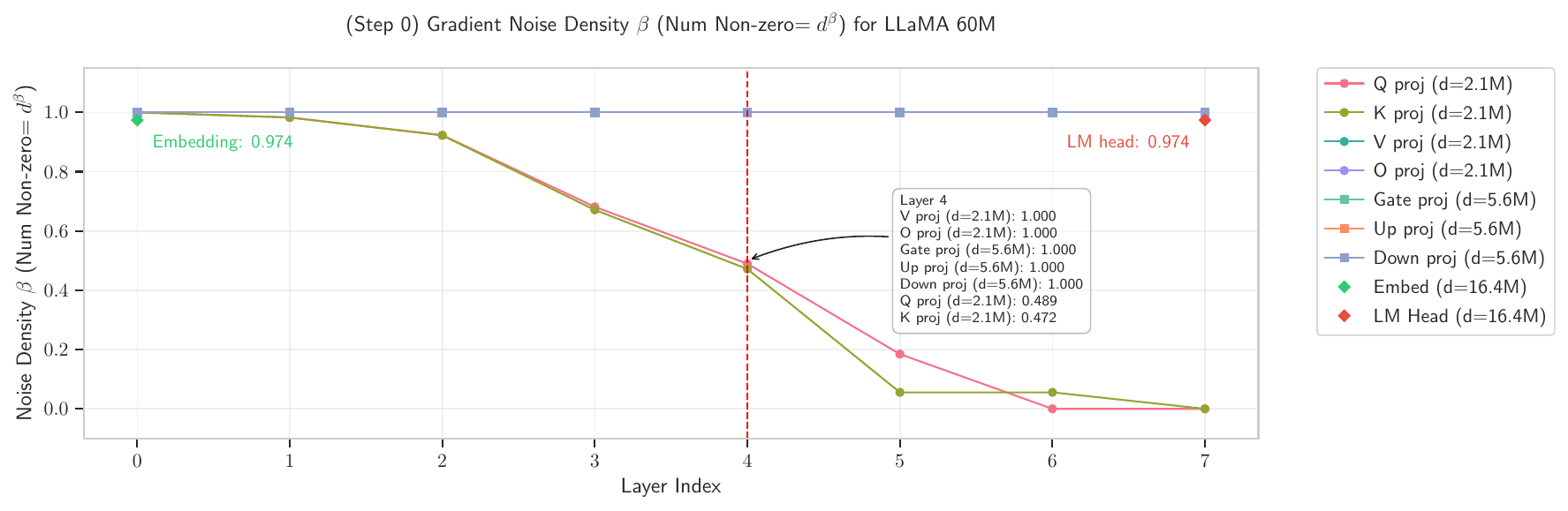}
\par\end{centering}
\caption{Noise density for different parameters of LLaMA 60M at Step 0.\label{fig:Normalized-noise-density-0}}
\end{figure}
\begin{figure}
\begin{centering}
\includegraphics[width=0.99\textwidth]{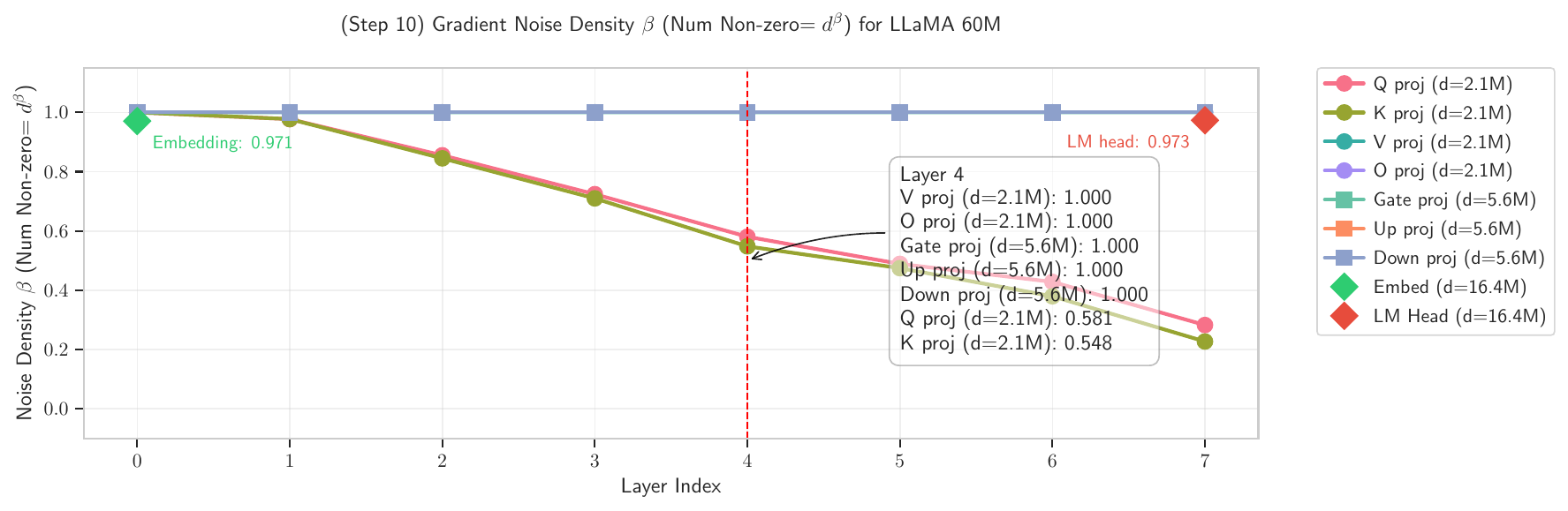}
\par\end{centering}
\caption{Noise density for different parameters of LLaMA 60M at Step 10.\label{fig:Noise-density-10}}
\end{figure}
\begin{figure}
\begin{centering}
\includegraphics[width=0.99\textwidth]{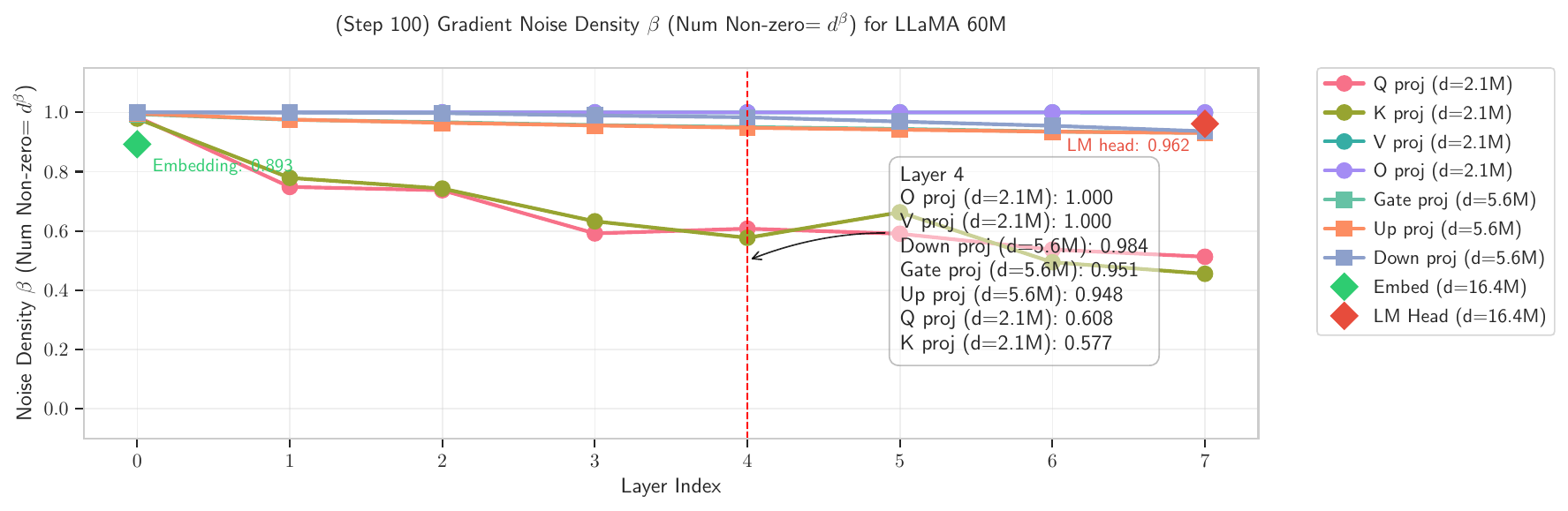}
\par\end{centering}
\caption{Noise density for different parameters of LLaMA 60M at Step 100.}
\end{figure}
\begin{figure}
\begin{centering}
\includegraphics[width=0.99\textwidth]{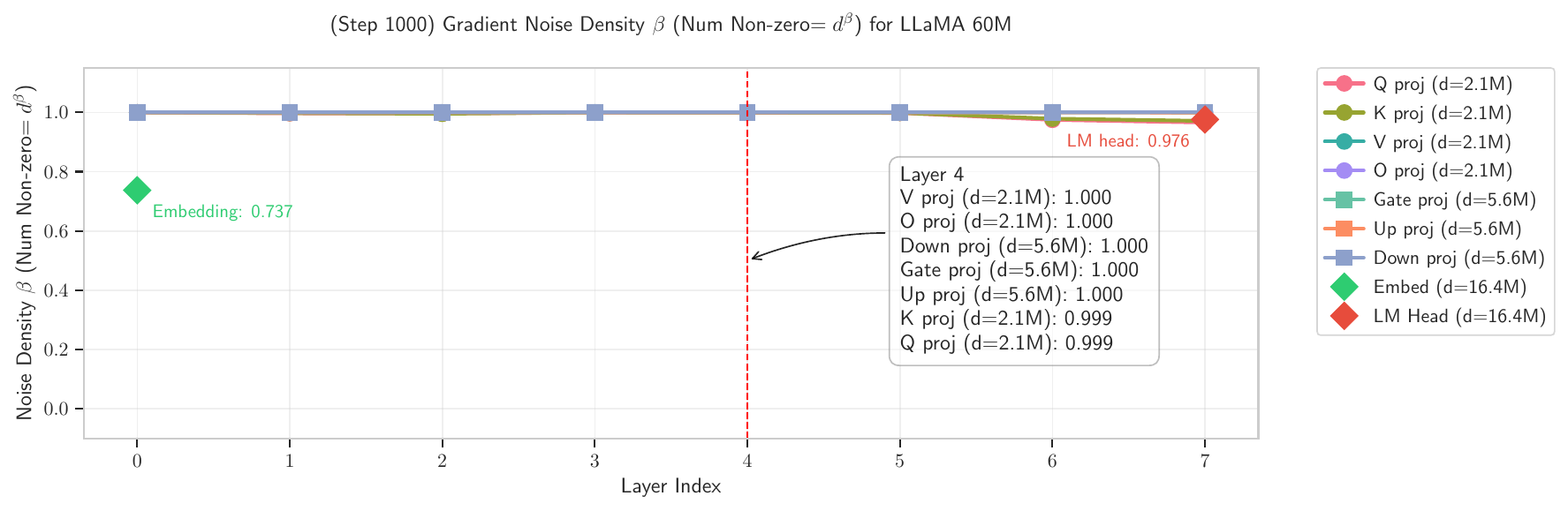}
\par\end{centering}
\caption{Noise density for different parameters of LLaMA 60M at Step 1000.}
\end{figure}
\begin{figure}
\begin{centering}
\includegraphics[width=0.99\textwidth]{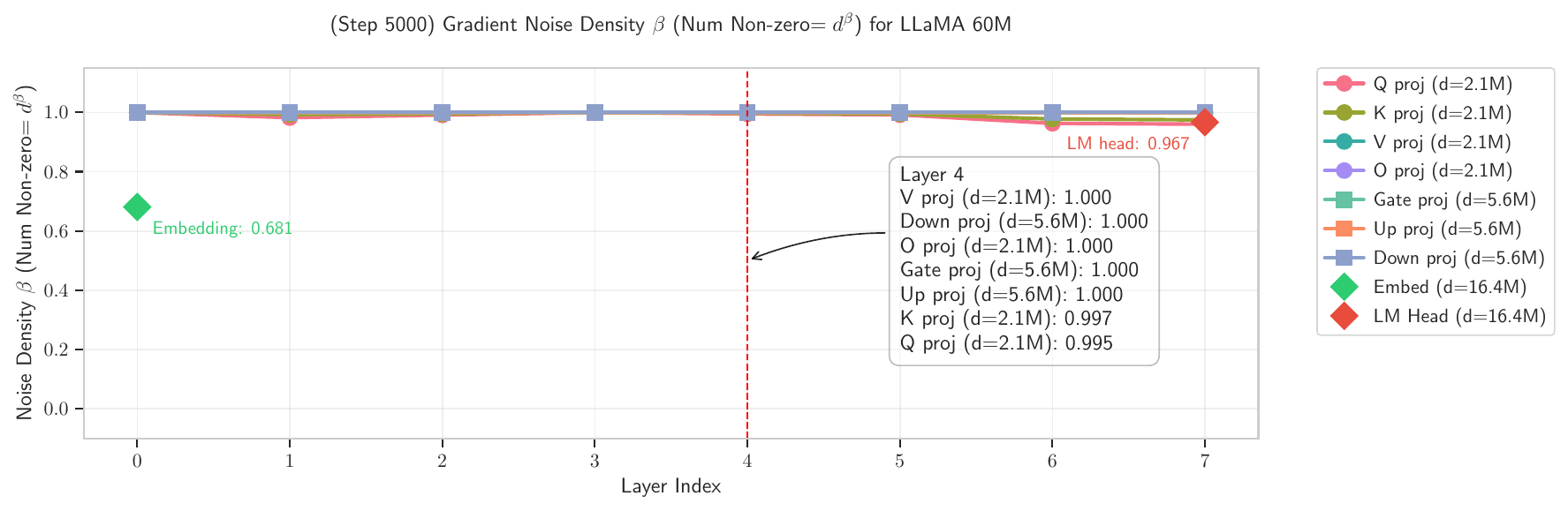}
\par\end{centering}
\caption{Noise density for different parameters of LLaMA 60M at Step 5000.\label{fig:Normalized-noise-density-5000}}
\end{figure}
\begin{figure}
\begin{centering}
\includegraphics[width=0.99\textwidth]{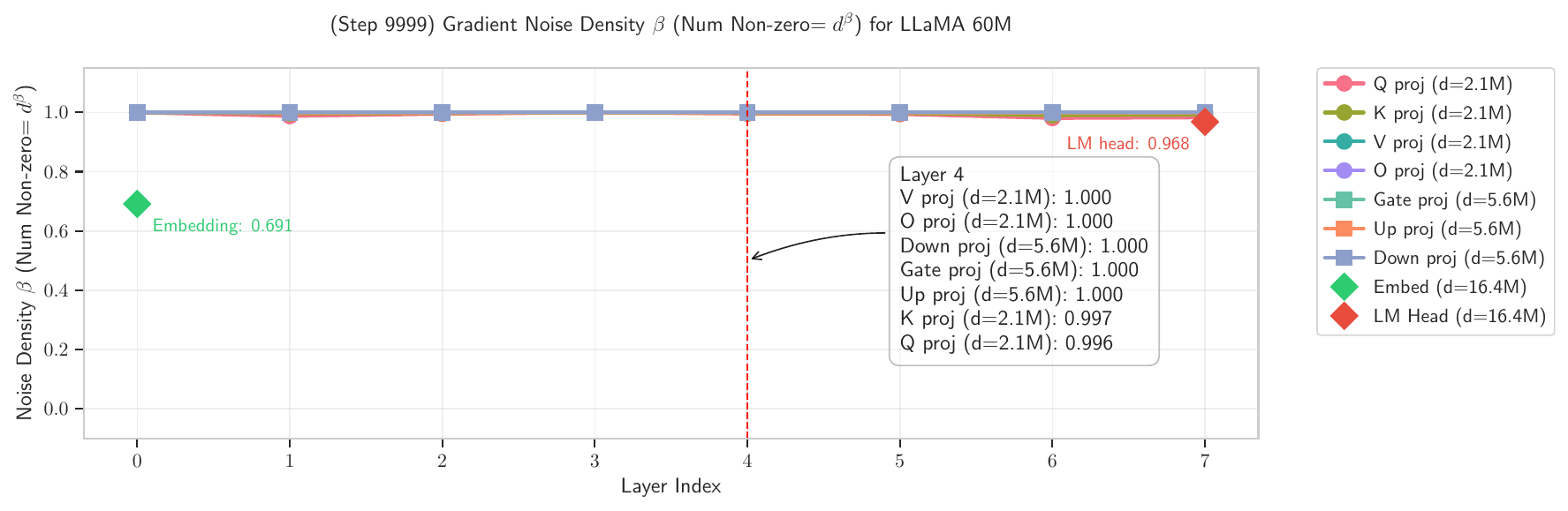}
\par\end{centering}
\caption{Noise density for different parameters of LLaMA 60M at Step 9999.
\label{fig:Noise-density-9999}}
\end{figure}

\subsection{Convergence Rate Derivation\label{sec:coordinate-noise-sparsity-derivation} }

We derive the dimensional dependency of convergence rates for different
AdaGrad variants below.

\paragraph{AdaGrad-Coordinate. }

For $c=d$ (AdaGrad-Coordinate), we get $\sum_{i=0}^{c-1}\norm{\sigma_{\Psi_{i}}}=\alpha d^{\beta}$,
$\norm{\sigma}_{2}^{2}=\alpha^{2}d^{\beta}$, and $\sum_{i=0}^{c-1}\norm{\sigma_{\Psi_{i}}}^{4}=\alpha^{4}d^{\beta}$,
so the bound from Theorem \ref{thm:main-thm-simplified} becomes 
\begin{align*}
\frac{1}{T}\sum_{t=1}^{T}\norm{\nabla_{t}}_{2}^{2} & \le\tilde{O}\left(\alpha^{4}d^{\beta}+\alpha^{3}d^{\beta}+dL+d^{1.5}\alpha\right)\cdot\tilde{O}\left(\frac{\alpha d^{\beta}}{\sqrt{T}}+\frac{\alpha^{2}d^{\beta}+\alpha d^{\beta}+Ld}{T}\right).
\end{align*}
The dependency on $d$ for the slow term $O(1/\sqrt{T})$ is $d^{1.5}d^{\beta}=d^{1.5+\beta}$.
The dependency on $d$ for the fast term $O(1/T)$ is $d^{1.5}d=d^{2.5}$.
Note that there is an inherent $d^{1.5}$ dependency for the slow
term that does not reduce as the coordinate-noise density decrease.

\paragraph{AdaGrad-Norm}

For $c=1$ (AdaGrad-Norm), we get $\norm{\sigma}_{2}^{2}=\sum_{i=0}^{d}\norm{\sigma_{i}}^{2}=\alpha^{2}d^{\beta}$,
$\norm{\sigma}_{2}=\alpha d^{\beta/2}$, and $\norm{\sigma}^{4}=\alpha^{4}d^{2\beta}$.
This means that our bound from Theorem \ref{thm:main-thm-simplified}
becomes 
\[
\frac{1}{T}\sum_{t=1}^{T}\norm{\nabla_{t}}_{2}^{2}\le\tilde{O}\left(\alpha^{4}d^{2\beta}+\alpha^{3}d^{\beta}+L+\alpha\right)\cdot\tilde{O}\left(\frac{\alpha d^{\beta/2}}{\sqrt{T}}+\frac{\alpha^{2}d^{\beta}+\alpha d^{\beta/2}+L}{T}\right).
\]
The dependency on $d$ for the slow term $O(1/\sqrt{T})$ is $d^{2\beta}\cdot d^{\beta/2}=d^{2.5\beta}$.
The dependency on $d$ for the fast term $O(1/T)$ is $d^{2\beta}\cdot d^{\beta}=d^{3\beta}$.
Note that when $\beta=0$, or when all the noise is on a single coordinate,
we recover the dimension-free results of previous works.

\paragraph{AdaGrad-Subset-Norm.}

Now, consider the following partition strategy, where we divide the
coordinates into $c=d^{1-\beta}k$ subsets of size $d^{\beta}/k$
each with the $d^{\beta}$ noisy coordinates into just $k$ subsets
so that the rest of the $c-k$ subsets do not contain any noisy coordinate. This is a reasonable choice due to the empirical validation from Section \ref{subsec:Empirical-validation-noise-sparse}: The noisy parameters seem to cluster in groups corresponding to the architecture.

With this strategy, we have $\norm{\sigma_{\Psi_{j}}}_{2}^{2}=\alpha^{2}d^{\beta}/k\implies\norm{\sigma_{\Psi_{j}}}_{2}=\alpha d^{\beta/2}/k^{0.5}$
if $j$ is a noisy subset. We can compute $\sum_{i=0}^{c-1}\norm{\sigma_{\Psi_{i}}}=\alpha d^{\beta/2}k^{0.5}$,
$\norm{\sigma}_{2}^{2}=\sum_{i=0}^{c-1}\norm{\sigma_{\Psi_{i}}}_{2}^{2}=\alpha^{2}d^{\beta}$,
and $\sum_{i=0}^{c-1}\norm{\sigma_{\Psi_{i}}}^{4}=\alpha^{4}d^{2\beta}/k$.
From Theorem \ref{thm:main-thm-simplified}, we get a bound of 
\begin{align*}
\frac{1}{T}\sum_{t=1}^{T}\norm{\nabla_{t}}_{2}^{2} & \le\tilde{O}\left(\alpha^{4}d^{2\beta}/k+\alpha^{3}d^{\beta}+d^{1-\beta}kL+\left(d^{1-\beta}k\right)^{3/2}\alpha\right)\cdot\\
 & \quad\quad\tilde{O}\left(\frac{\alpha d^{\beta/2}k^{0.5}}{\sqrt{T}}+\frac{\alpha^{2}d^{\beta}+\alpha d^{\beta/2}k^{0.5}+Ld^{1-\beta}k}{T}\right).
\end{align*}
Set $k=d^{7\beta/5-3/5}$ so that $\left(d^{1-\beta}k\right)^{3/2}=d^{2\beta}/k=d^{3\beta/5+3/5}$.
Then we can simplify 
\begin{align*}
\frac{1}{T}\sum_{t=1}^{T}\norm{\nabla_{t}}_{2}^{2} & \le\tilde{O}\left(\alpha^{4}d^{3(\beta+1)/5}+\alpha^{3}d^{\beta}+d^{2(\beta+1)/5}L+d^{3(\beta+1)/5}\alpha\right)\cdot\\
 & \quad\quad\tilde{O}\left(\frac{\alpha d^{(12\beta-3)/10}}{\sqrt{T}}+\frac{\alpha^{2}d^{\beta}+\alpha d^{(12\beta-3)/10}+Ld^{2(\beta+1)/5}}{T}\right).
\end{align*}
The dependency on $d$ for the slow term $O(1/\sqrt{T})$ is $d^{3(\beta+1)/5}\cdot d^{(12\beta-3)/10}=d^{3(1+6\beta)/10}=d^{0.3+1.8\beta}$.
The dependency on $d$ for the fast term $O(1/T)$ is a bit more complicated:
For $\beta\in[0,\frac{2}{3}]$, we have the dependency on $d$ is
$d^{3(\beta+1)/5}\cdot d^{2(\beta+1)/5}=d^{\beta+1}$. For $\beta\in[\frac{2}{3},1]$,
we have the dependency on $d$ is $d^{3(\beta+1)/5}\cdot d^{\beta}=d^{3(\beta+1)/5+\beta}=d^{1.6\beta+0.6}$.
Note that this is only a possible partition strategy where the subset
sizes are of equal size (which is probably the most natural and easiest
to implement). There, the optimal subset size is $k=d^{1.4\beta-0.6}$,
for which if we plug in $\beta\in[0,1]$ we get a range from $1$
to $d^{0.8}$. 

\subsection{From Theory to Practice}
Our theory provides an optimal grouping strategy that depends on the noise density. However, in practice, we must trade off the cost to figure out a good grouping and the performance gain from it.
The key from the theory improvement is to group the coordinates with similar noise magnitudes together. However, any expensive method to figure out these groups (e.g. the Hessian in Adam-mini) would have detrimental effects on the wall clock time and memory. Instead, our heuristic as in Section \ref{par:subset-size-heuristics} is meant to be a simple method to capture most of these groups.

Intuitively, coordinates in the same row/column either act on the same input or are used to compute the same output. The noise and normalization on each input and output would affect coordinates in the same row/columns in a correlated way. To provide some evidence, we perform the experiments in Section \ref{subsec:Empirical-validation-noise-sparse} again in Figure \ref{fig:noise-heuristic-group}, but with the noise grouped by the corresponding dimension according to the heuristics. 
\begin{figure}
    \centering
    \includegraphics[width=0.5\linewidth]{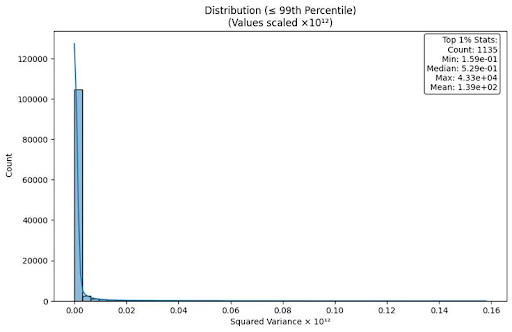}
    \caption{Coordinate noise grouped by the corresponding dimension according to the heuristics}
    \label{fig:noise-heuristic-group}
\end{figure}
There we see most groups have very low noise (very close to 0, namely less than $10^{-12}$) while a small number of groups (top 1 percentile in the annotation) have much larger noises.
Overall, our heuristics aim to capture the similar noise coming from the same inputs and outputs. Our experiments suggest that this is a major part of the gain. There might be other simple sources of correlation in the noise magnitudes, which we leave for future work.

\section{Subspace-Momentum Convergence Proof \label{sec:Subspace-Momentum-Convergence}}

In this section, we provide a high-probability convergence proof for SGD
with Subspace-Momentum for non-convex smooth objective under sub-gaussian gradient noise.

\subsection{Setup and intuition \label{subsec:Subspace-momentum-SGD-setup}}

\paragraph{Notations.}

Given a linear operator $P:\R^{d}\rightarrow\R^{k}$, we have $P^{*}:\R^{k}\rightarrow\R^{d}$
is $P$'s adjoint\footnote{In $\R^d$, the adjoint $P^*$ of a linear operator $P$ is the linear operator given by the \textit{transpose} of the matrix representation of $P$. We can also generalize Subspace-Momentum to general Hilbert spaces.}, and we consider $P^{*}P:\R^{d}\rightarrow\R^{d}$
is a projection operator i.e. $P^{*}P$ is a bounded linear operator
such that $\left(P^{*}P\right)^{2}=P^{*}P$. Given a space $V\subseteq\R^{d}$,
we denote its orthogonal subspace by $V^{\perp}:=\left\{ v\in\R^{d}:\left\langle v,u\right\rangle =0,\ \forall u\in V\right\} $. 

Let $U=\text{row}\left(P\right)\subseteq\R^{d}$ be the row span of
$P$. Let $\Psi:\R^{d}\rightarrow U$ be $\Psi(x)=P^{*}Px$ and $\Psi^{\perp}:\R^{d}\rightarrow U^{\perp}$
be $\Psi^{\perp}(x)=x-P^{*}Px$. Then for any vector $x$ in $\R^{d}$,
have the orthogonal decomposition
\[
x=\Psi(x)+\Psi^{\perp}(x).
\]

\paragraph{SGD with Subspace Momentum.}

Let $g_{t}:=\hn f(x_{t})$ denotes the stochastic gradient at time
$t$. Let $\hat{c}_{t}=Pg_{t}$, $g_{t}^{U}=\Psi g_{t}=P^{*}Pg_{t}\in U$,
and $g_{t}^{\perp}=g_{t}-g_{t}^{U}\in U^{\perp}$. Let $\nabla_{t}:=\nabla f(x_{t})$
be a short hand for the gradient at time $t$ and let $\nabla_{t}^{U}:=\Psi\left(\nabla f(x_{t})\right)\in U$
and $\nabla_{t}^{\perp}:=\Psi^{\perp}\left(\nabla f(x_{t})\right)\in U^{\perp}$
be the orthogonal decomposition of $\nabla f(x_{t})$ with respect
to $U$ and $U^{\perp}$, so that $\nabla_{t}=\nabla_{t}^{U}+\nabla_{t}^{\perp}$.
Note that the superscript of a variable tries to suggest the space
that it lives in (either $U$ or $U^{\perp}$). We have the following
update rule for subspace momentum: 
\begin{align*}
\hat{m}_{t} & =\beta\hat{m}_{t-1}+(1-\beta)Pg_{t}\\
g_{t}^{\perp} & =g_{t}-P^{*}Pg_{t}\\
m_{t} & =P^{*}\hat{m}_{t}\\
x_{t+1} & =x_{t}-\eta\left(m_{t}+g_{t}^{\perp}\right).
\end{align*}
Note that 
\begin{align*}
m_{t} & =\beta P^{*}\hat{m}_{t-1}+(1-\beta)P^{*}Pg_{t}\\
 & =\beta m_{t-1}+(1-\beta)g_{t}^{U}.
\end{align*}
Expanding the terms, we see that this is just momentum in $U$
\begin{align}
m_{t} & =\beta P^{*}\hat{m}_{t-1}+(1-\beta)P^{*}Pg_{t}\nonumber \\
 & =\beta P^{*}\hat{m}_{t-1}+(1-\beta)g_{t}^{U}\nonumber \\
 & =\beta^{2}P^{*}\hat{m}_{t-2}+(1-\beta)\beta g_{t-1}^{U}+(1-\beta)g_{t}^{U}\nonumber \\
 & =(1-\beta)\sum_{i=0}^{t}\beta^{i}g_{t-i}^{U}.\label{eq:subspace-momentumterm-expansion}
\end{align}
 Hence, we can think of the update of SGD-SM as performing two separate algorithms
in the orthogonal subspaces: momentum in the subspace $U$ and SGD in
the subspace $U^{\perp}$ (see also Figure \ref{fig:subspace-momentum}) i.e. if we decompose $x_{t}$ into its orthogonal
components $x_{t}=x_{t}^{U}+x_{t}^{\perp}$, then 
\begin{align*}
x_{t+1}^{U} & =x_{t}^{U}-\eta m_{t}\\
 & =x_{t}^{U}-\eta(1-\beta)\sum_{i=0}^{t}\beta^{i}g_{t-i}^{U}\\
x_{t+1}^{\perp} & =x_{t}^{\perp}-\eta g_{t}^{\perp}.
\end{align*}
For our analysis, let $\xi_{t}:=g_{t}-\nabla_{t}$ denote the stochastic
gradient error at time $t$. We can further decompose the error into
its subspace components:
\begin{align*}
\xi_{t} & =\xi_{t}^{U}+\xi_{t}^{\perp}\\
 & =\left(g_{t}^{U}-\nabla_{t}^{U}\right)+\left(g_{t}^{\perp}-\nabla_{t}^{\perp}\right).
\end{align*}

\paragraph{Basic facts.}

We establish some facts for subspace momentum.
\begin{enumerate}
\item Pythagorean: $\norm{g_{t}}^{2}=\norm{g_{t}^{U}}^{2}+\norm{g_{t}^{\perp}}^{2}$
and $\norm{\nabla_{t}}^{2}=\norm{\nabla_{t}^{U}}^{2}+\norm{\nabla_{t}^{\perp}}^{2}$
and so on for these decompositions. 
\item Subspace smoothness: If $f$ is smooth $\norm{\nabla f(x)-\nabla f(y)}\le L\norm{x-y}$,
then due to contraction property of the projection operator, we have
that the projected gradients of $f$ are also $L$-Lipschitz: 
\begin{align}
\norm{P^{*}P\nabla f(x)-P^{*}P\nabla f(y)}^{2} & =\norm{\nabla f(x)-\nabla f(y)}\label{eq:subspacesmooth}\\
 & \le L\norm{x-y}.\nonumber 
\end{align}
\item Subspace non-bias:
\begin{align*}
\E\left[g_{t}^{U}-\nabla_{t}^{U}\right] & =\E\left[\xi_{t}^{U}\right]\\
 & =\E\left[P^{*}P\left(g_{t}-\nabla_{t}\right)\right]\\
 & =0,
\end{align*}
and similarly for the orthogonal subspace
\begin{align*}
\E\left[g_{t}^{\perp}-\nabla_{t}^{\perp}\right] & =\E\left[\xi_{t}^{\perp}\right]\\
 & =\E\left[\xi_{t}-\xi_{t}^{U}\right]\\
 & =0.
\end{align*}
\item Subspace bounded variance: if the stochastic gradient's variance is
bounded, then its subspace components are also bounded $\E\left[\norm{\xi_{t}}^{2}\right]$:
\begin{align*}
\E\left[\norm{g_{t}^{U}-\nabla_{t}^{U}}^{2}\right] & =\E\left[\norm{\xi_{t}^{U}}^{2}\right]\\
 & =\E\left[\norm{\xi_{t}}^{2}-\norm{\xi_{t}^{\perp}}^{2}\right]\\
 & \le\sigma^{2}-\E\left[\norm{\xi_{t}^{\perp}}^{2}\right],
\end{align*}
and similarly,
\[
\E\left[\norm{\xi_{t}^{\perp}}^{2}\right]\le\sigma^{2}-\E\left[\norm{\xi_{t}^{U}}^{2}\right].
\]
\end{enumerate}

\subsection{Subspace-Momentum convergence proof \label{subsec:Subspace-Momentum-Convergence-proof}}

Suppose that $f:\R^{d}\rightarrow\R$ is $L$-smooth and stochastic
gradients $\hn f(x_{t})=g_{t}$ is unbiased, i.e. $\E\left[g_{t}\right]=\nabla f(x_{t})$,
and has $\sigma$-sub-gaussian noise, i.e. $\E [\exp(\lambda ^2\norm{g_{t}-\nabla f(x_{t})}^{2})]\le \exp(\lambda^2 \sigma^{2})$ for all $\lambda$ s.t. $|\lambda|\le 1/\sigma$.
% , we have the following expected convergence result for SGD with Subspace-Momentum
% (Algorithm \ref{alg:Subspace-momentum}): 
% \[
% \frac{1}{T}\sum_{i=1}^{T}\E\left[\norm{\nabla f(x_{t})}^{2}\right]\le\frac{3-\beta}{1-\beta}\frac{2\Delta_{1}L\sigma}{\sqrt{T}}+\frac{1-\beta}{3-\beta}\frac{\sigma}{\sqrt{T}}.
% \]
First, we will show an error bound that is a starting point for the high-probability convergence results.
\begin{lemma}
\label{lem:SM-start-pt}If $f$ is $L$-smooth, then SGD with Subspace-Momentum
(Algorithm \ref{alg:Subspace-momentum}) yields
\[
f(x_{T+1})-f(x_{1})\le-\eta\sum_{t=1}^{T}\norm{\nabla_{t}}^{2}-\eta\sum_{t=1}^{T}\left\langle \nabla_{t},\xi_{t}\right\rangle +\frac{(3-\beta)L\eta^{2}}{2\left(1-\beta\right)}\sum_{t=1}^{T}\norm{g_{t}}^{2}.
\]
\end{lemma}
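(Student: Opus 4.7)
The plan is to combine the $L$-smoothness descent inequality with the orthogonal decomposition $\R^{d}=U\oplus U^{\perp}$ induced by the projector $P^{*}P$, so that the step direction $m_{t}+g_{t}^{\perp}$ can be rewritten as $g_{t}$ plus a residual $m_{t}-g_{t}^{U}\in U$ that is controlled by the momentum recursion.

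First I would start from the per-step descent $f(x_{t+1})-f(x_{t})\le\langle\nabla_{t},x_{t+1}-x_{t}\rangle+\frac{L}{2}\norm{x_{t+1}-x_{t}}^{2}$ and substitute $x_{t+1}-x_{t}=-\eta(m_{t}+g_{t}^{\perp})$. Orthogonality of $m_{t}\in U$ and $g_{t}^{\perp}\in U^{\perp}$ reduces the quadratic term to $\eta^{2}(\norm{m_{t}}^{2}+\norm{g_{t}^{\perp}}^{2})$, and writing $m_{t}+g_{t}^{\perp}=g_{t}+(m_{t}-g_{t}^{U})$ together with the momentum recursion $m_{t}-g_{t}^{U}=-\beta(g_{t}^{U}-m_{t-1})\in U$ splits the linear term into $-\eta\norm{\nabla_{t}}^{2}-\eta\langle\nabla_{t},\xi_{t}\rangle+\eta\beta\langle\nabla_{t}^{U},g_{t}^{U}-m_{t-1}\rangle$. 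The quadratic term is then controlled by the Cauchy–Schwarz bound $\norm{m_{t}}^{2}\le(1-\beta)\sum_{i=0}^{t-1}\beta^{i}\norm{g_{t-i}^{U}}^{2}$ on the EMA; after swapping the summation order and using $(1-\beta)\sum_{t\ge s}\beta^{t-s}\le1$ one gets $\sum_{t=1}^{T}\norm{m_{t}}^{2}\le\sum_{t=1}^{T}\norm{g_{t}^{U}}^{2}\le\sum_{t=1}^{T}\norm{g_{t}}^{2}$, which together with $\norm{g_{t}^{\perp}}^{2}\le\norm{g_{t}}^{2}$ dispatches the $\frac{L\eta^{2}}{2}$ term.

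To dispatch the residual cross term I would pass to the shadow iterate $y_{t}:=x_{t}-\frac{\eta\beta}{1-\beta}m_{t-1}$, which by direct computation satisfies $y_{t+1}-y_{t}=-\eta g_{t}$ and $y_{1}=x_{1}$ (since $m_{0}=0$). Applying $L$-smoothness at $y_{t}$ rather than $x_{t}$ produces a clean SGD-style descent whose only extra ingredient is the gradient discrepancy $\nabla f(y_{t})-\nabla_{t}$, which is bounded by $\frac{L\eta\beta}{1-\beta}\norm{m_{t-1}}$ via $L$-smoothness; AM-GM converts the resulting $\eta\langle\nabla f(y_{t})-\nabla_{t},g_{t}\rangle$ into an extra $\frac{L\eta^{2}\beta}{1-\beta}\sum_{t}\norm{g_{t}}^{2}$ contribution using the Jensen bound on $\sum_{t}\norm{m_{t-1}}^{2}$ above. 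Summing telescopes $f(y_{T+1})-f(x_{1})$ with coefficient $\frac{(1+\beta)L\eta^{2}}{2(1-\beta)}$ on $\sum_{t}\norm{g_{t}}^{2}$, and one more $L$-smoothness step converts $f(y_{T+1})$ back to $f(x_{T+1})$; the resulting boundary $\norm{m_{T}}^{2}$ contribution fits within the additional slack between $\frac{1+\beta}{2(1-\beta)}$ and the stated $\frac{3-\beta}{2(1-\beta)}$.

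The main obstacle is preserving the coefficient $-\eta$ on $\sum_{t}\norm{\nabla_{t}}^{2}$ while bounding the signed residual cross term: a naive Young's inequality on $\eta\beta\langle\nabla_{t}^{U},g_{t}^{U}-m_{t-1}\rangle$ would introduce a positive multiple of $\norm{\nabla_{t}^{U}}^{2}$ that contaminates the stated coefficient, so one must either use the shadow iterate trick (which never explicitly produces the signed cross term, trading it for the boundary conversion from $y_{T+1}$ to $x_{T+1}$) or perform summation-by-parts via $g_{t}^{U}-m_{t-1}=\frac{1}{1-\beta}(m_{t}-m_{t-1})$ combined with $L$-smoothness on consecutive gradient projections $\norm{\nabla_{t+1}^{U}-\nabla_{t}^{U}}\le L\eta\norm{m_{t}+g_{t}^{\perp}}$. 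Either route yields the stated coefficient once the Jensen bound on $\sum_{t}\norm{m_{t}}^{2}$ is combined with AM-GM of compatible scale.
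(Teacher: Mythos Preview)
Your decomposition of the linear term into $-\eta\norm{\nabla_t}^2-\eta\langle\nabla_t,\xi_t\rangle+\eta\beta\langle\nabla_t^U,g_t^U-m_{t-1}\rangle$ and the Jensen bound $\sum_t\norm{m_t}^2\le\sum_t\norm{g_t^U}^2$ are correct and are exactly the ingredients the paper uses. The paper, however, dispatches the residual cross term by a device different from either of your routes: it works directly with $-\langle\nabla_t^U,m_t\rangle$ and \emph{unrolls backward} via
\[
-\langle\nabla_t^U,m_t\rangle=-\beta\langle\nabla_t^U-\nabla_{t-1}^U,m_{t-1}\rangle-\beta\langle\nabla_{t-1}^U,m_{t-1}\rangle-(1-\beta)\langle\nabla_t^U,g_t^U\rangle,
\]
bounding the first piece by $\beta L\eta\norm{m_{t-1}+g_{t-1}^\perp}^2$ and iterating down to $m_0=0$. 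Because the recursion terminates at $m_0=0$, no boundary term at time $T$ is produced; summing in $t$, swapping, and using $\sum_t\norm{m_t}^2\le\sum_t\norm{g_t^U}^2$ gives the coefficient $\frac{L\eta^2}{1-\beta}$ on $\sum_i(\norm{m_i}^2+\norm{g_i^\perp}^2)$, which combines with the $\frac{L\eta^2}{2}$ from smoothness to yield exactly $\frac{(3-\beta)L\eta^2}{2(1-\beta)}$. Your route (b) is the same idea, but an explicit Abel summation on $\sum_t\langle\nabla_t^U,m_t-m_{t-1}\rangle$ leaves the boundary term $\frac{\eta\beta}{1-\beta}\langle\nabla_T^U,m_T\rangle$, which you do not address; the backward unrolling is the cleaner packaging of the same estimate.

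Your route (a) has a genuine gap at the final conversion. The shadow iterate $y_t=x_t-\frac{\eta\beta}{1-\beta}m_{t-1}$ does satisfy $y_{t+1}-y_t=-\eta g_t$, and your descent-at-$y_t$ argument correctly gives
\[
f(y_{T+1})-f(x_1)\le-\eta\sum_t\norm{\nabla_t}^2-\eta\sum_t\langle\nabla_t,\xi_t\rangle+\tfrac{(1+\beta)L\eta^2}{2(1-\beta)}\sum_t\norm{g_t}^2.
\]
But converting $f(y_{T+1})$ to $f(x_{T+1})$ via smoothness produces $\langle\nabla f(y_{T+1}),\frac{\eta\beta}{1-\beta}m_T\rangle+\frac{L\eta^2\beta^2}{2(1-\beta)^2}\norm{m_T}^2$, and the linear term is \emph{not} a pure $\norm{m_T}^2$ contribution as you claim: it carries $\nabla f(y_{T+1})$, which is controlled by nothing on the stated right-hand side (any AM-GM split deposits a positive $\norm{\nabla f(y_{T+1})}^2$ with no home). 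Even the quadratic piece alone only fits in the slack $L\eta^2\sum_t\norm{g_t}^2$ when $\frac{\beta^2}{2(1-\beta)^2}\le1$, i.e.\ $\beta\le2-\sqrt{2}$. The shadow-iterate bound with $f(y_{T+1})$ on the left would be enough for Theorem~\ref{thm:SGDSM-highprob} (only $f(y_{T+1})\ge f_*$ is used downstream), but it does not establish the lemma as written.
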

\begin{remark}
    Lemma \ref{lem:SM-start-pt} shows that the optimization error of SGD-SM is quite similar
to SGD-M.
\end{remark}

\begin{proof}
Note that $m_{t}\in U$ and $r_{t}\in U^{\perp}$. Starting with smoothness,
we have 

\begin{align*}
f(x_{t+1}) & \le f(x_{t})+\left\langle \nabla f(x_{t}),x_{t+1}-x_{t}\right\rangle +\frac{L}{2}\norm{x_{t+1}-x_{t}}^{2}\\
 & =f(x_{t})-\eta\left\langle \nabla f(x_{t}),m_{t}+g_{t}^{\perp}\right\rangle +\frac{\eta^{2}L}{2}\norm{m_{t}+g_{t}^{\perp}}^{2}\\
 & =f(x_{t})-\eta\left\langle \nabla f(x_{t}),m_{t}\right\rangle -\eta\left\langle \nabla f(x_{t}),g_{t}^{\perp}\right\rangle +\frac{\eta^{2}L}{2}\norm{m_{t}+g_{t}^{\perp}}^{2}.
\end{align*}
We have 
\begin{align*}
f(x_{t+1})-f(x_{t}) & \le-\eta\left\langle \nabla_{t}^{U},m_{t}\right\rangle -\eta\left\langle \nabla_{t}^{\perp},g_{t}^{\perp}\right\rangle +\frac{\eta^{2}L}{2}\norm{m_{t}+g_{t}^{\perp}}^{2}\\
 & =-\eta\left\langle \nabla_{t}^{U},m_{t}\right\rangle -\eta\left\langle \nabla_{t}^{\perp},g_{t}^{\perp}\right\rangle +\frac{\eta^{2}L}{2}\norm{m_{t}}^{2}+\frac{\eta^{2}L}{2}\norm{g_{t}^{\perp}}^{2}.\tag{Pythagorean}
\end{align*}
Summing it up, we get 
\begin{equation}
f(x_{T+1})-f(x_{1})\le\underbrace{-\eta\sum_{t=1}^{T}\left\langle \nabla_{t}^{U},m_{t}\right\rangle +\frac{\eta^{2}L}{2}\sum_{t=1}^{T}\norm{m_{t}}^{2}}_{\text{SGD with momentum error in \ensuremath{U}}}\underbrace{-\eta\sum_{t=1}^{T}\left\langle \nabla_{t}^{\perp},g_{t}^{\perp}\right\rangle +\frac{\eta^{2}L}{2}\sum_{t=1}^{T}\norm{g_{t}^{\perp}}^{2}}_{\text{vanilla SGD error in \ensuremath{U^{\perp}}}}.\label{eq:joint-error}
\end{equation}
We analyze $-\eta\left\langle \nabla_{t}^{U},m_{t}\right\rangle +\frac{\eta^{2}L}{2}\norm{m_{t}}^{2}$
and $-\eta\left\langle \nabla_{t}^{\perp},g_{t}^{\perp}\right\rangle +\frac{\eta^{2}L}{2}\norm{g_{t}^{\perp}}^{2}$
separately. Intuitively, the error within each subspace is controlled
by their respective algorithm. Investigating the momentum term, we
have
\begin{align*}
-\left\langle \nabla_{t}^{U},m_{t}\right\rangle  & =-\left\langle \nabla_{t}^{U},\beta m_{t-1}+(1-\beta)g_{t}^{U}\right\rangle \\
 & =-\beta\left\langle \nabla_{t}^{U},m_{t-1}\right\rangle -(1-\beta)\left\langle \nabla_{t}^{U},g_{t}^{U}\right\rangle \\
 & =-\beta\left\langle \nabla_{t}^{U}-\nabla_{t-1}^{U},m_{t-1}\right\rangle -\beta\left\langle \nabla_{t-1}^{U},m_{t-1}\right\rangle -(1-\beta)\left\langle \nabla_{t}^{U},g_{t}^{U}\right\rangle .
\end{align*}
We examine $-\left\langle \nabla_{t}^{U}-\nabla_{t-1}^{U},m_{t-1}\right\rangle $:
\begin{align*}
-\left\langle \nabla_{t}^{U}-\nabla_{t-1}^{U},m_{t-1}\right\rangle  & =-\left\langle \nabla_{t}-\nabla_{t-1},m_{t-1}\right\rangle +\left\langle \nabla_{t}^{\perp}-\nabla_{t-1}^{\perp},m_{t-1}\right\rangle \\
 & =-\left\langle \nabla_{t}-\nabla_{t-1},m_{t-1}\right\rangle \\
 & \le\norm{\nabla_{t}-\nabla_{t-1}}\norm{m_{t-1}}\\
 & \le L\norm{x_{t}-x_{t-1}}\norm{m_{t-1}}\\
 & =\eta L\norm{m_{t-1}+g_{t-1}^{\perp}}\norm{m_{t-1}}\\
 & \le\eta L\norm{m_{t-1}+g_{t-1}^{\perp}}^{2}.
\end{align*}
Now we have 
\begin{align*}
-\left\langle \nabla_{t}^{U},m_{t}\right\rangle  & \le\eta L\beta\norm{m_{t-1}+g_{t-1}^{\perp}}^{2}-\beta\left\langle \nabla_{t-1}^{U},m_{t-1}\right\rangle -(1-\beta)\left\langle \nabla_{t}^{U},g_{t}^{U}\right\rangle \\
 & \le\eta L\sum_{i=1}^{t-1}\beta^{t-i}\norm{m_{i}+g_{i}^{\perp}}^{2}-(1-\beta)\sum_{i=1}^{t}\beta^{t-i}\left\langle \nabla_{i}^{U},g_{i}^{U}\right\rangle .
\end{align*}
Summing over $t$, we have 
\begin{align*}
-\eta\sum_{t=1}^{T}\left\langle \nabla_{t}^{U},m_{t}\right\rangle  & \le\eta^{2}L\sum_{t=1}^{T}\sum_{i=1}^{t-1}\beta^{t-i}\norm{m_{i}+g_{i}^{\perp}}^{2}-(1-\beta)\eta\sum_{t=1}^{T}\sum_{i=1}^{t}\beta^{t-i}\left\langle \nabla_{i}^{U},g_{i}^{U}\right\rangle \\
 & =L\eta^{2}\sum_{i=1}^{T}\sum_{t=i}^{T}\beta^{t-i}\norm{m_{i}+g_{i}^{\perp}}^{2}-(1-\beta)\eta\sum_{i=1}^{T}\sum_{t=i}^{T}\beta^{t-i}\left\langle \nabla_{i}^{U},g_{i}^{U}\right\rangle \tag{swap the sum}\\
 & \le L\eta^{2}\sum_{i=1}^{T}\norm{m_{i}+g_{i}^{\perp}}^{2}\sum_{t=i}^{T}\beta^{t}-(1-\beta)\eta\sum_{i=1}^{T}\left\langle \nabla_{i}^{U},g_{i}^{U}\right\rangle \sum_{t=i}^{T}\beta^{t}\\
 & \le\frac{L\eta^{2}}{1-\beta}\sum_{i=1}^{T}\norm{m_{i}+g_{i}^{\perp}}^{2}-\eta\sum_{i=1}^{T}\left\langle \nabla_{i}^{U},g_{i}^{U}\right\rangle \\
 & =\frac{L\eta^{2}}{1-\beta}\sum_{i=1}^{T}\left(\norm{m_{i}}^{2}+\norm{g_{i}^{\perp}}^{2}\right)-\eta\sum_{i=1}^{T}\left\langle \nabla_{i}^{U},\xi_{i}^{U}\right\rangle -\eta\sum_{i=1}^{T}\norm{\nabla_{i}^{U}}^{2}.
\end{align*}
 Now, we look at $\sum_{i=1}^{T}\norm{m_{i}}^{2}$:
\begin{align*}
\sum_{t=1}^{T}\norm{m_{t}}^{2} & =\sum_{t=1}^{T}\norm{\beta m_{t-1}+(1-\beta)g_{t}^{U}}^{2}\\
 & \le\sum_{t=1}^{T}\beta\norm{m_{t-1}}^{2}+(1-\beta)\norm{g_{t}^{U}}^{2}\tag{convexity of \ensuremath{\norm{\cdot}^{2}}}\\
 & \le\sum_{t=1}^{T}\beta\norm{m_{t}}^{2}+(1-\beta)\sum_{t=1}^{T}\norm{g_{t}^{U}}^{2}\\
\implies\sum_{t=1}^{T}\norm{m_{t}}^{2} & \le\sum_{t=1}^{T}\norm{g_{t}^{U}}^{2}.
\end{align*}
Examining the momentum error terms, we get 
\begin{align}
 & -\eta\sum_{t=1}^{T}\left\langle \nabla_{t}^{U},m_{t}\right\rangle +\frac{\eta^{2}L}{2}\sum_{t=1}^{T}\norm{m_{t}}^{2}\nonumber \\
 & \le\left(\frac{L\eta^{2}}{1-\beta}+\frac{\eta^{2}L}{2}\right)\sum_{i=1}^{T}\norm{m_{i}}^{2}+\frac{L\eta^{2}}{1-\beta}\sum_{i=1}^{T}\norm{g_{i}^{\perp}}^{2}-\eta\sum_{i=1}^{T}\left\langle \nabla_{i}^{U},\xi_{i}^{U}\right\rangle -\eta\sum_{i=1}^{T}\norm{\nabla_{i}^{U}}^{2}\nonumber \\
 & \le\left(\frac{(3-\beta)L\eta^{2}}{2\left(1-\beta\right)}\right)\sum_{i=1}^{T}\norm{g_{i}^{U}}^{2}+\frac{L\eta^{2}}{1-\beta}\sum_{i=1}^{T}\norm{g_{i}^{\perp}}^{2}-\eta\sum_{i=1}^{T}\left\langle \nabla_{i}^{U},\xi_{i}^{U}\right\rangle -\eta\sum_{i=1}^{T}\norm{\nabla_{i}^{U}}^{2}.\label{eq:momentum-control}
\end{align}
We consider the SGD error terms in the orthogonal subspace:
\begin{align}
-\eta\left\langle \nabla_{t}^{\perp},g_{t}^{\perp}\right\rangle +\frac{\eta^{2}L}{2}\norm{g_{t}^{\perp}}^{2} & =-\eta\left\langle \nabla_{t}^{\perp},\nabla_{t}^{\perp}-g_{t}^{\perp}\right\rangle -\eta\norm{\nabla_{t}^{\perp}}^{2}+\frac{\eta^{2}L}{2}\norm{g_{t}^{\perp}}^{2}\nonumber \\
 & =-\eta\left\langle \nabla_{t}^{\perp},\xi_{t}^{\perp}\right\rangle -\eta\norm{\nabla_{t}^{\perp}}^{2}+\frac{\eta^{2}L}{2}\norm{g_{t}^{\perp}}^{2}.\label{eq:sgd-control}
\end{align}
Now we are ready to combine (\ref{eq:momentum-control}) and (\ref{eq:sgd-control}).
First note the common terms $\norm{g_{i}^{\perp}}^{2}$ in both equations
combine to a sum similarly to $\norm{g_{t}^{U}}^{2}$:
\[
\underbrace{\frac{L\eta^{2}}{1-\beta}\sum_{t=1}^{T}\norm{g_{t}^{\perp}}^{2}}_{\text{momentum}}+\underbrace{\frac{\eta^{2}L}{2}\sum_{t=1}^{T}\norm{g_{t}^{\perp}}^{2}}_{\text{SGD}}=\left(\frac{(3-\beta)L\eta^{2}}{2\left(1-\beta\right)}\right)\sum_{t=1}^{T}\norm{g_{t}^{\perp}}^{2}.
\]
 Combining both terms, we see that the terms are combined from both
subspaces (red from (\ref{eq:momentum-control}) and blue from (\ref{eq:sgd-control})):
\begin{align*}
-\red{\eta\sum_{t=1}^{T}\norm{\nabla_{t}^{U}}^{2}}-\eta\blue{\sum_{t=1}^{T}\norm{\nabla_{t}^{\perp}}^{2}} & =-\eta\sum_{t=1}^{T}\norm{\nabla_{t}}^{2}\\
-\eta\red{\sum_{t=1}^{T}\left\langle \nabla_{t}^{U},\xi_{t}^{U}\right\rangle }-\eta\blue{\sum_{t=1}^{T}\left\langle \nabla_{t}^{\perp},\xi_{t}^{\perp}\right\rangle } & =-\eta\sum_{t=1}^{T}\left\langle \nabla_{t},\xi_{t}\right\rangle \\
\red{\frac{(3-\beta)L\eta^{2}}{2\left(1-\beta\right)}\sum_{t=1}^{T}\norm{g_{t}^{U}}^{2}}+\left(\red{\frac{L\eta^{2}}{1-\beta}}+\blue{\frac{\eta^{2}L}{2}}\right)\sum_{t=1}^{T}\norm{g_{t}^{\perp}}^{2} & =\frac{(3-\beta)L\eta^{2}}{2\left(1-\beta\right)}\sum_{t=1}^{T}\norm{g_{t}}^{2}.
\end{align*}
Plugging everything back into (\ref{eq:joint-error}), we have 
\begin{equation}
f(x_{T+1})-f(x_{1})\le-\eta\sum_{t=1}^{T}\norm{\nabla_{t}}^{2}-\eta\sum_{t=1}^{T}\left\langle \nabla_{t},\xi_{t}\right\rangle +\frac{(3-\beta)L\eta^{2}}{2\left(1-\beta\right)}\sum_{t=1}^{T}\norm{g_{t}}^{2}.\label{eq:final-error-bound}
\end{equation}
\end{proof}

\subsubsection{Proof of Theorem \ref{thm:SGDSM-highprob}.}

Our proof uses the technical tools from \cite{liu2023high}, although
the strategy here has been simplified. We use Lemma A.1 from \cite{liu2023high}:
\begin{lemma}[Lemma A.1 from \cite{liu2023high}]
For any $a\geq0$, $0\leq b\leq\frac{1}{2\sigma}$ and a nonnegative
$\sigma$-subgaussian random variable $X$, we have
\[
\E\left[1+b^{2}X^{2}+\sum_{i=2}^{\infty}\frac{1}{i!}\left(aX+b^{2}X^{2}\right)^{i}\right]\leq\exp\left(3\sigma^{2}\left(a^{2}+b^{2}\right)\right).
\]
\end{lemma}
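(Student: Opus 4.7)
The first step is purely algebraic. From $\exp(y)=1+y+\sum_{i\geq 2}y^{i}/i!$, substituting $y=aX+b^{2}X^{2}$ and noticing that the series in the lemma replaces the linear-in-$y$ term by just $b^{2}X^{2}$, we get
\[
1+b^{2}X^{2}+\sum_{i\geq 2}\tfrac{(aX+b^{2}X^{2})^{i}}{i!}=\exp(aX+b^{2}X^{2})-aX.
\]
So the claim is equivalent to
\[
\E\bigl[\exp(aX+b^{2}X^{2})\bigr]\leq a\,\E[X]+\exp\bigl(3\sigma^{2}(a^{2}+b^{2})\bigr).
\]

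\textbf{Main plan.} I would bound $\E[\exp(aX+b^{2}X^{2})]$ by decoupling the linear and quadratic pieces. Using Young's inequality $aX\leq \gamma a^{2}+X^{2}/(4\gamma)$ for a parameter $\gamma>0$,
\[
\E[\exp(aX+b^{2}X^{2})]\leq \exp(\gamma a^{2})\cdot\E\bigl[\exp\bigl((\tfrac{1}{4\gamma}+b^{2})X^{2}\bigr)\bigr].
\]
Choosing $\gamma$ of order $\sigma^{2}$, the coefficient on $X^{2}$ stays at most $1/\sigma^{2}$ thanks to the hypothesis $b\leq 1/(2\sigma)$ (which gives $b^{2}\leq 1/(4\sigma^{2})$), so the defining subgaussian bound $\E[\exp(\lambda^{2}X^{2})]\leq \exp(\lambda^{2}\sigma^{2})$ applies to the second factor. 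This produces a bound of the shape $\exp(C_{1}\sigma^{2}a^{2}+C_{2}\sigma^{2}b^{2}+C_{3})$ for some absolute constants.

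\textbf{Main obstacle.} A naive Young split leaves a residual additive constant $C_{3}>0$ that does \emph{not} fit inside $\exp(3\sigma^{2}(a^{2}+b^{2}))$ when $a$ and $b$ are both small. This is precisely where the subtracted $aX$ must be exploited. The cleaner route I would take is to write
\[
\exp(aX+b^{2}X^{2})-aX=\exp(b^{2}X^{2})+aX\bigl(\exp(b^{2}X^{2})-1\bigr)+\exp(b^{2}X^{2})\bigl(\exp(aX)-1-aX\bigr),
\]
and bound the three pieces separately using the elementary inequalities $\exp(u)-1\leq u\exp(u)$ and $\exp(u)-1-u\leq (u^{2}/2)\exp(u)$ valid for $u\geq 0$ (both follow from term-by-term domination of the Taylor series). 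The resulting moments of the form $\E[X^{k}\exp(cX^{2})]$ are handled by Cauchy-Schwarz combined with the subgaussian bound $\E[\exp(2b^{2}X^{2})]\leq \exp(2b^{2}\sigma^{2})$ (valid since $2b^{2}\leq 1/(2\sigma^{2})\leq 1/\sigma^{2}$) and the moment estimate $\E[X^{2k}]\leq e\cdot k!\,\sigma^{2k}$, obtained by expanding $\E[\exp(X^{2}/\sigma^{2})]\leq e$ (the subgaussian inequality at its boundary $\lambda=1/\sigma$). Summing the resulting geometric-type series in $a\sigma$ and $b\sigma$ and verifying the overall bound fits under $\exp(3\sigma^{2}(a^{2}+b^{2}))$ is arithmetic and finite; the constant $3$ looks deliberately loose to absorb the overheads from Cauchy-Schwarz and the $b\leq 1/(2\sigma)$ constraint.
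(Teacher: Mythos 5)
The paper does not actually prove this lemma: it is imported verbatim (with its constants) from Lemma A.1 of \cite{liu2023high}, so there is no in-paper argument to compare yours against. Judged on its own terms, your proposal is a viable, essentially self-contained route. The algebraic reduction is correct (the displayed series is exactly $\exp(aX+b^{2}X^{2})-aX$), the three-term decomposition is an identity, the elementary inequalities $e^{u}-1\le ue^{u}$ and $e^{u}-1-u\le\frac{u^{2}}{2}e^{u}$ hold for $u\ge0$, and the moment bound $\E[X^{2k}]\le e\,k!\,\sigma^{2k}$ does follow from $\E[\exp(X^{2}/\sigma^{2})]\le e$. You also correctly identified the crux: since $X$ is only nonnegative (not mean-zero), the subtracted $aX$ is what kills the first-order-in-$a$ term, and the remaining expression is second order in $(a,b)$, which is why a bound of the form $\exp(O(\sigma^{2}(a^{2}+b^{2})))$ is possible at all.

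The one step you have not actually closed is the third term. After applying $e^{u}-1-u\le\frac{u^{2}}{2}e^{u}$ you are left with $\frac{a^{2}}{2}\E\left[X^{2}\exp(aX+b^{2}X^{2})\right]$, which is \emph{not} of the form $\E[X^{k}\exp(cX^{2})]$: the exponent still contains the linear term $aX$, and since $a$ is unrestricted you cannot bound $\exp(aX)$ by a constant. You need to first apply the Young split you mentioned in your ``main plan,'' e.g.\ $aX\le\sigma^{2}a^{2}+X^{2}/(4\sigma^{2})$, and then verify that the resulting quadratic coefficient survives Cauchy--Schwarz: one needs $2\bigl(b^{2}+\tfrac{1}{4\sigma^{2}}\bigr)\le\tfrac{1}{\sigma^{2}}$, which holds exactly because $b\le\tfrac{1}{2\sigma}$ — so the hypothesis is used with no slack here. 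Finally, the closing arithmetic is not as loose as you suggest: the third term contributes a leading coefficient of order $\tfrac{1}{2}\E[X^{2}]\cdot(\text{Cauchy--Schwarz overhead})$ on $a^{2}\sigma^{2}$, and one must check this stays below $3$ (it does, with constants roughly $1.2$ for the $a^{2}$ piece and well under $3$ for the $b^{2}$ piece), so the final verification is a genuine, if finite, computation rather than an automatic absorption into a ``deliberately loose'' constant.
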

We present some useful tools:
\begin{corollary}
\label{cor:mgf-subgaussian}Suppose that $X$ is a mean zero random
vector in $\R^{d}$, where $\norm X$ is $\sigma$-subgaussian. For
$0\leq a\leq\frac{1}{4\sigma^{2}}$ and $B\in\R^{d}$ then
\[
\E\left[\exp\left(a\norm X^{2}+\left\langle B,X\right\rangle \right)\right]\leq\exp\left(3\sigma^{2}(a+\norm B^{2})\right).
\]
\end{corollary}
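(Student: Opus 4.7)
The plan is to reduce the vector-valued bound to the scalar Lemma A.1 applied to the nonnegative random variable $Y := \norm{X}$, exploiting two complementary facts: the mean-zero assumption on $X$ eliminates the first-order inner-product term exactly, and Cauchy--Schwarz dominates $\langle B, X\rangle$ pointwise by $\norm{B}\norm{X}$ in every higher-order term.

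First, I would expand the exponential as a Taylor series and pass the expectation through term by term:
\[
\E\left[\exp\left(a\norm{X}^{2}+\left\langle B,X\right\rangle \right)\right]=1+\E\left[a\norm{X}^{2}+\left\langle B,X\right\rangle \right]+\sum_{i=2}^{\infty}\frac{1}{i!}\E\left[\left(a\norm{X}^{2}+\left\langle B,X\right\rangle \right)^{i}\right].
\]
Because $\E[X]=0$, the linear piece $\E[\langle B,X\rangle]$ vanishes, so the first-order contribution collapses cleanly to $a\E[Y^{2}]$. This single cancellation is the crux of the argument.

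Next, for each $i\ge 2$, Cauchy--Schwarz combined with the nonnegativity of $a\norm{X}^{2}$ yields the pointwise bound $|a\norm{X}^{2}+\langle B,X\rangle|\le aY^{2}+\norm{B}Y$, and raising to the $i$-th power preserves the inequality regardless of parity (since $t\mapsto t^{i}$ is monotone on $[0,\infty)$ and $aY^{2}+\norm{B}Y\ge 0$). Inserting this into each Taylor term and pulling the sum back under the expectation gives
\[
\E\left[\exp\left(a\norm{X}^{2}+\left\langle B,X\right\rangle \right)\right]\le\E\left[1+aY^{2}+\sum_{i=2}^{\infty}\frac{1}{i!}\left(aY^{2}+\norm{B}Y\right)^{i}\right].
\]

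Finally, I would invoke Lemma A.1 with its scalar random variable taken to be $Y$ and its parameters set to $(a_{\text{lem}},b_{\text{lem}})=(\norm{B},\sqrt{a})$. The admissibility condition $b_{\text{lem}}\le 1/(2\sigma)$ translates to $a\le 1/(4\sigma^{2})$, which is exactly the hypothesis of the corollary, and $a_{\text{lem}}=\norm{B}\ge 0$ is automatic. The lemma's conclusion then produces the target bound $\exp(3\sigma^{2}(\norm{B}^{2}+a))$. The main obstacle is the bookkeeping at the $i=1$ order: one must handle this term via the mean-zero property rather than through Cauchy--Schwarz, because a naive Cauchy--Schwarz bound on the first-order term would leave an extra $\norm{B}\E[Y]$ residual that cannot be absorbed into the target exponent. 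The mean-zero hypothesis is precisely what licenses isolating this single term and matching the Taylor series to the exact form expected by Lemma A.1.
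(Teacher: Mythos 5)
Your proposal is correct and follows essentially the same route as the paper's proof: expand the exponential as a Taylor series, use the mean-zero hypothesis to kill the first-order term $\E[\langle B,X\rangle]$, bound $\langle B,X\rangle$ by $\norm{B}\norm{X}$ via Cauchy--Schwarz in the terms of order $i\ge 2$, and invoke Lemma A.1 with the scalar variable $\norm{X}$ and parameters $(\norm{B},\sqrt{a})$, which converts the admissibility condition into $a\le\frac{1}{4\sigma^{2}}$. No substantive difference from the paper's argument.
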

\begin{proof}
We have
\begin{align*}
\E\left[\exp\left(a\norm X^{2}+\left\langle B,X\right\rangle \right)\right] & =\E\left[1+a^{2}\norm X^{2}+\left\langle B,X\right\rangle +\sum_{k=2}^{\infty}\frac{1}{k!}\left(a\norm X^{2}+\left\langle B,X\right\rangle \right)^{k}\right]\\
 & =\E\left[1+a\norm X^{2}+\sum_{k=2}^{\infty}\frac{1}{k!}\left(a\norm X^{2}+\left\langle B,X\right\rangle \right)^{k}\right]\\
 & \leq\E\left[1+a\norm X^{2}+\sum_{k=2}^{\infty}\frac{1}{k!}\left(a\norm X^{2}+\norm B\norm X\right)^{k}\right]\\
 & \leq\exp\left(3\sigma^{2}(a+\norm B^{2})\right).
\end{align*}
\end{proof}

We can now control martingale via:
\begin{lemma}
\label{lem:martingale-diff-bound}If we have a sequence of random
variable $X_{t}$ with $\F_{t}=\sigma(X_{1},X_{2},\dots,X_{t-1})$
for $t=1,2,\dots,T$. If we can bound $\E\left[\exp\left(X_{t}\right)\mid\F_{t}\right]\leq\exp(Y_{t})$,
where $Y_{t}$ is $\F_{t}$-measurable, then 
\[
\sum_{t=1}^{T}X_{t}\leq\sum_{t=1}^{T}Y_{t}+\log\left(1/\delta\right)
\]
holds with probability at least $1-\delta$.
\end{lemma}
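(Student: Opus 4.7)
The plan is to prove this lemma via the standard exponential supermartingale construction followed by Markov's inequality, a routine technique often attributed to a Chernoff-style argument combined with Ville's inequality.

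First I would construct the auxiliary process $M_t := \exp\!\left(\sum_{s=1}^{t}(X_s - Y_s)\right)$ for $t = 0, 1, \ldots, T$, with $M_0 := 1$. The goal is to verify that $(M_t)$ is a supermartingale with respect to the appropriate filtration. Care is needed with the indexing: since $\F_t = \sigma(X_1, \ldots, X_{t-1})$, both $X_s$ and $Y_s$ are $\F_{t}$-measurable for all $s \leq t-1$, and $Y_t$ is $\F_t$-measurable by hypothesis. Hence $M_{t-1}$ is $\F_t$-measurable, and we can pull it out of the conditional expectation:
\begin{align*}
\E[M_t \mid \F_t]
&= M_{t-1} \cdot \exp(-Y_t) \cdot \E[\exp(X_t) \mid \F_t] \\
&\leq M_{t-1} \cdot \exp(-Y_t) \cdot \exp(Y_t) = M_{t-1},
\end{align*}
using the hypothesis $\E[\exp(X_t) \mid \F_t] \leq \exp(Y_t)$. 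Taking total expectation and iterating gives $\E[M_T] \leq \E[M_0] = 1$.

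Next I would apply Markov's inequality to the nonnegative random variable $M_T$: for any $\delta > 0$,
\[
\Pr\!\left(M_T \geq \tfrac{1}{\delta}\right) \leq \delta \cdot \E[M_T] \leq \delta.
\]
Equivalently, with probability at least $1 - \delta$ we have $M_T < 1/\delta$, i.e.\ $\sum_{t=1}^T (X_t - Y_t) < \log(1/\delta)$, which after rearrangement yields the claimed inequality $\sum_{t=1}^T X_t \leq \sum_{t=1}^T Y_t + \log(1/\delta)$.

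There is no genuine obstacle here; this is a textbook exponential-martingale argument. The only subtlety worth being careful about is the filtration convention: one must check that $Y_t$ being $\F_t$-measurable allows it to be treated as a constant when conditioning on $\F_t$, and that the conditional MGF bound is stated with respect to the same filtration that makes $M_{t-1}$ measurable. Once these measurability details are aligned, the supermartingale property and Markov's inequality deliver the bound in essentially one line each.
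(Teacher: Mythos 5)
Your proof is correct and is essentially the same argument as the paper's: both reduce to showing $\E[\exp(X_t - Y_t)\mid \F_t]\le 1$, conclude $\E[\exp(\sum_{t}(X_t-Y_t))]\le 1$ by the tower property (you via a forward supermartingale, the paper via a backward induction on the tail sums $S_t=\sum_{i=t}^T(X_i-Y_i)$), and finish with Markov's inequality. The measurability point you flag is handled correctly.
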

\begin{proof}
Define the $Z_{t}=X_{t}-Y_{t}$ and $S_{t}=\sum_{i=t}^{T}Z_{i}$.
Then 
\begin{align*}
\E\left[\exp\left(Z_{t}\right)\mid\F_{t}\right] & =\E\left[\exp\left(X_{t}-Y_{t}\right)\mid\F_{t}\right]\\
 & =\exp\left(-Y_{t}\right)\E\left[\exp\left(X_{t}\right)\mid\F_{t}\right]\tag{\ensuremath{Y_{t}} is \ensuremath{\F_{t}}-measurable}\\
 & \leq\exp(-Y_{t})\exp(Y_{t})=\exp(0)=1.
\end{align*}
Then we show $\E\left[\exp\left(S_{1}\right)\right]\leq1$ via an
induction: we have $\E\left[\exp\left(S_{T}\right)\mid\F_{T}\right]=\E\left[\exp\left(Z_{T}\right)\mid\F_{T}\right]\leq1.$
Suppose that $\E\left[\exp\left(S_{t+1}\right)\mid\F_{t+1}\right]$
\begin{align*}
\E\left[\exp(S_{t})\mid\F_{t}\right] & =\E\left[\exp(Z_{t})\exp(S_{t+1})\mid\F_{t}\right]\\
 & =\E\left[\exp\left(Z_{t}\right)\E\left[\exp\left(S_{t+1}\right)\mid\F_{t+1}\right]\mid\F_{t}\right]\\
 & \leq\E\left[\exp(Z_{t})\mid\F_{t}\right]\leq1.
\end{align*}
Hence, this implies that $\E\left[\exp(S_{1})\right]\leq1$. By Markov's
inequality, this means that $S_{1}\leq\log(\frac{1}{\delta})$ with
probability at least $1-\delta$:
\begin{align*}
S_{1} & =\sum_{t=1}^{T}Z_{t}=\sum_{t=1}^{T}X_{t}-Y_{t}\leq\log\left(1/\delta\right)\\
\implies & \sum_{t=1}^{T}X_{t}\leq\sum_{t=1}^{T}Y_{t}+\log\left(1/\delta\right).
\end{align*}
\end{proof}

We can now prove Theorem \ref{thm:SGDSM-highprob}:
\begin{proof}[Proof of Theorem \ref{thm:SGDSM-highprob}]
Starting from Lemma \ref{lem:SM-start-pt} and letting $\alpha=\frac{(3-\beta)L}{2\left(1-\beta\right)}$
and $\Delta_{1}:=f(x_{1})-f_{*}$ for simplicity, we have 
\begin{align*}
 & \Delta_{T+1}-\Delta_{1}\\
\le & -\eta\sum_{i=1}^{T}\norm{\nabla_{t}}^{2}-\eta\sum_{i=1}^{T}\left\langle \nabla_{t},\xi_{t}\right\rangle +\alpha\eta^{2}\sum_{t=1}^{T}\norm{g_{t}}^{2}\\
= & -\eta\sum_{i=1}^{T}\norm{\nabla_{t}}^{2}-\eta\sum_{i=1}^{T}\left\langle \nabla_{t},\xi_{t}\right\rangle +\alpha\eta^{2}\sum_{t=1}^{T}\norm{\xi_{t}+\nabla_{t}}^{2}\\
= & \eta\left(\alpha\eta-1\right)\sum_{i=1}^{T}\norm{\nabla_{t}}^{2}+\eta\left(\alpha\eta-1\right)\sum_{i=1}^{T}\left\langle \nabla_{t},\xi_{t}\right\rangle +\alpha\eta^{2}\sum_{t=1}^{T}\norm{\xi_{t}}^{2}.
\end{align*}
Rearranging and defining some weight $w>0$, we have 
\[
w\left(\Delta_{T+1}-\Delta_{1}\right)+\eta w\left(1-\alpha\eta\right)\sum_{i=1}^{T}\norm{\nabla_{t}}^{2}\le\eta w\left(\alpha\eta-1\right)\sum_{i=1}^{T}\left\langle \nabla_{t},\xi_{t}\right\rangle +\alpha w\eta^{2}\sum_{t=1}^{T}\norm{\xi_{t}}^{2}.
\]
Let $\F_{t}:=\sigma\left(\xi_{1},\dots,\xi_{t-1}\right)$ denote the
natural filtration. Now, since $\E\left[\sum_{t=1}^{T}\left\langle \nabla_{t},\xi_{t}\right\rangle \right]=0$
and $\xi_{t}$ is $\sigma$-sub-gaussian, we have that $\nabla_{t}\in\F_{t}$
and so if $0\le w\alpha\eta^{2}\le\frac{1}{4\sigma^{2}}$, Corollary
\ref{cor:mgf-subgaussian} implies 
\[
\E\left[\exp\left(w\eta\left(\alpha\eta-1\right)\left\langle \nabla_{t},\xi_{t}\right\rangle +w\alpha\eta^{2}\norm{\xi_{t}}^{2}\right)\mid\F_{t}\right]\le\exp\left(3\sigma^{2}\left(w\alpha\eta^{2}+w^{2}\eta^{2}\left(\alpha\eta-1\right)^{2}\norm{\nabla_{t}}^{2}\right)\right),
\]
Then Lemma \ref{lem:martingale-diff-bound} implies that with probability
at least $1-\delta$, we have
\begin{align*}
w\eta\left(\alpha\eta-1\right)\sum_{t=1}^{T}\left\langle \nabla_{t},\xi_{t}\right\rangle +w\alpha\eta^{2}\sum_{t=1}^{T}\norm{\xi_{t}}^{2} & \leq3\sigma^{2}\sum_{t=1}^{T}\left(w\alpha\eta^{2}+w^{2}\eta^{2}\left(\alpha\eta-1\right)^{2}\norm{\nabla_{t}}^{2}\right)+\log\left(1/\delta\right)\\
 & =3\sigma^{2}w\eta^{2}\alpha T+3\sigma^{2}w^{2}\eta^{2}\left(\alpha\eta-1\right)^{2}\sum_{t=1}^{T}\norm{\nabla_{t}}^{2}+\log\left(1/\delta\right).
\end{align*}
Then with probability at least $1-\delta$, we have 
\begin{align*}
w\left(\Delta_{T+1}-\Delta_{1}\right)+\eta w\left(1-\alpha\eta\right)\sum_{i=1}^{T}\norm{\nabla_{t}}^{2} & \le\eta w\left(\alpha\eta-1\right)\sum_{i=1}^{T}\left\langle \nabla_{t},\xi_{t}\right\rangle +\alpha w\eta^{2}\sum_{t=1}^{T}\norm{\xi_{t}}^{2}\\
 & \le3\sigma^{2}w\eta^{2}\alpha T+3\sigma^{2}w^{2}\eta^{2}\left(\alpha\eta-1\right)^{2}\sum_{t=1}^{T}\norm{\nabla_{t}}^{2}+\log\left(1/\delta\right)\\
\implies\eta w\left(1-\alpha\eta\right)\sum_{i=1}^{T}\norm{\nabla_{t}}^{2} & \le w\Delta_{1}+3\sigma^{2}w\eta^{2}\alpha T+3\sigma^{2}w^{2}\eta^{2}\left(\alpha\eta-1\right)^{2}\sum_{t=1}^{T}\norm{\nabla_{t}}^{2}+\log\left(1/\delta\right).
\end{align*}
Combining the $\norm{\nabla_{t}}^{2}$ terms, we get 
\begin{equation}
\left(\eta w\left(1-\alpha\eta\right)-3\sigma^{2}w^{2}\eta^{2}\left(\alpha\eta-1\right)^{2}\right)\sum_{i=1}^{T}\norm{\nabla_{t}}^{2}\le w\Delta_{1}+3\sigma^{2}w\eta^{2}\alpha T+\log\left(1/\delta\right).\label{eq:before-final-bound-sgdsm}
\end{equation}
Setting $w=\frac{1}{12\sigma^{2}\eta}$, then 
\begin{align*}
\eta w\left(1-\alpha\eta\right)-3\sigma^{2}w^{2}\eta^{2}\left(\alpha\eta-1\right)^{2} & =\eta w\left(1-\alpha\eta-3\sigma^{2}w\eta\left(\alpha\eta-1\right)^{2}\right)\\
 & =\eta w\left(1-\alpha\eta-\frac{1}{4}\left(\alpha\eta-1\right)^{2}\right)\\
 & \ge\eta w\frac{1}{4}.
\end{align*}
 if $1-\alpha\eta\ge1/2$. Furthermore, we have that $w\alpha\eta^{2}=\frac{\alpha\eta}{12\sigma^{2}}\le\frac{1}{4\sigma^{2}}$
if $\eta\le\frac{3}{\alpha}$, as required for Corr \ref{cor:mgf-subgaussian}.
Hence, if $\eta\le\frac{1}{2\alpha}$ then both requirements are satisfied.
Consider the LHS of \ref{eq:before-final-bound-sgdsm}, we can bound
\begin{align*}
\left(\eta w\left(1-\alpha\eta\right)-3\sigma^{2}w^{2}\eta^{2}\left(\alpha\eta-1\right)^{2}\right)\sum_{i=1}^{T}\norm{\nabla_{t}}^{2} & \ge\eta w\frac{1}{4}\sum_{i=1}^{T}\norm{\nabla_{t}}^{2}
\end{align*}
Finally, we have 
\begin{align*}
\frac{\eta w}{4}\sum_{i=1}^{T}\norm{\nabla_{t}}^{2} & \le w\Delta_{1}+3\sigma^{2}w\eta^{2}\alpha T+\log\left(1/\delta\right)\\
\sum_{i=1}^{T}\norm{\nabla_{t}}^{2} & \le\frac{4}{\eta}\Delta_{1}+3\sigma^{2}\eta\alpha T+48\sigma^{2}\log\left(1/\delta\right).
\end{align*}
Setting $\eta=\min\left\{ \frac{1}{2\alpha}; \sqrt{\frac{\Delta_1}{\sigma^2 \alpha T}} \right\}$, we have that with probability
at least $1-\delta$
\begin{align*}
\sum_{i=1}^{T}\norm{\nabla_{t}}^{2}&\le\frac{4}{\eta}\Delta_{1}+3\sigma^{2}\eta\alpha T+48\sigma^{2}\log\left(1/\delta\right)\\&=\frac{4}{\min\left\{ \frac{1}{2\alpha};\sqrt{\frac{\Delta_{1}}{\sigma^{2}\alpha T}}\right\} }\Delta_{1}+3\sigma^{2}\min\left\{ \frac{1}{2\alpha};\sqrt{\frac{\Delta_{1}}{\sigma^{2}\alpha T}}\right\} \alpha T+48\sigma^{2}\log\left(1/\delta\right)\\&\le4\left(2\alpha+\sqrt{\frac{\sigma^{2}\alpha T}{\Delta_{1}}}\right)\Delta_{1}+3\sigma^{2}\sqrt{\frac{\Delta_{1}}{\sigma^{2}\alpha T}}\alpha T+48\sigma^{2}\log\left(1/\delta\right)\\&=8\Delta_{1}\alpha+4\sigma\sqrt{\alpha T\Delta_{1}}+3\sigma\sqrt{\Delta_{1}\alpha T}+48\sigma^{2}\log\left(1/\delta\right)\\&=8\Delta_{1}\alpha+7\sigma\sqrt{\alpha T\Delta_{1}}+48\sigma^{2}\log\left(1/\delta\right)\\
\implies\frac{1}{T}\sum_{i=1}^{T}\norm{\nabla_{t}}^{2}&\le\frac{8\Delta_{1}\alpha}{T}+\frac{7\sigma\sqrt{\alpha\Delta_{1}}}{\sqrt{T}}+\frac{48\sigma^{2}\log\left(1/\delta\right)}{T}.
\end{align*}
We are done.
\end{proof}

\section{Subset-Norm adaptive step size full theorem and proof\label{sec:Full-Theorem-and}}

We show the full result in Theorem \ref{thm:full-thm} with all the
polylog terms omitted from Theorem \ref{thm:main-thm-simplified}. 
\begin{theorem}
\label{thm:full-thm}Suppose that $f:\R^{d}\rightarrow\R$ is $L$-smooth
and lower bounded by $f_{*}$. Given unbiased stochastic gradients
$\widehat{\nabla}f(x_{t})$ with stochastic gradient noise $\xi_{t}:=\widehat{\nabla}f(x_{t})-\nabla f(x_{t})$
being $\sigma_{i}$-per-coordinate subgaussian for $i\in[d]$. For
partitions of the parameters into disjoint subsets $[d]=\bigcup_{i=0}^{c-1}\Psi_{i}$
with $\Psi_{i}\cap\Psi_{j}=\emptyset,\ \text{if }i\neq j$, the iterates
$x_{t}$ given by (\ref{eq:adagrad-subset-norm}) satisfies the following
inequality with probability at least $1-6c\delta$ (for failure probability
$\delta>0$): 
\begin{align*}
\frac{1}{T}\sum_{t=1}^{T}\norm{\nabla_{t}}_{2}^{2} & \le G(\delta)\cdot\left(\frac{4\sum_{i=0}^{c-1}\norm{\sigma_{\Psi_{i}}}}{\sqrt{T}}+\frac{I(\delta)}{T}\right),\ \text{where \ensuremath{G(\delta)} and \ensuremath{I(\delta)} are polylog terms:}\\
G(\delta):= & \frac{\Delta_{1}}{\eta}+H(\delta)+\left(\ln T/\delta\norm{\sigma}_{2}^{2}+c\eta L+4c^{3/2}\sigma_{\max}\sqrt{\log\frac{1}{\delta}}\right)\log\left(\frac{4\sqrt{T}\sum_{i=0}^{c-1}\norm{\sigma_{\Psi_{i}}}+I(\delta)}{b_{0,\min}}\right)\\
I(\delta):= & \norm{b_{0}}_{1}+\frac{2\Delta_{1}}{\eta}+\frac{8\log\frac{1}{\delta}}{b_{0,\min}}\norm{\sigma}_{2}^{2}+\sqrt{\log\frac{1}{\delta}}\sum_{i=0}^{c-1}\norm{\sigma_{\Psi_{i}}}+8\eta Lc\log\frac{4\eta L}{b_{0,\min}}\\
H(\delta):= & \sum_{i=0}^{c-1}\left(\ln\left(T/\delta\right)\norm{\sigma_{\Psi_{i}}}^{2}+2\alpha\right)\left(\frac{8\norm{\sigma_{\Psi_{i}}}^{2}\log\frac{1}{\delta}}{b_{0,i}^{2}}+2\log\left(1+\norm{\sigma_{\Psi_{i}}}^{2}T+\norm{\sigma_{\Psi_{i}}}^{2}\log\frac{1}{\delta}\right)\right).
\end{align*}
where $\norm{\sigma}_{2}^{2}=\sum_{i=1}^{d}\sigma_{i}^{2}$, $\norm{\sigma_{\Psi_{i}}}^{2}=\sum_{j\in\Psi_{i}}\sigma_{j}^{2}$,
$\sigma_{\max}=\max_{i\in[d]}\sigma_{i}$, $\Delta_{1}=f(x_{1})-f_{*}$,
$b_{0,\min}=\min_{i\in[d]}b_{0,i}>0$. 
\end{theorem}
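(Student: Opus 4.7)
The plan is to adapt the standard high-probability AdaGrad-Norm analysis (e.g., Ward et al. 2019 / Liu et al. 2023) \emph{per-subset}, then combine the $c$ bounds by a union bound. Since the $c$ adaptive step sizes $b_{t,i}$ are decoupled across subsets, each subset looks like its own instance of SGD with AdaGrad-Norm driven by the partial gradient $\widehat{\nabla}_{\Psi_i} f(x_t)$ with noise variance $\|\sigma_{\Psi_i}\|^2$. The first step is the standard smoothness descent inequality applied to the update rule: for each $t$, smoothness gives
\[ f(x_{t+1}) - f(x_t) \le -\eta \sum_{i=0}^{c-1} \frac{\langle \nabla_{\Psi_i} f(x_t), \widehat{\nabla}_{\Psi_i} f(x_t)\rangle}{b_{t,i}} + \frac{L\eta^2}{2}\sum_{i=0}^{c-1} \frac{\|\widehat{\nabla}_{\Psi_i} f(x_t)\|^2}{b_{t,i}^2}. \]
Writing $\widehat{\nabla}_{\Psi_i} f(x_t) = \nabla_{\Psi_i} f(x_t) + \xi_{t,\Psi_i}$, splitting the numerator into a ``signal'' term $\|\nabla_{\Psi_i} f(x_t)\|^2$ and ``noise'' cross-term $\langle \nabla_{\Psi_i} f(x_t), \xi_{t,\Psi_i}\rangle$, telescoping, and rearranging gives a per-subset expression bounding $\sum_t \|\nabla_{\Psi_i} f(x_t)\|^2 / b_{t,i}$ in terms of $\Delta_1/\eta$, a martingale noise term, and a quadratic second-order term.

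The core technical step is the standard AdaGrad decoupling: $b_{t,i}$ depends on $\widehat{\nabla}_{\Psi_i} f(x_t)$ and is therefore correlated with $\xi_{t,\Psi_i}$, breaking the martingale property. The fix is to replace $b_{t,i}$ by the predictable $b_{t-1,i}$ inside the noise cross-term using the identity $\tfrac{1}{b_{t-1,i}} - \tfrac{1}{b_{t,i}} = \tfrac{\|\widehat{\nabla}_{\Psi_i} f(x_t)\|^2}{b_{t-1,i} b_{t,i}(b_{t-1,i}+b_{t,i})}$, which produces a predictable cross-term (amenable to concentration) plus a residual that is absorbed into the $\sum_t \|\widehat{\nabla}_{\Psi_i} f(x_t)\|^2/b_{t,i}^2$ sum. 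For the second-moment sum, use the classical AdaGrad log bound $\sum_{t=1}^T \|\widehat{\nabla}_{\Psi_i} f(x_t)\|^2/b_{t,i}^2 \le 2\log(b_{T,i}/b_{0,i})$, then control $b_{T,i}^2 \le b_{0,i}^2 + \sum_t \|\widehat{\nabla}_{\Psi_i} f(x_t)\|^2$ with high probability using the coordinate-wise sub-gaussian assumption (summing per-coordinate MGFs gives $\|\xi_{t,\Psi_i}\|^2$ concentration with parameter $\|\sigma_{\Psi_i}\|^2$).

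For the predictable martingale term, apply Corollary \ref{cor:mgf-subgaussian} / Lemma \ref{lem:martingale-diff-bound} from the Subspace-Momentum section: for each subset $\Psi_i$, pick a small constant $a \propto 1/\|\sigma_{\Psi_i}\|^2$ and bound the MGF of $a\|\xi_{t,\Psi_i}\|^2/b_{t-1,i}^2 + \lambda \langle \nabla_{\Psi_i} f(x_t), \xi_{t,\Psi_i}\rangle/b_{t-1,i}$ conditional on $\mathcal{F}_t$, thereby producing a high-probability bound that contributes one copy of $\log(1/\delta)$ per subset. Finally, convert $\sum_t \|\nabla_{\Psi_i}f(x_t)\|^2 / b_{t,i}$ to $\sum_t \|\nabla_{\Psi_i}f(x_t)\|^2 / T$ by multiplying by an upper bound on $b_{T,i}$, which is of order $b_{0,i} + \|\sigma_{\Psi_i}\| \sqrt{T} + \|\nabla\|$-type quantities. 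This introduces the self-referential structure $\sum\|\nabla\|^2 \le (\text{const})\cdot(\sqrt{\sum\|\nabla\|^2} + \sqrt{T})$, which is resolved by the standard $x^2 \le ax + b \Rightarrow x \le a + \sqrt{b}$ trick, yielding the final rate.

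The main obstacle is the bookkeeping of the self-referential bound once $b_{T,i}$ itself depends on cumulative gradients and noise: one must carefully pick the concentration parameters so that (i) the residual quadratic term from the $1/b_{t,i} \to 1/b_{t-1,i}$ substitution is strictly dominated by $L\eta^2 \sum_t \|\widehat{\nabla}_{\Psi_i} f(x_t)\|^2/b_{t,i}^2$, (ii) the martingale MGF parameter $a$ stays in its validity range $a \le 1/(4\sigma^2)$, and (iii) all $\log(b_T/b_0)$ factors collapse into the single polylog $\log((\sqrt{T}\sum_i \|\sigma_{\Psi_i}\| + I(\delta))/b_{0,\min})$ appearing in $G(\delta)$. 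Taking a union bound over the $c$ subsets for each of the concentration events (roughly six events per subset: one for the cross-term martingale, one for the $\|\xi_{t,\Psi_i}\|^2$ sum concentration, plus auxiliary events controlling $b_{T,i}$) yields the overall $1 - 6c\delta$ failure probability claimed in the theorem.
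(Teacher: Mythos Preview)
Your high-level outline is sound and the per-subset AdaGrad framework is correct, but two technical choices diverge from the paper's route and are worth flagging.

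First, for decoupling the step size from the noise, you propose replacing $b_{t,i}$ by the predictable $b_{t-1,i}$ (the Ward-et-al.\ move). The paper instead introduces the proxy $a_{t,i} := \sqrt{b_{t-1,i}^2 + \|\nabla_{\Psi_i} f(x_t)\|^2}$, i.e., it replaces the \emph{stochastic} gradient norm by the \emph{true} gradient norm inside the accumulator (the Liu-et-al.\ 2023 move). This gives the bound $|1/a_{t,i} - 1/b_{t,i}| \le \|\xi_{t,\Psi_i}\|/(a_{t,i} b_{t,i})$, and crucially $\|\nabla_{t,\Psi_i}\|^2/a_{t,i}^2 \le 1$ always holds, which makes the subsequent martingale concentration straightforward (the conditional-variance proxy is uniformly bounded). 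With your $b_{t-1,i}$ proxy, $\|\nabla_{t,\Psi_i}\|^2/b_{t-1,i}^2$ is not a priori bounded, so the high-probability argument would need extra work. The residual from the $a_{t,i}$ decoupling produces a max-noise factor $M_{T,\Psi_i} = \max_t \|\xi_{t,\Psi_i}\|$, whose sub-gaussian tail bound is the source of the $\ln(T/\delta)\|\sigma_{\Psi_i}\|^2$ factors in $H(\delta)$.

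Second, and more substantially, the paper does \emph{not} resolve a self-referential inequality via the $x^2 \le ax + b$ trick. To bound $\|b_T\|_1$, it applies smoothness a second time but with the alternative decomposition $\langle \nabla_t, \widehat{\nabla}_t\rangle = \|\widehat{\nabla}_t\|^2 - \langle \xi_t, \widehat{\nabla}_t\rangle$ (writing $\nabla_t = \widehat{\nabla}_t - \xi_t$ rather than $\widehat{\nabla}_t = \nabla_t + \xi_t$). After AM--GM on the cross term this directly yields an upper bound on $\sum_t \|\widehat{\nabla}_{t,\Psi_i}\|^2/b_{t,i}$, which telescopes from below to $b_{T,i} - b_{0,i}$. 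The resulting bound on $\|b_T\|_1$ depends only on $\Delta_1$, noise-concentration terms, and $L$; no $\sum_t \|\nabla_t\|^2$ appears, so there is no self-reference at all. This is what produces the clean product form $G(\delta)\cdot(4\sum_i\|\sigma_{\Psi_i}\|/\sqrt{T} + I(\delta)/T)$ with the specific $I(\delta)$ in the statement. Your quadratic-trick route could plausibly be pushed through but would yield different, and likely messier, polylog constants than those claimed.
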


\subsection{Proof of Theorem \ref{thm:full-thm}}

For simplicity, in our analysis, we will use $\hn_{t,i}:=\widehat{\nabla}_{i}f(x_{t})$
and $\nabla_{t,i}:=\nabla_{i}f(x_{t})$ to denote the $i$-th coordinate
of the stochastic gradients and gradients at iterate $t$, respectively.
The proof utilizes techniques and follows the strategies \cite{liu2023high},
where the main effort is to adapt the techniques for handling subsets
from the AdaGrad-Norm and AdaGrad-Coordinate proofs in \cite{liu2023high}.
\begin{proof}
We write $\frac{\hn_{t}}{b_{t}}$ to denote $\left(\frac{\hn_{t}}{b_{t}}\right)_{k}=\frac{\hn_{k}f(x_{t})}{b_{t,i}}$
for $k\in\Psi_{i}$ (we will use this notation briefly to show some
steps and will not be crucial in the main analysis). We start with
the smoothness of $f$ and $\Delta_{t}:=f(x_{t})-f_{*}$.

\begin{align}
\Delta_{t+1}-\Delta_{t} & \leq\left\langle \nabla f(x_{t}),x_{t+1}-x_{t}\right\rangle +\frac{L}{2}\norm{x_{t+1}-x_{t}}^{2}\nonumber \\
 & =-\eta\left\langle \nabla_{t},\frac{\hn_{t}}{b_{t}}\right\rangle +\frac{\eta^{2}L}{2}\norm{\frac{\hn_{t}}{b_{t}}}^{2}\\
 & =-\eta\sum_{i=0}^{c-1}\sum_{j\in\Psi_{i}}\frac{\nabla_{t,j}\hn_{t,j}}{b_{t,i}}+\frac{\eta^{2}L}{2}\sum_{i=0}^{c-1}\sum_{j\in\Psi_{i}}\frac{\hn_{t,j}^{2}}{b_{t,i}^{2}}\nonumber \\
 & =-\eta\sum_{i=0}^{c-1}\sum_{j\in\Psi_{i}}\frac{\nabla_{t,j}\left(\xi_{t,j}+\nabla_{t,j}\right)}{b_{t,i}}+\frac{\eta^{2}L}{2}\sum_{i=0}^{c-1}\sum_{j\in\Psi_{i}}\frac{\hn_{t,j}^{2}}{b_{t,i}^{2}}\tag{\ensuremath{\xi_{t,i}=\hn_{t,i}-\nabla_{t,i}}}\nonumber \\
 & =-\eta\sum_{i=0}^{c-1}\sum_{j\in\Psi_{i}}\frac{\nabla_{t,j}^{2}}{b_{t,i}}-\eta\sum_{i=0}^{c-1}\sum_{j\in\Psi_{i}}\frac{\nabla_{t,j}\xi_{t,j}}{b_{t,i}}+\frac{\eta^{2}L}{2}\sum_{i=0}^{c-1}\sum_{j\in\Psi_{i}}\frac{\hn_{t,j}^{2}}{b_{t,i}^{2}}\nonumber \\
 & =-\eta\sum_{i=0}^{c-1}\sum_{j\in\Psi_{i}}\frac{\nabla_{t,j}^{2}}{b_{t,i}}-\eta\sum_{i=0}^{c-1}\sum_{j\in\Psi_{i}}\frac{\nabla_{t,j}\xi_{t,j}}{a_{t,i}}+\eta\sum_{i=0}^{c-1}\sum_{j\in\Psi_{i}}\left(\frac{1}{a_{t,i}}-\frac{1}{b_{t,i}}\right)\nabla_{t,j}\xi_{t,j}+\frac{\eta^{2}L}{2}\sum_{i=0}^{c-1}\sum_{j\in\Psi_{i}}\frac{\hn_{t,j}^{2}}{b_{t,i}^{2}}.\label{eq:basic-stop1-1}
\end{align}
Now, we analyze $\frac{1}{a_{t,i}}-\frac{1}{b_{t,i}}$ for $i=0,1,\dots,c-1$:
\begin{align*}
\left|\frac{1}{a_{t,i}}-\frac{1}{b_{t,i}}\right| & =\left|\frac{b_{t,i}-a_{t,i}}{a_{t,i}b_{t,i}}\right|\\
 & =\left|\frac{b_{t,i}^{2}-a_{t,i}^{2}}{a_{t,i}b_{t,i}\left(b_{t,i}+a_{t,i}\right)}\right|\\
 & =\left|\frac{b_{t-1,i}^{2}+\norm{\hn_{\Psi_{i}}f(x_{t})}^{2}-b_{t-1,i}^{2}-\norm{\nabla_{\Psi_{i}}f(x_{t})}^{2}}{a_{t,i}b_{t,i}\left(b_{t,i}+a_{t,i}\right)}\right|\\
 & =\left|\frac{\norm{\hn_{\Psi_{i}}f(x_{t})}^{2}-\norm{\nabla_{\Psi_{i}}f(x_{t})}^{2}}{a_{t,i}b_{t,i}\left(b_{t,i}+a_{t,i}\right)}\right|\\
 & =\left|\frac{\left(\norm{\hn_{\Psi_{i}}f(x_{t})}-\norm{\nabla_{\Psi_{i}}f(x_{t})}\right)\left(\norm{\hn_{\Psi_{i}}f(x_{t})}+\norm{\nabla_{\Psi_{i}}f(x_{t})}\right)}{a_{t,i}b_{t,i}\left(b_{t,i}+a_{t,i}\right)}\right|.
\end{align*}
Since $b_{t,i}=\sqrt{b_{t-1,i}^{2}+\norm{\hn_{\Psi_{i}}f(x_{t})}^{2}}\geq\norm{\hn_{\Psi_{i}}f(x_{t})}$
and $a_{t,i}=\sqrt{b_{t-1,i}^{2}+\norm{\nabla_{\Psi_{i}}f(x_{t})}^{2}}\geq\norm{\nabla_{\Psi_{i}}f(x_{t})}$,
we have 
\begin{align*}
\left|\frac{1}{a_{t,i}}-\frac{1}{b_{t,i}}\right| & \le\left|\frac{\left(\norm{\hn_{\Psi_{i}}f(x_{t})}-\norm{\nabla_{\Psi_{i}}f(x_{t})}\right)\left(\norm{\hn_{\Psi_{i}}f(x_{t})}+\norm{\nabla_{\Psi_{i}}f(x_{t})}\right)}{a_{t,i}b_{t,i}\left(\norm{\hn_{\Psi_{i}}f(x_{t})}+\norm{\nabla_{\Psi_{i}}f(x_{t})}\right)}\right|\\
 & \leq\left|\frac{\norm{\hn_{\Psi_{i}}f(x_{t})}-\norm{\nabla_{\Psi_{i}}f(x_{t})}}{a_{t,i}b_{t,i}}\right|\\
 & \leq\frac{\norm{\hn_{\Psi_{i}}f(x_{t})-\nabla_{\Psi_{i}}f(x_{t})}}{a_{t,i}b_{t,i}}\\
 & =\frac{\norm{\xi_{t,\Psi_{i}}}}{a_{t,i}b_{t,i}}.
\end{align*}
Hence, we have 
\[
\left|\frac{1}{a_{t,i}}-\frac{1}{b_{t,i}}\right|\leq\frac{\norm{\xi_{t,\Psi_{i}}}}{a_{t,i}b_{t,i}}.
\]
Then from \ref{eq:basic-stop1-1}, taking the absolute value of $\sum_{i=0}^{c-1}\sum_{j\in\Psi_{i}}\left(\frac{1}{a_{t,i}}-\frac{1}{b_{t,i}}\right)\nabla_{t,j}\xi_{t,j}$,
we can bound: 
\begin{align*}
\Delta_{t+1}-\Delta_{t} & \leq-\eta\sum_{i=0}^{c-1}\sum_{j\in\Psi_{i}}\frac{\nabla_{t,j}^{2}}{b_{t,i}}-\eta\sum_{i=0}^{c-1}\sum_{j\in\Psi_{i}}\frac{\nabla_{t,j}\xi_{t,j}}{a_{t,i}}+\eta\sum_{i=0}^{c-1}\sum_{j\in\Psi_{i}}\left|\frac{1}{a_{t,i}}-\frac{1}{b_{t,i}}\right|\left|\nabla_{t,j}\xi_{t,j}\right|+\frac{\eta^{2}L}{2}\sum_{i=0}^{c-1}\sum_{j\in\Psi_{i}}\frac{\hn_{t,j}^{2}}{b_{t,i}^{2}}\\
 & \leq-\eta\sum_{i=0}^{c-1}\sum_{j\in\Psi_{i}}\frac{\nabla_{t,j}^{2}}{b_{t,i}}-\eta\sum_{i=0}^{c-1}\sum_{j\in\Psi_{i}}\frac{\nabla_{t,j}\xi_{t,j}}{a_{t,i}}+\eta\sum_{i=0}^{c-1}\frac{\norm{\xi_{t,\Psi_{i}}}}{a_{t,i}b_{t,i}}\sum_{j\in\Psi_{i}}\left|\nabla_{t,j}\xi_{t,j}\right|+\frac{\eta^{2}L}{2}\sum_{i=0}^{c-1}\sum_{j\in\Psi_{i}}\frac{\hn_{t,j}^{2}}{b_{t,i}^{2}}\\
 & \overset{(1)}{\le}-\eta\sum_{i=0}^{c-1}\sum_{j\in\Psi_{i}}\frac{\nabla_{t,j}^{2}}{b_{t,i}}-\eta\sum_{i=0}^{c-1}\sum_{j\in\Psi_{i}}\frac{\nabla_{t,j}\xi_{t,j}}{a_{t,i}}+\eta\sum_{i=0}^{c-1}\frac{\norm{\xi_{t,\Psi_{i}}}}{a_{t,i}b_{t,i}}\norm{\nabla_{t,\Psi_{i}}}\norm{\xi_{t,\Psi_{i}}}+\frac{\eta^{2}L}{2}\sum_{i=0}^{c-1}\sum_{j\in\Psi_{i}}\frac{\hn_{t,j}^{2}}{b_{t,i}^{2}}\\
 & \le-\eta\sum_{i=0}^{c-1}\sum_{j\in\Psi_{i}}\frac{\nabla_{t,j}^{2}}{b_{t,i}}-\eta\sum_{i=0}^{c-1}\sum_{j\in\Psi_{i}}\frac{\nabla_{t,j}\xi_{t,j}}{a_{t,i}}\\
 & \quad\quad+\eta\sum_{i=0}^{c-1}\norm{\xi_{t,\Psi_{i}}}\left(\frac{\norm{\xi_{t,\Psi_{i}}}^{2}}{2b_{t,i}^{2}}+\frac{\norm{\nabla_{t,\Psi_{i}}}^{2}}{2a_{t,i}^{2}}\right)+\frac{\eta^{2}L}{2}\sum_{i=0}^{c-1}\sum_{j\in\Psi_{i}}\frac{\hn_{t,j}^{2}}{b_{t,i}^{2}},
\end{align*}
where (1) is due to $\sum_{j\in\Psi_{i}}\left|\nabla_{t,j}\xi_{t,j}\right|=\left\langle \left|\nabla_{t,\Psi_{i}}\right|,\left|\xi_{t,\Psi_{i}}\right|\right\rangle \le\norm{\nabla_{t,\Psi_{i}}}\norm{\xi_{t,\Psi_{i}}}$
and $\left|\cdot\right|$ denotes coordinate-wise absolute value when
we apply to vectors. The last inequality is due to $2ab\le a^{2}+b^{2}$.
Now, we can sum both sides for $t=1,\dots,T$ to telescope the LHS:
\begin{align*}
\Delta_{T+1}-\Delta_{1} & \le\sum_{t=1}^{T}\Bigl(-\eta\sum_{i=0}^{c-1}\sum_{j\in\Psi_{i}}\frac{\nabla_{t,j}^{2}}{b_{t,i}}-\eta\sum_{i=0}^{c-1}\sum_{j\in\Psi_{i}}\frac{\nabla_{t,j}\xi_{t,j}}{a_{t,i}}\\
 & +\eta\sum_{i=0}^{c-1}\norm{\xi_{t,\Psi_{i}}}\left(\frac{\norm{\xi_{t,\Psi_{i}}}^{2}}{2b_{t,i}^{2}}+\frac{\norm{\nabla_{t,\Psi_{i}}}^{2}}{2a_{t,i}^{2}}\right)+\frac{\eta^{2}L}{2}\sum_{i=0}^{c-1}\sum_{j\in\Psi_{i}}\frac{\hn_{t,j}^{2}}{b_{t,i}^{2}}\Bigr).
\end{align*}
Rearranging gives

\begin{align*}
\sum_{t=1}^{T}\sum_{i=0}^{c-1}\sum_{j\in\Psi_{i}}\frac{\nabla_{t,j}^{2}}{b_{t,i}} & \le\frac{\Delta_{1}-\Delta_{T+1}}{\eta}-\sum_{t=1}^{T}\underbrace{\sum_{i=0}^{c-1}\sum_{j\in\Psi_{i}}\frac{\nabla_{t,j}\xi_{t,j}}{a_{t,i}}}_{A}\\
 & \quad\quad+\underbrace{\sum_{t=1}^{T}\sum_{i=0}^{c-1}\norm{\xi_{t,\Psi_{i}}}\left(\frac{\norm{\xi_{t,\Psi_{i}}}^{2}}{2b_{t,i}^{2}}+\frac{\norm{\nabla_{t,\Psi_{i}}}^{2}}{2a_{t,i}^{2}}\right)}_{B}+\frac{\eta L}{2}\underbrace{\sum_{t=1}^{T}\sum_{i=0}^{c-1}\sum_{j\in\Psi_{i}}\frac{\hn_{t,j}^{2}}{b_{t,i}^{2}}}_{C}.
\end{align*}
On the LHS, we note that 
\[
\sum_{t=1}^{T}\sum_{i=0}^{c-1}\sum_{j\in\Psi_{i}}\frac{\nabla_{t,j}^{2}}{b_{t,i}}=\sum_{t=1}^{T}\sum_{i=0}^{c-1}\frac{\norm{\nabla_{t,\Psi_{i}}}^{2}}{b_{t,i}}.
\]
We now bound each term separately. It's easiest to bound $C$: $\sum_{t=1}^{T}\sum_{i=0}^{c-1}\sum_{j\in\Psi_{i}}\frac{\hn_{t,j}^{2}}{b_{t,i}^{2}}$:
\begin{align*}
\sum_{t=1}^{T}\sum_{i=0}^{c-1}\sum_{j\in\Psi_{i}}\frac{\hn_{t,j}^{2}}{b_{t,i}^{2}} & =\sum_{i=0}^{c-1}\sum_{t=1}^{T}\sum_{j\in\Psi_{i}}\frac{\hn_{t,j}^{2}}{b_{t,i}^{2}}=\sum_{i=1}^{d}\sum_{t=1}^{T}\frac{b_{t,i}^{2}-b_{t-1,i}^{2}}{b_{t,i}^{2}}\leq\sum_{i=1}^{d}2\log\frac{b_{T,i}}{b_{0,i}}.\\
 & =\sum_{i=0}^{c-1}\sum_{t=1}^{T}\frac{\norm{\hn_{t,\Psi_{i}}}^{2}}{b_{t,i}^{2}}\\
 & =\sum_{i=0}^{c-1}\sum_{t=1}^{T}\frac{b_{t,i}^{2}-b_{t-1,i}^{2}}{b_{t,i}^{2}}\\
 & =\sum_{i=0}^{c-1}\sum_{t=1}^{T}1-\frac{b_{t-1,i}^{2}}{b_{t,i}^{2}}\\
 & \le\sum_{i=0}^{c-1}\sum_{t=1}^{T}\log\frac{b_{t,i}^{2}}{b_{t-1,i}^{2}}\\
 & =2\sum_{i=0}^{c-1}\log\prod_{t=1}^{T}\frac{b_{t,i}}{b_{t-1,i}}\\
 & =2\sum_{i=0}^{c-1}\log\frac{b_{T,i}}{b_{0,i}}.
\end{align*}
We now have a useful inequality 
\begin{equation}
\sum_{t=1}^{T}\frac{\norm{\hn_{t,\Psi_{i}}}^{2}}{b_{t,i}^{2}}\le2\log\frac{b_{T,i}}{b_{0,i}},\ \forall i=0,\dots,c-1.\label{eq:C-bound}
\end{equation}
Next, we deal with $-\sum_{t=1}^{T}\sum_{i=0}^{c-1}\sum_{j\in\Psi_{i}}\frac{\nabla_{t,j}\xi_{t,j}}{a_{t,i}}$
via a martingale argument. Let $\F_{t}:=\sigma\left(\xi_{1},\dots,\xi_{t-1}\right)$
denote the natural filtration. Note that $x_{t}$ is $\F_{t}$-measurable.
For any $w>0$, we have for each $i\in[c]$: 
\begin{align*}
 & \E\left[\exp\left(-w\sum_{j\in\Psi_{i}}\frac{\nabla_{t,j}\xi_{t,j}}{a_{t,i}}-2w^{2}\sum_{j\in\Psi_{i}}\frac{\sigma_{j}^{2}\nabla_{t,j}^{2}}{a_{t,i}^{2}}\right)\mid\F_{t}\right]\\
 & =\exp\left(-2w^{2}\sum_{j\in\Psi_{i}}\frac{\sigma_{j}^{2}\nabla_{t,j}^{2}}{a_{t,i}^{2}}\right)\E\left[\exp\left(-w\sum_{j\in\Psi_{i}}\frac{\nabla_{t,j}\xi_{t,j}}{a_{t,i}}\right)\mid\F_{t}\right]\\
 & \leq1.
\end{align*}
Then a simple inductive argument and using Markov's inequality gives
with probability at least $1-\delta$: 
\[
-w\sum_{t=1}^{T}\sum_{j\in\Psi_{i}}\frac{\nabla_{t,j}\xi_{t,j}}{a_{t,i}}\leq2w^{2}\sum_{t=1}^{T}\sum_{j\in\Psi_{i}}\frac{\sigma_{j}^{2}\nabla_{t,j}^{2}}{a_{t,i}^{2}}+\log\frac{1}{\delta}.
\]
By a union bound across all $c$ subsets, we have w.p. at least $1-c\delta$:
\begin{equation}
-\sum_{t=1}^{T}\sum_{i=0}^{c-1}\sum_{j\in\Psi_{i}}\frac{\nabla_{t,j}\xi_{t,j}}{a_{t,i}}\leq\sum_{t=1}^{T}\sum_{i=0}^{c-1}\sum_{j\in\Psi_{i}}\frac{w\sigma_{j}^{2}\nabla_{t,j}^{2}}{a_{t,i}^{2}}+\frac{c}{w}\log\frac{1}{\delta}.\label{eq:bound-inner-prod-adagrad-sn-1}
\end{equation}
Let's call the event that (\ref{eq:bound-inner-prod-adagrad-sn-1})
happens $E_{1}$. Now, consider $\sum_{t=1}^{T}\sum_{i=0}^{c-1}\sum_{j\in\Psi_{i}}\frac{\nabla_{t,j}^{2}}{a_{t,i}^{2}}$.
We have 
\begin{align*}
\sum_{j\in\Psi_{i}}\frac{\nabla_{t,j}^{2}}{a_{t,i}^{2}} & =\frac{\norm{\nabla_{t,\Psi_{i}}}^{2}}{a_{t,i}^{2}}=\frac{\norm{\nabla_{t,\Psi_{i}}}^{2}}{b_{t-1,i}^{2}+\norm{\nabla_{t,\Psi_{i}}}^{2}}\\
 & \overset{(*)}{\leq}\frac{2\norm{\hn_{t,\Psi_{i}}}^{2}+2\norm{\xi_{t,\Psi_{i}}}^{2}}{b_{t-1,i}^{2}+2\norm{\hn_{t,\Psi_{i}}}^{2}+2\norm{\xi_{t,\Psi_{i}}}^{2}}\\
\frac{\norm{\nabla_{t,\Psi_{i}}}^{2}}{a_{t,i}^{2}} & \le2\frac{\norm{\hn_{t,\Psi_{i}}}^{2}}{b_{t,i}^{2}}+2\frac{\norm{\xi_{t,\Psi_{i}}}^{2}}{b_{t,i}^{2}}.
\end{align*}
For $(*)$ we use the fact that $\frac{x}{c+x}$ is an increasing
function and $\norm{\nabla_{t,\Psi_{i}}}^{2}=\norm{\hn_{t,\Psi_{i}}+\xi_{t,\Psi_{i}}}^{2}\le2\norm{\hn_{t,\Psi_{i}}}^{2}+2\norm{\xi_{t,\Psi_{i}}}^{2}$.
Let $\sigma_{\max}:=\max_{i\in[d]}\sigma_{i}$, then under event $E_{1}$,
we have with probability at least $1-c\delta$:

\begin{align*}
-\sum_{t=1}^{T}\sum_{i=0}^{c-1}\sum_{j\in\Psi_{i}}\frac{\nabla_{t,j}\xi_{t,j}}{a_{t,i}} & \leq\sum_{t=1}^{T}\sum_{i=0}^{c-1}\sum_{j\in\Psi_{i}}\frac{w\sigma_{j}^{2}\nabla_{t,j}^{2}}{a_{t,i}^{2}}+\frac{c}{w}\log\frac{1}{\delta}\\
 & \le w\sigma_{\max}^{2}\sum_{t=1}^{T}\sum_{i=0}^{c-1}\sum_{j\in\Psi_{i}}\frac{\nabla_{t,j}^{2}}{a_{t,i}^{2}}+\frac{c}{w}\log\frac{1}{\delta}\\
 & \le w\sigma_{\max}^{2}\sum_{t=1}^{T}\sum_{i=0}^{c-1}\left(2\frac{\norm{\hn_{t,\Psi_{i}}}^{2}}{b_{t,i}^{2}}+2\frac{\norm{\xi_{t,\Psi_{i}}}^{2}}{b_{t,i}^{2}}\right)+\frac{c}{w}\log\frac{1}{\delta}\\
 & =\underbrace{\sigma_{\max}\sqrt{c\log\frac{1}{\delta}}}_{=:\alpha}\sum_{t=1}^{T}\sum_{i=0}^{c-1}\left(2\frac{\norm{\hn_{t,\Psi_{i}}}^{2}}{b_{t,i}^{2}}+2\frac{\norm{\xi_{t,\Psi_{i}}}^{2}}{b_{t,i}^{2}}\right)+\sigma_{\max}\sqrt{c\log\frac{1}{\delta}}\tag{set \ensuremath{w:=\frac{\sqrt{c\log\frac{1}{\delta}}}{\sigma_{\max}}}}\\
 & =2\alpha\sum_{t=1}^{T}\sum_{i=0}^{c-1}\left(\frac{\norm{\hn_{t,\Psi_{i}}}^{2}}{b_{t,i}^{2}}+\frac{\norm{\xi_{t,\Psi_{i}}}^{2}}{b_{t,i}^{2}}\right)+\alpha.
\end{align*}
where the second to last equality is due to choosing $w=\frac{\sqrt{c\log\frac{1}{\delta}}}{\sigma_{\max}}$
and the last equality is letting $\alpha:=\sigma_{\max}\sqrt{c\log\frac{1}{\delta}}$
for readability.

Let $M_{T,i}=\max_{t\le T}\left|\xi_{t,i}\right|$. Using our notation,
we can define $M_{T,\Psi_{i}}:=\max_{t\le T}\norm{\xi_{t,\Psi_{i}}}$.
Under event $E_{1}$ (and our new bound for $C$), we have that with
probability at least $1-c\delta$: 
\begin{align}
\sum_{t=1}^{T}\sum_{i=0}^{c-1}\frac{\norm{\nabla_{t,\Psi_{i}}}^{2}}{b_{t,i}} & \overset{\text{(C)}}{\le}\frac{\Delta_{1}}{\eta}-\sum_{t=1}^{T}\sum_{i=0}^{c-1}\sum_{j\in\Psi_{i}}\frac{\nabla_{t,j}\xi_{t,j}}{a_{t,i}}+\sum_{t=1}^{T}\sum_{i=0}^{c-1}\norm{\xi_{t,\Psi_{i}}}\left(\frac{\norm{\xi_{t,\Psi_{i}}}^{2}}{2b_{t,i}^{2}}+\frac{\norm{\nabla_{t,\Psi_{i}}}^{2}}{2a_{t,i}^{2}}\right)+\eta L\sum_{i=0}^{c-1}\log\frac{b_{T,i}}{b_{0,i}}\nonumber \\
 & \le\frac{\Delta_{1}}{\eta}-\sum_{t=1}^{T}\sum_{i=0}^{c-1}\sum_{j\in\Psi_{i}}\frac{\nabla_{t,j}\xi_{t,j}}{a_{t,i}}\\
 & \quad\quad+\sum_{t=1}^{T}\sum_{i=0}^{c-1}M_{T,\Psi_{i}}\left(\frac{\norm{\xi_{t,\Psi_{i}}}^{2}}{2b_{t,i}^{2}}+\frac{\norm{\nabla_{t,\Psi_{i}}}^{2}}{2a_{t,i}^{2}}\right)+\eta L\sum_{i=0}^{c-1}\log\frac{b_{T,i}}{b_{0,i}}\tag{def of \ensuremath{M_{T,\Psi_{i}}}}\nonumber \\
 & \overset{\left(E_{1}\right)}{\le}\frac{\Delta_{1}}{\eta}+2\alpha\sum_{t=1}^{T}\sum_{i=0}^{c-1}\left(\underbrace{\frac{\norm{\hn_{t,\Psi_{i}}}^{2}}{b_{t,i}^{2}}}_{\text{bound with (C)}}+\frac{\norm{\xi_{t,\Psi_{i}}}^{2}}{b_{t,i}^{2}}\right)+\alpha+\nonumber \\
 & \quad\quad\quad\sum_{t=1}^{T}\sum_{i=0}^{c-1}M_{T,\Psi_{i}}\left(\frac{\norm{\xi_{t,\Psi_{i}}}^{2}}{2b_{t,i}^{2}}+\frac{\norm{\nabla_{t,\Psi_{i}}}^{2}}{2a_{t,i}^{2}}\right)+\eta L\sum_{i=0}^{c-1}\log\frac{b_{T,i}}{b_{0,i}}\\
 & \overset{\text{(C)}}{\le}\frac{\Delta_{1}}{\eta}+2\alpha\sum_{t=1}^{T}\sum_{i=0}^{c-1}\frac{\norm{\xi_{t,\Psi_{i}}}^{2}}{b_{t,i}^{2}}+\alpha+\nonumber \\
 & \quad\quad\quad\sum_{t=1}^{T}\sum_{i=0}^{c-1}M_{T,\Psi_{i}}\left(\frac{\norm{\xi_{t,\Psi_{i}}}^{2}}{2b_{t,i}^{2}}+\frac{\norm{\nabla_{t,\Psi_{i}}}^{2}}{2a_{t,i}^{2}}\right)+\left(\eta L+4\alpha\right)\sum_{i=0}^{c-1}\log\frac{b_{T,i}}{b_{0,i}}\\
 & \le\frac{\Delta_{1}}{\eta}+2\alpha\sum_{t=1}^{T}\sum_{i=0}^{c-1}\frac{\norm{\xi_{t,\Psi_{i}}}^{2}}{b_{t,i}^{2}}+\alpha+\label{eq:last-pt-before-bounding-mtpsi}\\
 & \quad\quad\quad\sum_{t=1}^{T}\sum_{i=0}^{c-1}M_{T,\Psi_{i}}\frac{\norm{\xi_{t,\Psi_{i}}}^{2}}{2b_{t,i}^{2}}+\sum_{t=1}^{T}\sum_{i=0}^{c-1}M_{T,\Psi_{i}}\frac{\norm{\nabla_{t,\Psi_{i}}}^{2}}{2a_{t,i}^{2}}+\left(\eta L+4\alpha\right)\sum_{i=0}^{c-1}\log\frac{b_{T,i}}{b_{0,i}}.
\end{align}
Let us turn our attention to $M_{T,\Psi_{i}}:=\max_{t\le T}\norm{\xi_{t,\Psi_{i}}}$.
Note that 
\begin{align*}
\Pr\left[\max_{t\in[T]}\norm{\xi_{t,\Psi_{i}}}^{2}\geq A\right] & =\Pr\left[\exp\left(\frac{\max_{t\in[T]}\norm{\xi_{t,\Psi_{i}}}^{2}}{w}\right)\geq\exp\left(\frac{A}{w}\right)\right]\tag{for \ensuremath{w>0}}\\
 & \leq\exp\left(-\frac{A}{w}\right)\E\left[\exp\left(\frac{\max_{t\in[T]}\norm{\xi_{t,\Psi_{i}}}^{2}}{w}\right)\right]\tag{Markov}\\
 & =\exp\left(-\frac{A}{w}\right)\E\left[\max_{t\in[T]}\exp\left(\frac{\norm{\xi_{t,\Psi_{i}}}^{2}}{w}\right)\right]\\
 & \leq\exp\left(-\frac{A}{w}\right)\sum_{t\in[T]}\E\left[\exp\left(\frac{\norm{\xi_{t,\Psi_{i}}}^{2}}{w}\right)\right].
\end{align*}
We have 
\begin{align*}
\E\left[\exp\left(\frac{\norm{\xi_{t,\Psi_{i}}}^{2}}{w}\right)\right] & =\E\left[\exp\left(\frac{\sum_{j\in\Psi_{i}}\xi_{t,j}^{2}}{w}\right)\right]\\
 & =\E\left[\exp\left(\frac{\sum_{j\in\Psi_{i}}\xi_{t,j}^{2}}{w}\right)\right]\\
 & =\E\left[\prod_{j\in\Psi_{i}}\exp\left(\frac{\xi_{t,j}^{2}}{w}\right)\right]\\
 & =\prod_{j\in\Psi_{i}}\E\left[\exp\left(\frac{\xi_{t,j}^{2}}{w}\right)\right].\tag{independence}
\end{align*}
Since sub-gaussianity give us 
\[
\E\left[\exp\left(\lambda^{2}\xi_{t,i}^{2}\right)\right]\leq\exp\left(\lambda^{2}\sigma_{i}^{2}\right),\forall\left|\lambda\right|\leq\frac{1}{\sigma_{i}},\forall i\in\left[d\right],
\]
we have $\E\left[\exp\left(\frac{\xi_{t,j}^{2}}{w}\right)\right]\le\exp\left(\frac{\sigma_{j}^{2}}{w}\right)$
if $\sqrt{\frac{1}{w}}\le\frac{1}{\sigma_{j}}$. We pick $w:=\norm{\sigma_{\Psi_{i}}}^{2}=\sum_{j\in\Psi_{i}}\sigma_{j}^{2}\ge\sigma_{j}^{2},\ \forall j\in\Psi_{i}$
. Hence, we have 
\begin{align}
\E\left[\exp\left(\frac{\norm{\xi_{t,\Psi_{i}}}^{2}}{\norm{\sigma_{\Psi_{i}}}^{2}}\right)\right] & \le\prod_{j\in\Psi_{i}}\exp\left(\frac{\sigma_{j}^{2}}{\norm{\sigma_{\Psi_{i}}}^{2}}\right)\nonumber \\
 & =\exp\left(\frac{\norm{\sigma_{\Psi_{i}}}^{2}}{\norm{\sigma_{\Psi_{i}}}^{2}}\right)=1.\label{eq:subgaussianity-of-xi-t}
\end{align}
We have actually shown that $\xi_{t,\Psi_{i}}$ is a $\norm{\sigma_{\Psi_{i}}}^{2}$-subgaussian
random variable in $\R^{k}$ (see Proposition 2.5.2 in \cite{vershynin2018high}).
This fact will come in handy later. Now, we have 
\begin{align*}
\Pr\left[\max_{t\in[T]}\norm{\xi_{t,\Psi_{i}}}^{2}\geq A\right] & \le\exp\left(-\frac{A}{\norm{\sigma_{\Psi_{i}}}^{2}}\right)\sum_{t\in[T]}\E\left[\exp\left(\frac{\norm{\xi_{t,\Psi_{i}}}^{2}}{\norm{\sigma_{\Psi_{i}}}^{2}}\right)\right]\\
 & =\exp\left(-\frac{A}{\norm{\sigma_{\Psi_{i}}}^{2}}\right)T.
\end{align*}
Setting $\exp\left(-\frac{A}{\norm{\sigma_{\Psi_{i}}}^{2}}\right)T=\delta$
gives $A=\norm{\sigma_{\Psi_{i}}}^{2}\ln T/\delta$. Hence, we have
with probability at least $1-\delta$, 
\begin{equation}
M_{T,\Psi_{i}}=\max_{t\in[T]}\norm{\xi_{t,\Psi_{i}}}^{2}\le\norm{\sigma_{\Psi_{i}}}^{2}\ln T/\delta.\label{eq:bound-on-max-noise-sn}
\end{equation}
Union bounding across all $i=0,1,\dots,c-1$, we have that with probability
at least $1-c\delta$, 
\begin{equation}
M_{T,\Psi_{i}}\le\norm{\sigma_{\Psi_{i}}}^{2}\ln T/\delta,\ \forall i=0,1,\dots,c-1.\label{eq:eq:bound-on-all-max-noise-sn}
\end{equation}
Let us denote the event in (\ref{eq:eq:bound-on-all-max-noise-sn})
by $E_{2}$. Combining it with event $E_{1}$ and starting from \ref{eq:last-pt-before-bounding-mtpsi},
we have that with probability $1-c\delta$: 
\begin{align*}
\sum_{t=1}^{T}\sum_{i=0}^{c-1}\frac{\norm{\nabla_{t,\Psi_{i}}}^{2}}{b_{t,i}} & \le\frac{\Delta_{1}}{\eta}+2\alpha\sum_{t=1}^{T}\sum_{i=0}^{c-1}\frac{\norm{\xi_{t,\Psi_{i}}}^{2}}{b_{t,i}^{2}}+\alpha+\sum_{t=1}^{T}\sum_{i=0}^{c-1}M_{T,\Psi_{i}}\frac{\norm{\xi_{t,\Psi_{i}}}^{2}}{2b_{t,i}^{2}}+\\
 & \quad\quad\sum_{t=1}^{T}\sum_{i=0}^{c-1}M_{T,\Psi_{i}}\frac{\norm{\nabla_{t,\Psi_{i}}}^{2}}{2a_{t,i}^{2}}+\left(\eta L+4\alpha\right)\sum_{i=0}^{c-1}\log\frac{b_{T,i}}{b_{0,i}}\\
 & \le\frac{\Delta_{1}}{\eta}+2\alpha\sum_{t=1}^{T}\sum_{i=0}^{c-1}\frac{\norm{\xi_{t,\Psi_{i}}}^{2}}{b_{t,i}^{2}}+\ln T/\delta\sum_{t=1}^{T}\sum_{i=0}^{c-1}\norm{\sigma_{\Psi_{i}}}^{2}\frac{\norm{\xi_{t,\Psi_{i}}}^{2}}{2b_{t,i}^{2}}+\alpha+\\
 & \quad\quad\ln T/\delta\sum_{t=1}^{T}\sum_{i=0}^{c-1}\norm{\sigma_{\Psi_{i}}}^{2}\frac{\norm{\nabla_{t,\Psi_{i}}}^{2}}{2a_{t,i}^{2}}+\left(\eta L+4\alpha\right)\sum_{i=0}^{c-1}\log\frac{b_{T,i}}{b_{0,i}}\\
 & =\frac{\Delta_{1}}{\eta}+\sum_{i=0}^{c-1}\left(\ln T/\delta\frac{\norm{\sigma_{\Psi_{i}}}^{2}}{2}+2\alpha\right)\sum_{t=1}^{T}\frac{\norm{\xi_{t,\Psi_{i}}}^{2}}{b_{t,i}^{2}}+\alpha+\\
 & \quad\quad\ln T/\delta\sum_{i=0}^{c-1}\frac{\norm{\sigma_{\Psi_{i}}}^{2}}{2}\sum_{t=1}^{T}\frac{\norm{\nabla_{t,\Psi_{i}}}^{2}}{a_{t,i}^{2}}+\left(\eta L+4\alpha\right)\sum_{i=0}^{c-1}\log\frac{b_{T,i}}{b_{0,i}}.
\end{align*}
Recall that $\frac{\norm{\nabla_{t,\Psi_{i}}}^{2}}{a_{t,i}^{2}}\le2\frac{\norm{\hn_{t,\Psi_{i}}}^{2}}{b_{t,i}^{2}}+2\frac{\norm{\xi_{t,\Psi_{i}}}^{2}}{b_{t,i}^{2}}$,
we then have 
\begin{align*}
\ln T/\delta\sum_{i=0}^{c-1}\frac{\norm{\sigma_{\Psi_{i}}}^{2}}{2}\sum_{t=1}^{T}\frac{\norm{\nabla_{t,\Psi_{i}}}^{2}}{a_{t,i}^{2}} & \le\ln T/\delta\sum_{i=0}^{c-1}\frac{\norm{\sigma_{\Psi_{i}}}^{2}}{2}\sum_{t=1}^{T}\left(2\frac{\norm{\hn_{t,\Psi_{i}}}^{2}}{b_{t,i}^{2}}+2\frac{\norm{\xi_{t,\Psi_{i}}}^{2}}{b_{t,i}^{2}}\right)\\
 & =\ln T/\delta\sum_{i=0}^{c-1}\norm{\sigma_{\Psi_{i}}}^{2}\sum_{t=1}^{T}\frac{\norm{\hn_{t,\Psi_{i}}}^{2}}{b_{t,i}^{2}}+\ln T/\delta\sum_{i=0}^{c-1}\norm{\sigma_{\Psi_{i}}}^{2}\sum_{t=1}^{T}\frac{\norm{\xi_{t,\Psi_{i}}}^{2}}{b_{t,i}^{2}}\\
 & \le\ln T/\delta\sum_{i=0}^{c-1}\norm{\sigma_{\Psi_{i}}}^{2}\log\frac{b_{T,i}}{b_{0,i}}+\ln T/\delta\sum_{i=0}^{c-1}\norm{\sigma_{\Psi_{i}}}^{2}\sum_{t=1}^{T}\frac{\norm{\xi_{t,\Psi_{i}}}^{2}}{b_{t,i}^{2}}.\tag{from \ref{eq:C-bound}}
\end{align*}
Hence, we have with probability at least $1-2c\delta$: 
\begin{align}
\sum_{t=1}^{T}\sum_{i=0}^{c-1}\frac{\norm{\nabla_{t,\Psi_{i}}}^{2}}{b_{t,i}} & \le\frac{\Delta_{1}}{\eta}+\sum_{i=0}^{c-1}\left(\ln T/\delta\norm{\sigma_{\Psi_{i}}}^{2}+2\alpha\right)\sum_{t=1}^{T}\frac{\norm{\xi_{t,\Psi_{i}}}^{2}}{b_{t,i}^{2}}\\
 & \quad\quad+\alpha+\sum_{i=0}^{c-1}\ln T/\delta\norm{\sigma_{\Psi_{i}}}^{2}\log\frac{b_{T,i}}{b_{0,i}}+\sum_{i=0}^{c-1}\left(\eta L+4\alpha\right)\log\frac{b_{T,i}}{b_{0,i}}\nonumber \\
 & =\frac{\Delta_{1}}{\eta}+\sum_{i=0}^{c-1}\left(\ln T/\delta\norm{\sigma_{\Psi_{i}}}^{2}+2\alpha\right)\sum_{t=1}^{T}\frac{\norm{\xi_{t,\Psi_{i}}}^{2}}{b_{t,i}^{2}}\\
 & \quad\quad+\alpha+\sum_{i=0}^{c-1}\left(\ln T/\delta\norm{\sigma_{\Psi_{i}}}^{2}+\eta L+4\alpha\right)\log\frac{b_{T,i}}{b_{0,i}}.\label{eq:last-pt-before-bounding-xit-btsq}
\end{align}
Now, we bound $\sum_{t=1}^{T}\frac{\norm{\xi_{t,\Psi_{i}}}^{2}}{b_{t,i}^{2}}$
and $\log\frac{b_{T,i}}{b_{0,i}}$. We need to first lower bound $\sum_{s=1}^{t}\norm{\hn_{t,\Psi_{i}}}^{2}$.
We proceed by noting that 
\begin{align*}
\|\hn_{t,\Psi_{i}}\|^{2} & =\|\nabla_{t,\Psi_{i}}+\xi_{t,\Psi_{i}}\|^{2}\\
 & =\|\nabla_{t,\Psi_{i}}\|^{2}+2\langle\xi_{t,\Psi_{i}},\nabla_{t,\Psi_{i}}\rangle+\|\xi_{t,\Psi_{i}}\|^{2}\\
\Rightarrow\|\nabla_{t,\Psi_{i}}\|-\|\hn_{t,\Psi_{i}}\|^{2}+\|\xi_{t,\Psi_{i}}\|^{2} & =2\langle\xi_{t,\Psi_{i}},\nabla_{t,\Psi_{i}}\rangle.
\end{align*}
Define for $t\in\left\{ 0,1,\cdots,T\right\} $ and some constant
$v_{s}$ to be specified later: 
\begin{align*}
U_{t+1} & =\exp\left(\sum_{s=1}^{t}w_{s}\left(\|\nabla_{s,\Psi_{i}}\|-\|\hn_{s,\Psi_{i}}\|^{2}+\|\xi_{s,\Psi_{i}}\|^{2}\right)-v_{s}\|\nabla_{s,\Psi_{i}}\|^{2}\right)\\
 & =U_{t}\cdot\exp\left(w_{t}\left(\|\nabla_{t,\Psi_{i}}\|-\|\hn_{t,\Psi_{i}}\|^{2}+\|\xi_{t,\Psi_{i}}\|^{2}\right)-v_{t}\|\nabla_{t,\Psi_{i}}\|^{2}\right)\\
 & =U_{t}\cdot\exp\left(w_{t}\left(2\langle\xi_{t,\Psi_{i}},\nabla_{t,\Psi_{i}}\rangle\right)-v_{t}\|\nabla_{t,\Psi_{i}}\|^{2}\right).
\end{align*}
First, note that $U_{t}\in\F_{t}$. We show that $U_{t}$ is a supermartingale
\begin{align*}
\E\left[U_{t+1}\mid\F_{t}\right] & =\E\left[U_{t}\cdot\exp\left(w_{t}\left(2\langle\xi_{t,\Psi_{i}},\nabla_{t,\Psi_{i}}\rangle\right)-v_{t}\|\nabla_{t,\Psi_{i}}\|^{2}\right)\mid\F_{t}\right]\\
 & =U_{t}\exp\left(-v_{t}\|\nabla_{t,\Psi_{i}}\|^{2}\right)\E\left[\exp\left(2w_{t}\langle\xi_{t,\Psi_{i}},\nabla_{t,\Psi_{i}}\rangle\right)\mid\F_{t}\right]\\
 & \overset{(*)}{\le}U_{t}\exp\left(-v_{t}\|\nabla_{t,\Psi_{i}}\|^{2}\right)\E\left[\exp\left(4w_{t}^{2}\norm{\sigma_{\Psi_{i}}}^{2}\|\nabla_{t,\Psi_{i}}\|^{2}\right)\mid\F_{t}\right]\\
 & =U_{t},\tag{\ensuremath{v_{t}}=4\ensuremath{w_{t}^{2}\norm{\sigma_{\Psi_{i}}}^{2}}}
\end{align*}
where $(*)$ is due to Lemma 2.2 of \cite{liu2023high} and the fact
that $\xi_{t,\Psi_{i}}$ is $\norm{\sigma_{\Psi_{i}}}^{2}$-subgaussian
from (\ref{eq:subgaussianity-of-xi-t}). Hence, by Ville's supermartingale
inequality, we have 
\[
\Pr\left[\max_{t\in\left[T+1\right]}U_{t}\geq\delta^{-1}\right]\leq\delta\E\left[U_{1}\right]=\delta.
\]
This implies w.p. $\geq1-\delta$, $\forall0\leq t\leq T$: 
\begin{align*}
\sum_{s=1}^{t}w_{s}\left(\|\nabla_{s,\Psi_{i}}\|-\|\hn_{s,\Psi_{i}}\|^{2}+\|\xi_{s,\Psi_{i}}\|^{2}\right)-v_{s}\|\nabla_{s,\Psi_{i}}\|^{2} & \leq\log\frac{1}{\delta}\\
\implies\sum_{s=1}^{t}\left(w_{s}-4w_{s}^{2}\norm{\sigma_{\Psi_{i}}}^{2}\right)\|\nabla_{s,\Psi_{i}}\|^{2}+\sum_{s=1}^{t}w_{s}\|\xi_{s,\Psi_{i}}\|^{2} & \leq\sum_{s=1}^{t}w_{s}\|\hn_{s,\Psi_{i}}\|^{2}+\log\frac{1}{\delta}\\
\iff\sum_{s=1}^{t}\left(1-4w_{s}\norm{\sigma_{\Psi_{i}}}^{2}\right)\|\nabla_{s,\Psi_{i}}\|^{2}+\sum_{s=1}^{t}\|\xi_{s,\Psi_{i}}\|^{2} & \leq\sum_{s=1}^{t}\|\hn_{s,\Psi_{i}}\|^{2}+\frac{1}{w_{s}}\log\frac{1}{\delta}.
\end{align*}
Set $w_{s}=\frac{1}{4\norm{\sigma_{\Psi_{i}}}^{2}}$ to get 
\begin{equation}
\sum_{s=1}^{t}\|\xi_{s,\Psi_{i}}\|^{2}\leq\sum_{s=1}^{t}\|\hn_{s,\Psi_{i}}\|^{2}+4\norm{\sigma_{\Psi_{i}}}^{2}\log\frac{1}{\delta},\ \forall t\le T.\label{eq:bound-on-sum-xi_t}
\end{equation}
We are now ready to bound $\sum_{t=1}^{T}\frac{\norm{\xi_{t,\Psi_{i}}}^{2}}{b_{t,i}^{2}}$.
Starting by applying (\ref{eq:bound-on-sum-xi_t}), we have that with
probability at least $1-\delta$ 
\begin{align*}
\sum_{t=1}^{T}\frac{\norm{\xi_{t,\Psi_{i}}}^{2}}{b_{t,i}^{2}} & =\sum_{t=1}^{T}\frac{\norm{\xi_{t,\Psi_{i}}}^{2}}{b_{0,i}^{2}+\sum_{s=1}^{t}\norm{\hn_{t,\Psi_{i}}}^{2}}\\
 & \le\sum_{t=1}^{T}\frac{\norm{\xi_{t,\Psi_{i}}}^{2}}{b_{0,i}^{2}+\left(\sum_{s=1}^{t}\|\xi_{s,\Psi_{i}}\|^{2}-4\norm{\sigma_{\Psi_{i}}}^{2}\log\frac{1}{\delta}\right)^{+}}
\end{align*}
where $\left(x\right)^{+}=\max\left\{ x,0\right\} $. Let $\tau=\max\left(\left\{ 0\right\} \cup\left\{ t\in\mathbb{N}_{\leq T}\mid\sum_{s=1}^{t}\norm{\xi_{s,\Psi_{i}}}^{2}\leq2C\right\} \right)$
for some $C\ge0$. We have 
\begin{align*}
\sum_{t=1}^{T}\frac{\norm{\xi_{t,\Psi_{i}}}^{2}}{b_{t,i}^{2}} & =\sum_{t=1}^{\tau}\frac{\norm{\xi_{t,\Psi_{i}}}^{2}}{b_{t,i}^{2}}+\sum_{t=\tau+1}^{T}\frac{\norm{\xi_{t,\Psi_{i}}}^{2}}{b_{0,i}^{2}+\sum_{s=1}^{t}\norm{\hn_{t,\Psi_{i}}}^{2}}\\
 & \le\frac{1}{b_{0,i}^{2}}\sum_{t=1}^{\tau}\norm{\xi_{t,\Psi_{i}}}^{2}+\sum_{t=\tau+1}^{T}\frac{\norm{\xi_{t,\Psi_{i}}}^{2}}{b_{0,i}^{2}+\sum_{s=1}^{t}\|\xi_{s,\Psi_{i}}\|^{2}-4\norm{\sigma_{\Psi_{i}}}^{2}\log\frac{1}{\delta}}\\
 & \le\frac{2C}{b_{0,i}^{2}}+\sum_{t=\tau+1}^{T}\frac{\norm{\xi_{t,\Psi_{i}}}^{2}}{b_{0,i}^{2}+\sum_{s=1}^{t}\|\xi_{s,\Psi_{i}}\|^{2}-4\norm{\sigma_{\Psi_{i}}}^{2}\log\frac{1}{\delta}}.
\end{align*}
Now, since $\frac{\sum_{s=1}^{t}\norm{\xi_{s,\Psi_{i}}}^{2}}{2}\ge C$
for $t>\tau$, we have $b_{0,i}^{2}+\sum_{s=1}^{t}\|\xi_{s,\Psi_{i}}\|^{2}-4\norm{\sigma_{\Psi_{i}}}^{2}\log\frac{1}{\delta}\ge b_{0,i}^{2}-4\norm{\sigma_{\Psi_{i}}}^{2}\log\frac{1}{\delta}+C+\frac{1}{2}\sum_{s=1}^{t}\|\xi_{s,\Psi_{i}}\|^{2}$.
If $b_{0,i}^{2}-4\norm{\sigma_{\Psi_{i}}}^{2}\log\frac{1}{\delta}\ge0$,
then we pick $C=0$ and $b_{0,i}^{2}-4\norm{\sigma_{\Psi_{i}}}^{2}\log\frac{1}{\delta}+C+\frac{1}{2}\sum_{s=1}^{t}\|\xi_{s,\Psi_{i}}\|^{2}\ge\frac{1}{2}\sum_{s=1}^{t}\|\xi_{s,\Psi_{i}}\|^{2}$.
If $b_{0,i}^{2}-4\norm{\sigma_{\Psi_{i}}}^{2}\log\frac{1}{\delta}<0$,
we pick $C=4\norm{\sigma_{\Psi_{i}}}^{2}\log\frac{1}{\delta}-b_{0,i}^{2}>0$,
which gives $b_{0,i}^{2}-4\norm{\sigma_{\Psi_{i}}}^{2}\log\frac{1}{\delta}+C+\frac{1}{2}\sum_{s=1}^{t}\|\xi_{s,\Psi_{i}}\|^{2}\ge\frac{1}{2}\sum_{s=1}^{t}\|\xi_{s,\Psi_{i}}\|^{2}$.
In either case, we have $b_{0,i}^{2}-4\norm{\sigma_{\Psi_{i}}}^{2}\log\frac{1}{\delta}+C+\frac{1}{2}\sum_{s=1}^{t}\|\xi_{s,\Psi_{i}}\|^{2}\ge\frac{1}{2}\sum_{s=1}^{t}\|\xi_{s,\Psi_{i}}\|^{2}$.
Hence, letting $C=\max\left(0,4\norm{\sigma_{\Psi_{i}}}^{2}\log\frac{1}{\delta}-b_{0,i}^{2}\right)\le4\norm{\sigma_{\Psi_{i}}}^{2}\log\frac{1}{\delta}$,
we have w.p. at least $1-\delta$: 
\begin{align*}
\sum_{t=1}^{T}\frac{\norm{\xi_{t,\Psi_{i}}}^{2}}{b_{t,i}^{2}} & \le\frac{2C}{b_{0,i}^{2}}+2\sum_{t=\tau+1}^{T}\frac{\norm{\xi_{t,\Psi_{i}}}^{2}}{\sum_{s=1}^{t}\|\xi_{s,\Psi_{i}}\|^{2}}\\
 & \le\frac{2C}{b_{0,i}^{2}}+2\sum_{t=1}^{T}\frac{\norm{\xi_{t,\Psi_{i}}}^{2}}{\sum_{s=1}^{t}\|\xi_{s,\Psi_{i}}\|^{2}}\\
 & \le\frac{8\norm{\sigma_{\Psi_{i}}}^{2}\log\frac{1}{\delta}}{b_{0,i}^{2}}+2\sum_{t=1}^{T}\frac{\norm{\xi_{t,\Psi_{i}}}^{2}}{\sum_{s=1}^{t}\|\xi_{s,\Psi_{i}}\|^{2}}.
\end{align*}
Let $X_{t}=1+\sum_{s=1}^{t}\norm{\xi_{s,\Psi_{i}}}^{2}=X_{t-1}+\norm{\xi_{t,\Psi_{i}}}^{2}$,
where $X_{0}=1$. Then, 
\begin{align*}
\sum_{t=1}^{T}\frac{\norm{\xi_{t,\Psi_{i}}}^{2}}{\sum_{s=1}^{t}\|\xi_{s,\Psi_{i}}\|^{2}} & =\sum_{t=1}^{T}\frac{X_{t}-X_{t-1}}{X_{t}}=\sum_{t=1}^{T}1-\frac{X_{t-1}}{X_{t}}\\
 & \le\sum_{t=1}^{T}\log\left(\frac{X_{t}}{X_{t-1}}\right)\\
 & =\log\left(\prod_{t=1}^{T}\frac{X_{t}}{X_{t-1}}\right)\\
 & =\log\left(\frac{X_{T}}{X_{0}}\right)=\log\left(1+\sum_{t=1}^{T}\norm{\xi_{s,\Psi_{i}}}^{2}\right).
\end{align*}
Hence, with probability at least $1-\delta$: 
\begin{equation}
\sum_{t=1}^{T}\frac{\norm{\xi_{t,\Psi_{i}}}^{2}}{b_{t,i}^{2}}\le\frac{8\norm{\sigma_{\Psi_{i}}}^{2}\log\frac{1}{\delta}}{b_{0,i}^{2}}+2\log\left(1+\sum_{t=1}^{T}\norm{\xi_{s,\Psi_{i}}}^{2}\right).\label{eq:bound-xit-btsq-before-finish}
\end{equation}
It remains to bound $\sum_{t=1}^{T}\norm{\xi_{s,\Psi_{i}}}^{2}$.
Note that 
\begin{align*}
\Pr\left[\sum_{t=1}^{T}\norm{\xi_{s,\Psi_{i}}}^{2}\geq u\right] & =\Pr\left[\exp\left(\sum_{t=1}^{T}\frac{\|\xi_{s,\Psi_{i}}\|^{2}}{\norm{\sigma_{\Psi_{i}}}^{2}}\right)\geq\exp\left(\frac{u}{\norm{\sigma_{\Psi_{i}}}^{2}}\right)\right]\\
 & \leq\frac{\E\left[\exp\left(\sum_{t=1}^{T}\frac{\|\xi_{s,\Psi_{i}}\|^{2}}{\norm{\sigma_{\Psi_{i}}}^{2}}\right)\right]}{\exp\left(\frac{u}{\norm{\sigma_{\Psi_{i}}}^{2}}\right)}\\
 & \leq\frac{\exp(T)}{\exp\left(\frac{u}{\norm{\sigma_{\Psi_{i}}}^{2}}\right)}\tag{\text{\ensuremath{\xi_{s,\Psi_{i}}}}is \ensuremath{\norm{\sigma_{\Psi_{i}}}^{2}}-subgaussian}
\end{align*}
Choosing $u=\norm{\sigma_{\Psi_{i}}}^{2}T+\norm{\sigma_{\Psi_{i}}}^{2}\log\frac{1}{\delta}$
gives that with probability at least $1-\delta$, we have 
\begin{equation}
\sum_{t=1}^{T}\norm{\xi_{s,\Psi_{i}}}^{2}\le\norm{\sigma_{\Psi_{i}}}^{2}T+\norm{\sigma_{\Psi_{i}}}^{2}\log\frac{1}{\delta}.\label{eq:bound-sum-error-naive}
\end{equation}
Having a high probability bound on the sum of the stochastic error
of the subset-norm, we can combine both events from (\ref{eq:bound-xit-btsq-before-finish})
and (\ref{eq:bound-sum-error-naive}) to get that with probability
at least $1-2\delta$: 
\begin{equation}
\sum_{t=1}^{T}\frac{\norm{\xi_{t,\Psi_{i}}}^{2}}{b_{t,i}^{2}}\le\frac{8\norm{\sigma_{\Psi_{i}}}^{2}\log\frac{1}{\delta}}{b_{0,i}^{2}}+2\log\left(1+\norm{\sigma_{\Psi_{i}}}^{2}T+\norm{\sigma_{\Psi_{i}}}^{2}\log\frac{1}{\delta}\right).\label{eq:final-bound-xit-btsq}
\end{equation}
Then we can also condition on the event that (\ref{eq:final-bound-xit-btsq})
happens and combine it with the event in (\ref{eq:last-pt-before-bounding-xit-btsq})
to get that with probability at least $1-2c\delta$ (assuming $c\ge2$),
we have 
\begin{align}
\sum_{t=1}^{T}\sum_{i=0}^{c-1}\frac{\norm{\nabla_{t,\Psi_{i}}}_{2}^{2}}{b_{t,i}} & \le\frac{\Delta_{1}}{\eta}+\sum_{i=0}^{c-1}\left(\ln T/\delta\norm{\sigma_{\Psi_{i}}}^{2}+2\alpha\right)\sum_{t=1}^{T}\frac{\norm{\xi_{t,\Psi_{i}}}^{2}}{b_{t,i}^{2}}\\
 & \quad\quad+\alpha+\sum_{i=0}^{c-1}\left(\ln T/\delta\norm{\sigma_{\Psi_{i}}}^{2}+\eta L+4\alpha\right)\log\frac{b_{T,i}}{b_{0,i}}\label{eq:prepare-for-final}\\
 & \le\frac{\Delta_{1}}{\eta}+\underbrace{\sum_{i=0}^{c-1}\left(\ln T/\delta\norm{\sigma_{\Psi_{i}}}^{2}+2\alpha\right)\left(\frac{8\norm{\sigma_{\Psi_{i}}}^{2}\log\frac{1}{\delta}}{b_{0,i}^{2}}+2\log\left(1+\norm{\sigma_{\Psi_{i}}}^{2}T+\norm{\sigma_{\Psi_{i}}}^{2}\log\frac{1}{\delta}\right)\right)}_{=:H(\delta)}\label{eq:def-H-delta}\\
 & \quad+\alpha+\sum_{i=0}^{c-1}\left(\ln T/\delta\norm{\sigma_{\Psi_{i}}}^{2}+\eta L+4\alpha\right)\log\frac{b_{T,i}}{b_{0,i}}\nonumber \\
 & =\frac{\Delta_{1}}{\eta}+H(\delta)+\alpha+\sum_{i=0}^{c-1}\left(\ln T/\delta\norm{\sigma_{\Psi_{i}}}^{2}+\eta L+4\alpha\right)\log\frac{b_{T,i}}{b_{0,i}}.
\end{align}
First, note that $b_{T,i}\le\norm{b_{T}}_{1}=\sum_{i=0}^{c-1}b_{T,i}$.
Letting $b_{0,\min}:=\min_{i}b_{0,i}$, we then have 
\begin{align*}
\sum_{i=0}^{c-1}\left(\ln T/\delta\norm{\sigma_{\Psi_{i}}}^{2}+\eta L+4\alpha\right)\log\frac{b_{T,i}}{b_{0,i}} & \le\log\frac{\norm{b_{T}}_{1}}{b_{0,\min}}\sum_{i=0}^{c-1}\left(\ln T/\delta\norm{\sigma_{\Psi_{i}}}^{2}+\eta L+4\alpha\right)\\
 & =\log\frac{\norm{b_{T}}_{1}}{b_{0,\min}}\left(\ln T/\delta\norm{\sigma}_{2}^{2}+c\eta L+4c\alpha\right).
\end{align*}
Now, note the LHS term $\sum_{t=1}^{T}\sum_{i=0}^{c-1}\frac{\norm{\nabla_{t,\Psi_{i}}}_{2}^{2}}{b_{t,i}}$
of (\ref{eq:prepare-for-final}): 
\begin{align*}
\left(\sum_{i=0}^{c-1}\frac{\norm{\nabla_{t,\Psi_{i}}}_{2}^{2}}{b_{t,i}}\right)\left(\sum_{i=0}^{c-1}b_{t,i}\right) & \geq\left(\sum_{i=0}^{c-1}\left\Vert \nabla_{t,\Psi_{i}}\right\Vert _{2}\right)^{2}\ge\sum_{i=0}^{c-1}\left\Vert \nabla_{t,\Psi_{i}}\right\Vert _{2}^{2}=\norm{\nabla_{t}}_{2}^{2}\\
\implies\frac{\norm{\nabla_{t}}_{2}^{2}}{\left(\sum_{i=0}^{c-1}b_{t,i}\right)} & \le\sum_{i=0}^{c-1}\frac{\norm{\nabla_{t,\Psi_{i}}}_{2}^{2}}{b_{t,i}}.
\end{align*}
Now, $\sum_{i=0}^{c-1}b_{t,i}=\sum_{i=0}^{c-1}\left|b_{t,i}\right|=\norm{b_{t}}_{1}$,
so with probability $1-2c\delta$: 
\begin{align}
\sum_{t=1}^{T}\frac{\norm{\nabla_{t}}_{2}^{2}}{\norm{b_{T}}_{1}} & \le\sum_{t=1}^{T}\frac{\norm{\nabla_{t}}_{2}^{2}}{\norm{b_{t}}_{1}}\le\sum_{t=1}^{T}\sum_{i=0}^{c-1}\frac{\norm{\nabla_{t,\Psi_{i}}}_{2}^{2}}{b_{t,i}}\nonumber \\
\implies\sum_{t=1}^{T}\norm{\nabla_{t}}_{2}^{2} & \le\norm{b_{T}}_{1}\sum_{t=1}^{T}\sum_{i=0}^{c-1}\frac{\norm{\nabla_{t,\Psi_{i}}}_{2}^{2}}{b_{t,i}}\nonumber \\
 & \le\norm{b_{T}}_{1}\left(\frac{\Delta_{1}}{\eta}+cH(\delta)+\left(\ln T/\delta\norm{\sigma}_{2}^{2}+c\eta L+4c\alpha\right)\log\frac{\norm{b_{T}}_{1}}{b_{0,\min}}\right)\\
 & \le\norm{b_{T}}_{1}\left(\frac{\Delta_{1}}{\eta}+cH(\delta)+\left(\ln T/\delta\norm{\sigma}_{2}^{2}+c\eta L+4c\alpha\right)\log\frac{\norm{b_{T}}_{1}}{b_{0,\min}}\right).\label{eq:just-before-bound-bT}
\end{align}
It remains to bound $\norm{b_{T}}_{1}$. We start again from smoothness
of $f$:

\begin{align}
\Delta_{t+1}-\Delta_{t} & \leq\left\langle \nabla_{t},x_{t+1}-x_{t}\right\rangle +\frac{L}{2}\norm{x_{t+1}-x_{t}}^{2}\nonumber \\
 & =-\eta\left\langle \nabla_{t},\frac{\hn_{t}}{b_{t}}\right\rangle +\frac{\eta^{2}L}{2}\norm{\frac{\hn_{t}}{b_{t}}}^{2}\nonumber \\
 & =-\eta\left\langle \hn_{t}-\xi_{t},\frac{\hn_{t}}{b_{t}}\right\rangle +\frac{\eta^{2}L}{2}\sum_{i=0}^{c-1}\sum_{j\in\Psi_{i}}\frac{\hn_{t,\Psi_{j}}^{2}}{b_{t,i}^{2}}\nonumber \\
 & =-\eta\left\langle \hn_{t},\frac{\hn_{t}}{b_{t}}\right\rangle +\eta\left\langle \xi_{t},\frac{\hn_{t}}{b_{t}}\right\rangle +\frac{\eta^{2}L}{2}\sum_{i=0}^{c-1}\frac{\norm{\hn_{t,\Psi_{i}}}^{2}}{b_{t,i}^{2}}\nonumber \\
 & =-\eta\sum_{i=0}^{c-1}\sum_{j\in\Psi_{i}}\frac{\hn_{t,j}^{2}}{b_{t,i}}+\eta\sum_{i=0}^{c-1}\sum_{j\in\Psi_{i}}\frac{\xi_{t,j}\hn_{t,j}}{b_{t,i}}+\frac{\eta^{2}L}{2}\sum_{i=0}^{c-1}\frac{\norm{\hn_{t,\Psi_{i}}}^{2}}{b_{t,i}^{2}}\nonumber \\
 & =-\eta\sum_{i=0}^{c-1}\frac{\norm{\hn_{t,\Psi_{i}}}^{2}}{b_{t,i}}+\frac{\eta^{2}L}{2}\sum_{i=0}^{c-1}\frac{\norm{\hn_{t,\Psi_{i}}}^{2}}{b_{t,i}^{2}}+\eta\sum_{i=0}^{c-1}\sum_{j\in\Psi_{i}}\frac{\xi_{t,j}\hn_{t,j}}{b_{t,i}}.
\end{align}
Note that 
\begin{align*}
\sum_{i=0}^{c-1}\sum_{j\in\Psi_{i}}\frac{\xi_{t,j}\hn_{t,j}}{b_{t,i}} & \le\frac{1}{2}\sum_{i=0}^{c-1}\sum_{j\in\Psi_{i}}\frac{\xi_{t,j}^{2}}{b_{t,i}}+\frac{1}{2}\sum_{i=0}^{c-1}\sum_{j\in\Psi_{i}}\frac{\hn_{t,j}^{2}}{b_{t,i}}\\
 & =\frac{1}{2}\sum_{i=0}^{c-1}\sum_{j\in\Psi_{i}}\frac{\xi_{t,j}^{2}}{b_{t,i}}+\frac{1}{2}\sum_{i=0}^{c-1}\frac{\norm{\hn_{t,\Psi_{i}}}^{2}}{b_{t,i}}.
\end{align*}
Plugging back in, we have

\begin{align*}
\Delta_{t+1}-\Delta_{t} & \le-\frac{\eta}{2}\sum_{i=0}^{c-1}\frac{\norm{\hn_{t,\Psi_{i}}}^{2}}{b_{t,i}}+\eta^{2}L\sum_{i=0}^{c-1}\frac{\norm{\hn_{t,\Psi_{i}}}^{2}}{b_{t,i}^{2}}+\frac{\eta}{2}\sum_{i=0}^{c-1}\frac{\norm{\xi_{t,\Psi_{i}}}^{2}}{b_{t,i}}.
\end{align*}
Summing over $T$ and rearranging, we get 
\begin{align*}
\sum_{t=1}^{T}\sum_{i=0}^{c-1}\frac{\norm{\hn_{t,\Psi_{i}}}^{2}}{b_{t,i}} & \leq\frac{2\Delta_{1}}{\eta}+\sum_{t=1}^{T}\sum_{i=0}^{c-1}\frac{\norm{\xi_{t,\Psi_{i}}}^{2}}{b_{t,i}}+2\eta L\sum_{t=1}^{T}\sum_{i=0}^{c-1}\frac{\norm{\hn_{t,\Psi_{i}}}^{2}}{b_{t,i}^{2}}\\
\implies\sum_{t=1}^{T}\sum_{i=0}^{c-1}\frac{\norm{\hn_{t,\Psi_{i}}}^{2}}{b_{t,i}} & \le\frac{4\Delta_{1}}{\eta}+2\sum_{t=1}^{T}\sum_{i=0}^{c-1}\frac{\norm{\xi_{t,\Psi_{i}}}^{2}}{b_{t,i}}+\sum_{t=1}^{T}\sum_{i=0}^{c-1}\left(\frac{4\eta L}{b_{t,i}^{2}}-\frac{1}{b_{t,i}}\right)\norm{\hn_{t,\Psi_{i}}}^{2}.
\end{align*}
We can bound $\sum_{t=1}^{T}\sum_{i=0}^{c-1}\left(\frac{4\eta L}{b_{t,i}^{2}}-\frac{1}{b_{t,i}}\right)\norm{\hn_{t,\Psi_{i}}}^{2}$
as follows. Consider $i\in[c]$. Let $\tau_{i}=\max\left\{ t\leq T\mid b_{t,i}\leq4\eta L\right\} $
so that $t\geq\tau_{i}$ implies $b_{t,i}>4\eta L\iff\frac{4\eta L}{b_{t,i}^{2}}<\frac{1}{b_{t,i}}$:
\begin{align*}
\sum_{t=1}^{T}\left(\frac{4\eta L}{b_{t,i}^{2}}-\frac{1}{b_{t,i}}\right)\norm{\hn_{t,\Psi_{i}}}^{2} & =\sum_{t=1}^{\tau_{i}}\left(\frac{4\eta L}{b_{t,i}^{2}}-\frac{1}{b_{t,i}}\right)\norm{\hn_{t,\Psi_{i}}}^{2}+\sum_{t=\tau_{i}+1}^{T}\left(\underbrace{\frac{4\eta L}{b_{t,i}^{2}}-\frac{1}{b_{t,i}}}_{<0}\right)\norm{\hn_{t,\Psi_{i}}}^{2}\\
 & \le\sum_{t=1}^{\tau_{i}}\left(\frac{4\eta L}{b_{t,i}^{2}}-\frac{1}{b_{t,i}}\right)\norm{\hn_{t,\Psi_{i}}}^{2}\\
 & \le4\eta L\sum_{t=1}^{\tau_{i}}\frac{\norm{\hn_{t,\Psi_{i}}}^{2}}{b_{t,i}^{2}}\\
 & \le8\eta L\log\frac{b_{\tau_{i},i}}{b_{0,i}}\le8\eta L\log\frac{4\eta L}{b_{0,i}}.
\end{align*}
Hence, we have 
\[
\sum_{t=1}^{T}\sum_{i=0}^{c-1}\frac{\norm{\hn_{t,\Psi_{i}}}^{2}}{b_{t,i}}\le\frac{4\Delta_{1}}{\eta}+2\sum_{t=1}^{T}\sum_{i=0}^{c-1}\frac{\norm{\xi_{t,\Psi_{i}}}^{2}}{b_{t,i}}+8\eta L\sum_{i=0}^{c-1}\log\frac{4\eta L}{b_{0,i}}.
\]
Consider the LHS 
\begin{align*}
\sum_{t=1}^{T}\sum_{i=0}^{c-1}\frac{\norm{\hn_{t,\Psi_{i}}}^{2}}{b_{t,i}} & =\sum_{t=1}^{T}\sum_{i=0}^{c-1}\frac{b_{t,i}^{2}-b_{t-1,i}^{2}}{b_{t,i}}=\sum_{t=1}^{T}\sum_{i=0}^{c-1}b_{t,i}-\frac{b_{t-1,i}^{2}}{b_{t,i}}\\
 & \ge\sum_{t=1}^{T}\sum_{i=0}^{c-1}b_{t,i}-\frac{b_{t-1,i}^{2}}{b_{t-1,i}}=\sum_{t=1}^{T}\sum_{i=0}^{c-1}b_{t,i}-b_{t-1,i}\\
 & =\sum_{i=0}^{c-1}\sum_{t=1}^{T}b_{t,i}-b_{t-1,i}=\sum_{i=0}^{c-1}b_{T,i}-b_{0,i}\\
 & =\norm{b_{T}}_{1}-\norm{b_{0}}_{1}.
\end{align*}
Hence, we have 
\[
\norm{b_{T}}_{1}\le\norm{b_{0}}_{1}+\frac{2\Delta_{1}}{\eta}+\sum_{i=0}^{c-1}\sum_{t=1}^{T}\frac{\norm{\xi_{t,\Psi_{i}}}^{2}}{b_{t,i}}+8\eta Lc\log\frac{4\eta L}{b_{0,\min}}.
\]
It remains to bound $\sum_{t=1}^{T}\frac{\norm{\xi_{t,\Psi_{i}}}^{2}}{b_{t,i}}$
for each $i\in[c]$. Recall from (\ref{eq:final-bound-xit-btsq}),
with probability at least $1-\delta$

\begin{align*}
\sum_{s=1}^{t}\|\xi_{t,\Psi_{i}}\|^{2} & \leq\sum_{s=1}^{t}\|\hn_{t,\Psi_{i}}\|^{2}+4\norm{\sigma_{\Psi_{i}}}^{2}\log\frac{1}{\delta},\ \forall t\le T.
\end{align*}
We have with probability at least $1-2c\delta$, 
\begin{align*}
\sum_{t=1}^{T}\frac{\norm{\xi_{t,\Psi_{i}}}^{2}}{b_{t,i}} & =\sum_{t=1}^{T}\frac{\norm{\xi_{t,\Psi_{i}}}^{2}}{\sqrt{b_{0,i}^{2}+\sum_{s=1}^{t}\|\hn_{s,\Psi_{i}}\|^{2}}}\\
 & \overset{(1)}{\leq}\sum_{t=1}^{T}\frac{\xi_{t,i}^{2}}{\sqrt{b_{0,i}^{2}+\left(\sum_{s=1}^{t}\|\xi_{s,\Psi_{i}}\|^{2}-4\norm{\sigma_{\Psi_{i}}}^{2}\log\frac{1}{\delta}\right)^{+}}}\\
 & \leq\frac{8\norm{\sigma_{\Psi_{i}}}^{2}\log\frac{1}{\delta}}{b_{0,i}}+2\sqrt{2}\sqrt{\sum_{s=1}^{T}\|\xi_{s,\Psi_{i}}\|^{2}}\\
 & \overset{(2)}{\leq}\frac{8\norm{\sigma_{\Psi_{i}}}^{2}\log\frac{1}{\delta}}{b_{0,i}}+4\sqrt{\norm{\sigma_{\Psi_{i}}}^{2}T+\norm{\sigma_{\Psi_{i}}}^{2}\log\frac{1}{\delta}},
\end{align*}
where (1) is due to (\ref{eq:bound-on-sum-xi_t}) and (2) is due to
Lemma (\ref{eq:bound-sum-error-naive}). Hence, we have that with
probability at least $1-2c\delta$, 
\begin{align*}
\norm{b_{T}}_{1} & \le\norm{b_{0}}_{1}+\frac{2\Delta_{1}}{\eta}+\sum_{i=0}^{c-1}\frac{8\norm{\sigma_{\Psi_{i}}}^{2}\log\frac{1}{\delta}}{b_{0,i}}+\sum_{i=0}^{c-1}4\sqrt{\norm{\sigma_{\Psi_{i}}}^{2}T+\norm{\sigma_{\Psi_{i}}}^{2}\log\frac{1}{\delta}}+8\eta Lc\log\frac{4\eta L}{b_{0,\min}}\\
 & \le\norm{b_{0}}_{1}+\frac{2\Delta_{1}}{\eta}+\frac{8\log\frac{1}{\delta}}{b_{0,\min}}\sum_{i=0}^{c-1}\norm{\sigma_{\Psi_{i}}}^{2}+4\sqrt{T}\sum_{i=0}^{c-1}\norm{\sigma_{\Psi_{i}}}+\sqrt{\log\frac{1}{\delta}}\sum_{i=0}^{c-1}\norm{\sigma_{\Psi_{i}}}+8\eta Lc\log\frac{4\eta L}{b_{0,\min}}\\
 & =4\sqrt{T}\sum_{i=0}^{c-1}\norm{\sigma_{\Psi_{i}}}+\underbrace{\norm{b_{0}}_{1}+\frac{2\Delta_{1}}{\eta}+\frac{8\log\frac{1}{\delta}}{b_{0,\min}}\norm{\sigma}_{2}^{2}+\sqrt{\log\frac{1}{\delta}}\sum_{i=0}^{c-1}\norm{\sigma_{\Psi_{i}}}+8\eta Lc\log\frac{4\eta L}{b_{0,\min}}}_{=:I(\delta)}.
\end{align*}
Hence, we can combine (\ref{eq:just-before-bound-bT}) with the bound
for $\norm{b_{T}}_{1}$ to get that with probability $1-6c\delta$:
\begin{align*}
\sum_{t=1}^{T}\norm{\nabla_{t}}_{2}^{2} & \le\norm{b_{T}}_{1}\left(\frac{\Delta_{1}}{\eta}+H(\delta)+\left(\ln T/\delta\norm{\sigma}_{2}^{2}+c\eta L+4c\sigma_{\max}\sqrt{c\log\frac{1}{\delta}}\right)\log\frac{\norm{b_{T}}_{1}}{b_{0,\min}}\right)\\
 & \le\left(4\sqrt{T}\sum_{i=0}^{c-1}\norm{\sigma_{\Psi_{i}}}+I(\delta)\right)\cdot\\
 & \quad\left(\frac{\Delta_{1}}{\eta}+H(\delta)+\left(\ln T/\delta\norm{\sigma}_{2}^{2}+c\eta L+4c^{3/2}\sigma_{\max}\sqrt{\log\frac{1}{\delta}}\right)\log\left(\frac{4\sqrt{T}\sum_{i=0}^{c-1}\norm{\sigma_{\Psi_{i}}}+I(\delta)}{b_{0,\min}}\right)\right).
\end{align*}
Dividing both sides by $T$, we get the theorem that with probability
$1-6c\delta$: 
\begin{align*}
\frac{1}{T}\sum_{t=1}^{T}\norm{\nabla_{t}}_{2}^{2} & \le G(\delta)\cdot\left(\frac{4\sum_{i=0}^{c-1}\norm{\sigma_{\Psi_{i}}}}{\sqrt{T}}+\frac{I(\delta)}{T}\right),\ \text{where \ensuremath{G(\delta)} and \ensuremath{I(\delta)} are polylog terms:}\\
G(\delta):= & \frac{\Delta_{1}}{\eta}+H(\delta)+\left(\ln T/\delta\norm{\sigma}_{2}^{2}+c\eta L+4c^{3/2}\sigma_{\max}\sqrt{\log\frac{1}{\delta}}\right)\log\left(\frac{4\sqrt{T}\sum_{i=0}^{c-1}\norm{\sigma_{\Psi_{i}}}+I(\delta)}{b_{0,\min}}\right)\\
I(\delta):= & \norm{b_{0}}_{1}+\frac{2\Delta_{1}}{\eta}+\frac{8\log\frac{1}{\delta}}{b_{0,\min}}\norm{\sigma}_{2}^{2}+\sqrt{\log\frac{1}{\delta}}\sum_{i=0}^{c-1}\norm{\sigma_{\Psi_{i}}}+8\eta Lc\log\frac{4\eta L}{b_{0,\min}}\\
H(\delta):= & \sum_{i=0}^{c-1}\left(\ln\left(T/\delta\right)\norm{\sigma_{\Psi_{i}}}^{2}+2\alpha\right)\left(\frac{8\norm{\sigma_{\Psi_{i}}}^{2}\log\frac{1}{\delta}}{b_{0,i}^{2}}+2\log\left(1+\norm{\sigma_{\Psi_{i}}}^{2}T+\norm{\sigma_{\Psi_{i}}}^{2}\log\frac{1}{\delta}\right)\right).\qedhere
\end{align*}\end{proof}

\end{document}